\pgfplotsset{compat = newest}
\tikzstyle{nn_node} = [circle, 
\tikzstyle{arrow} = [thick,->,>=stealth, draw=gray]
\tikzstyle{annotation} = [rectangle, rounded corners, 
\newcommand{\tablecentered}[1]{\begin{tabular}{l} #1 \end{tabular}}
\newsavebox{\Citname}
\newtheorem{definition}{Definition}[section]
\newtheorem{theorem}[definition]{Theorem}
\DeclareMathOperator*{\argmax}{arg\,max}
\DeclareMathOperator*{\argmin}{arg\,min}
\title{The Definitive Guide to Policy Gradients in Deep Reinforcement Learning: \\ Theory, Algorithms and Implementations}
\date{} 					% Or removing it
\author{ {Matthias Lehmann}%\thanks{Equal contribution.} 
		\\
	% Department of Computer Science\\
	University of Cologne\\
}
\begin{document}
\maketitle

\begin{abstract}
	In recent years, various powerful policy gradient algorithms have been proposed in deep reinforcement learning. While all these algorithms build on the Policy Gradient Theorem, the specific design choices differ significantly across algorithms. We provide a holistic overview of on-policy policy gradient algorithms to facilitate the understanding of both their theoretical foundations and their practical implementations. In this overview, we include a detailed proof of the continuous version of the Policy Gradient Theorem, convergence results and a comprehensive discussion of practical algorithms. We compare the most prominent algorithms on continuous control environments and provide insights on the benefits of regularization. All code is available at \url{https://github.com/Matt00n/PolicyGradientsJax}.
\end{abstract}

% keywords can be removed
% \keywords{Reinforcement Learning \and Q-Learning \and Stochastic Approximation}

%\pagenumbering{arabic}

%---------------------------------------------------------------------

\tableofcontents

% \newpage

\section{Introduction}\label{sec:Introduction}
\pagenumbering{arabic}

%\todo{\emph{[Include short related works section? (other overviews, other proofs of policy gradients) ???]}}

%%%%%%%%%%%%%%%%%%%%%%%%%%%%%%%%%%%%%%%%%%%%%%%%%%%%%%%%%%%%%%%%%%%%

% 1) X (+define X if not obvious) is an important problem. The core challenges are this and that. 

% 2) Previous work on X has addressed these with Y, but the problems with this are Z.

% Many problems in the real world can be formulated as Markov decision processes (MDPs), in which an agent interacts with an environment, e.g., a person driving a car to reach a destination, a mathematician proofing a theorem or an investor optimizing his portfolio. 

Reinforcement Learning (RL) is a powerful set of methods for an agent to learn how to act optimally in a given environment to maximize some reward signal. In contrast to other methods such as dynamic programming, RL achieves this task of learning an optimal policy, which dictates the optimal behavior, via a trial-and-error process of interacting with the environment \cite{Sutton1998}. 
% We formalize this interaction loop as a Markov decision process (MDP). 
Most early successful applications of RL use value-based methods (e.g., \cite{watkins1989learning, tesauro1995temporal, Mnih2015}), which estimate the expected future rewards to inform the agent's decisions. However, these methods only indirectly optimize the true objective of learning an optimal policy \cite{Hasselt2021RLlecture} and are non-trivial to apply in settings with continuous action spaces \cite{Sutton1998}.

% 3) In this work we do W (?). 
% 4) This has the following appealing properties and our experiments show this and that. 

In this work, we discuss policy gradient algorithms \cite{Sutton1998} as an alternative approach, which aims to directly learn an optimal policy. Policy gradient algorithms are by no means new \cite{barto1983neuronlike, sutton1984temporal, williams1987reinforcement, williams1992simple}, but this subfield only gained traction in recent years following the emergence of deep RL \cite{Mnih2015} with the development of various powerful algorithms (e.g., \cite{mnih2016asynchronous, schulman2017proximal, song2019v}). Deep RL is a subfield of RL, which uses neural networks and other deep learning methods. The increased interest in policy gradient algorithms is due to several appealing properties of this class of algorithms. They can be used natively in continuous action spaces without compromising the applicability to discrete spaces \cite{sutton2000comparing}. In contrast to value-based methods, policy gradient algorithms inherently learn stochastic policies, which results in smoother search spaces and partly remedies the exploration problem of having to acquire knowledge about the environment in order to optimize the policy \cite{Sutton1998, sutton2000comparing}. In some settings, the optimal policy may also be stochastic itself \cite{Sutton1998}. Lastly, policy gradient methods enable smoother changes in the policy during the learning process, which may result in better convergence properties \cite{sutton2000comparing}.

% 5) contributions

Our goal is to present a holistic overview of policy gradient algorithms. In doing so, we limit the scope to on-policy algorithms, which we will define in Section \ref{sec:Preliminaries}. Thus, we exclude some popular algorithms including DDPG \cite{lillicrap2015continuous}, TD3 \cite{fujimoto2018addressing} and SAC \cite{haarnoja2018soft}. See \autoref{fig:overview_rl_algos} for an overview of RL and the subfields we cover. Our contributions are as follows:
\begin{itemize}
	\item We give a comprehensive introduction to the theoretical foundations of policy gradient algorithms including a detailed proof of the continuous version of the Policy Gradient Theorem.
	\item We derive and compare the most prominent policy gradient algorithms and provide high quality pseudocode to facilitate understanding.
	\item We release competitive implementations of these algorithms, including the, to the best of our knowledge, first publicly available V-MPO implementation  displaying performance on par with the results in the original paper.
\end{itemize}

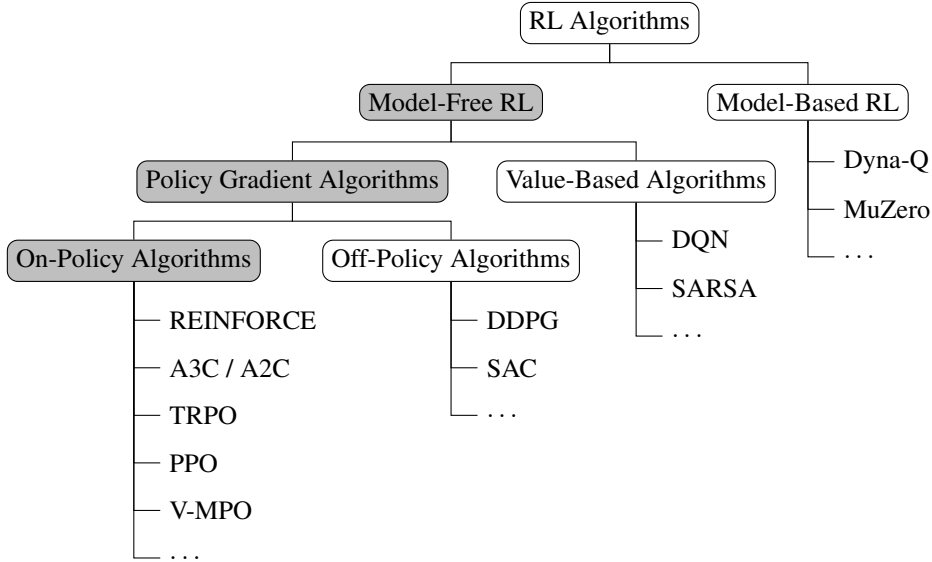
\begin{figure}
\centering
\begin{tikzpicture}[
  sibling distance=10em,
  level distance=30pt,
  % every node/.style={edge from parent/.style={solid,black,thin,draw}, draw},
  root/.style = {shape=rectangle, rounded corners,
    draw, align=center,
    fill=white},
  main/.style = {shape=rectangle, rounded corners,
    draw, align=center,
    fill=lightgray,},
  % highlighted/.style={edge from parent/.style={solid,draw}},
  %norm/.style={edge from parent/.style={solid,black,thin,draw}},
  grandchild/.style={grow=down,xshift=1em,anchor=west,
	edge from parent path={(\tikzparentnode.south) |- (\tikzchildnode.west)}},
  first/.style={level distance=5ex},
  second/.style={level distance=9ex},
  third/.style={level distance=13ex},
  fourth/.style={level distance=17ex},
  fifth/.style={level distance=21ex},
  sixth/.style={level distance=25ex},]]
  \node [root, align=center] {RL Algorithms}
  	[edge from parent fork down]
    child { node [main, xshift=-1em] {Model-Free RL}
      child { node [main, xshift=-1em] {Policy Gradient  Algorithms}
        child { node [main, xshift=-1em] {On-Policy Algorithms} 
        	child[grandchild,first] { node {REINFORCE} }
        	child[grandchild,second] { node {A3C / A2C} }
        	child[grandchild,third] { node {TRPO} }
        	child[grandchild,fourth] { node {PPO} }
        	child[grandchild,fifth] { node {V-MPO} }
        	child[grandchild,sixth] { node {\(\hdots\)} } }
        child { node [root, xshift=1em] {Off-Policy Algorithms} 
        	child[grandchild,first] { node {DDPG} }
        	child[grandchild,second] { node {SAC} }
        	child[grandchild,third] { node {\(\hdots\)} } } }
       child { node [root, xshift=2em] {Value-Based Algorithms} 
       		child[grandchild,first] { node {DQN} }
        	child[grandchild,second] { node {SARSA} }
        	child[grandchild,third] { node {\(\hdots\)} } } }
    child { node [root, xshift=2.5em] {Model-Based RL} 
    	child[grandchild,first] { node {Dyna-Q} }
        child[grandchild,second] { node {MuZero} }
        child[grandchild,third] { node {\(\hdots\)} }};
\end{tikzpicture}
\caption{Simplified taxonomy of RL algorithms. Subfields of RL we focus on are highlighted in gray.}
\label{fig:overview_rl_algos}
\end{figure}

% 6) Outline for remaining paper
The remainder of this paper is organized as follows. Section \ref{sec:Preliminaries} introduces fundamental definitions in RL as well as an overview of deep learning. Section \ref{sec:pg_theory} derives the theoretical foundations of policy gradient algorithms with a special focus on proving the Policy Gradient Theorem, based on which we will construct several existing practical algorithms in Section \ref{sec:algorithms}. In Section \ref{sec:convergence}, we discuss convergence results from literature. Section \ref{sec:experiments}, presents the results of our numerical experiments comparing the discussed algorithms. Section \ref{sec:conclusion} concludes.

\section{Preliminaries}\label{sec:Preliminaries}

In this section, we present prerequisites for subsequent chapters. Specifically, we introduce our notation in Section \ref{sec:notation} and present overviews of RL in Section \ref{sec:rl_intro} and of deep learning in Section \ref{sec:deep_learning}. Furthermore, we list several well-known definitions and results from probability theory, measure theory and analysis, which we use in our paper, in Appendix \autoref{sec:aux}. 

\subsection{Notation}\label{sec:notation}

We denote the set of natural numbers by \(\mathbb{N}\), natural numbers including zero by \(\mathbb{N}_0\), real numbers by \(\mathbb{R}\) and positive real numbers by \(\mathbb{R}_+\). We denote \(d\)-dimensional real-numbered vector spaces as \(\mathbb{R}^d\). By \(\mathcal{P}(\mathcal{A})\), we denote the power set of a set \(\mathcal{A}\). Where possible, we denote random variables with capital letters and their realizations with the corresponding lower case letters. For any probability measure \(\mathbb{P}\), we denote the probability of an event \(X=x\) as \(\mathbb{P}(X=x)\). Similarly, we write \(\mathbb{P}(X=x \mid Y=y)\) for conditional probabilities. When it is clear, which random variable is referred to, we regularly omit it to shorten notation, i.e. \(\mathbb{P}(X=x) = \mathbb{P}(x)\). We identify measurable spaces \((\mathcal{A}, \Sigma)\) just by the set \(\mathcal{A}\) as we always use the respective power set \(\mathcal{P}(\mathcal{A})\) for discrete sets and the Borel algebra for intervals in \(\mathbb{R}^d\) as the respective \(\sigma\)-algebra \(\Sigma\). We express most Lebesgue integrals w.r.t. the Lebesgue measure \(\lambda\) using Theorem \ref{th:measure_change}. To simplify notation, we write integrals for measurable functions \(f\) on \(\mathcal{A}\) as \(\int_{a \in \mathcal{A}} f(a) \: da \coloneqq \int_{a \in \mathcal{A}} f(a) \: d\lambda(a)\). We denote that a random variable \(X\) follows a probability distribution \(p\) by \(X \sim p\). For any random variable \(X \sim p\), we denote by \(\mathbb{E}_{X \sim p}[X]\) and \(\mathrm{Var}_{X \sim p}[X]\) its expectation and variance. We denote the set of probability distributions over some measurable space \(\mathcal{A}\) as  \(\Delta(\mathcal{A})\). We write \(\lvert \mathcal{A} \rvert\) for the cardinality of a finite set \(\mathcal{A}\) or area of a region \(\int_{a \in \mathcal{A}} da\). For any variable or function \(x\), we commonly denote approximations to it by \(\hat{x}\).

\subsection{Reinforcement Learning}\label{sec:rl_intro}

In the following, we formally describe the general problem setting encountered in RL, define fundamental functions and introduce the subfields of RL our work is further concerned with. Sections \ref{sec:rl_problem} and \ref{sec:rl_value} are based on \cite{Sutton1998}, Chapter 3. 

\subsubsection{Problem Setting}\label{sec:rl_problem}

Each problem instance in RL consists of an agent and an environment with which he interacts to achieve some specific goal. The environment comprises everything external to the agent and can be formalized as a Markov Decision Process (MDP). Let an action space \(\mathcal{A}\) be the set of all actions the agent can take and let a state space \(\mathcal{S}\) be the set of all possible states, i.e. snapshots of the environment at any given point in time. State and action spaces can be discrete or continuous\footnote{Here, we call a state/action space continuous if it is an interval in \(\mathbb{R}^d\) for \(d \in \mathbb{N}\).} and we assume both to be compact and measurable. We write an MDP as a tuple \(\mathcal{M} = (\mathcal{S}, \mathcal{A}, P, \gamma, p_0)\), where \(P \colon \mathcal{S} \times \mathcal{A} \rightarrow \Delta(\mathcal{S} \times \mathbb{R})\) is the environment's transition function, which defines the probability\footnote{Technically, this is the value of the probability density function for continuous distributions. However, we unify terminology by referring to the values of probability density functions as probabilities here and in the following.} \(P(s^\prime,r \mid s,a)\) of transitioning to a new environment state \(s^\prime\) and receiving reward \(r \in \mathbb{R}\) when the agent uses action \(a\) in state \(s\), \(\gamma \in [0, 1]\) is a discount rate and \(p_0 \in \Delta(\mathcal{S})\) is a probability distribution over potential starting states. We assume rewards \(r\) to be bounded. In the following, our notation assumes state and action spaces to be continuous.

We call sequences of states, actions and rewards \((s_t, a_t, r_{t+1}, s_{t+1}, a_{t+1}, r_{t+2}, \ldots, \\ s_{t+k-1}, a_{t+k-1}, r_{t+k}, s_{t+k})\) trajectories. A one-step trajectory, i.e. a tuple \((s_t, a_t, r_{t+1}, s_{t+1})\) is called a transition. In this work, we limit ourselves to episodic settings, where the agent only interacts with the environment for a finite number of at most \(T\) steps after which the environment is reset to a starting state. An episode may however be shorter than \(T\) if a terminal state is reached. Therefore each episode consists of a trajectory \((s_0, a_0, r_1, s_1, a_1, r_2, \ldots, s_{\tilde{T}-1}, a_{\tilde{T}-1}, r_{\tilde{T}}, s_{\tilde{T}})\), with \(\tilde{T} \leq T\). Rewards are occasionally omitted from the trajectory notation since they do not influence future states. Correspondingly, we also can compute the alternative transition probabilities \(P (s' \mid s,a) = \int_{r \in \mathbb{R}} P(s', r \mid s,a) \: dr\).

The main goal in reinforcement learning is to solve the control problem of learning a policy \(\pi \colon \mathcal{S} \rightarrow \Delta(\mathcal{A})\) to maximize the expected return. The return \(G_t \coloneqq \sum_{k=0}^{T} \gamma^k r_{t+k+1}\) is the discounted sum of rewards from timestep \(t\) onwards. Note that \(G_t\) is bounded since rewards are bounded. We denote the probability of taking action \(a\) in state \(s\) under policy \(\pi\) with \(\pi(a\mid s)\). For a policy \(\pi\), its stationary state distribution \(d^\pi\) determines the probability 
%\[d^\pi(s) \coloneqq \frac{\sum^T_{t=0} \mathbb{P}\bigl(S_t = s \mid S_0 \sim p_0, A_0, \hdots, A_{T-1} \sim \pi\bigr)}{\int_{s' \in \mathcal{S}} \sum^T_{t=0} \mathbb{P}\bigl(S_t = s' \mid S_0 \sim p_0, A_0, \hdots, A_{T-1} \sim \pi\bigr) \: ds'} \]
of being in a specific state \(s \in \mathcal{S}\) at any point in time when following \(\pi\). 

Let \(\Pi\) be the set of all possible policies. RL algorithms \(\mathfrak{A} \colon \Pi \rightarrow \Pi\) for the control problem now iteratively learn policies by interacting with the environment using the current policy to sample transitions, which are then used to update the policy. We will discuss how these updates can look like in Section \ref{sec:prelim_pg_intro}. A key characteristic of many RL problems is a necessary trade-off between exploration and exploitation in this learning process. The agent has no prior knowledge of the environment and thus needs to explore different transitions in order to learn which states and actions are desirable. As state and action spaces are typically large however, exploiting the already acquired knowledge about the environment is also crucial to guide the search process for an optimal policy to subspaces that hold most promise. A common approach to this exploration problem is to add noise to the policy.

\subsubsection{Value Functions}\label{sec:rl_value}

%%%%%%%%%%%%%%%%% value functions %%%%%%%%%%%%%%%%%%%%%%%%%%

Based on the return, we define the value and action-value functions, which are fundamental in RL. The value function 
\begin{equation}
	V_{\pi}(s) \coloneqq \mathbb{E}_{\pi}\bigl[G_t \mid S_t = s\bigr] \label{eq:v_def}
\end{equation}
gives the expected return from state \(s\) onwards when following policy \(\pi\), which selects all subsequent actions. Thus, the value function states how good it is to be in a specific state \(s\) given a policy \(\pi\). Note that here we follow the general convention to write this just as an expectation over \(\pi\). However, it should be noted that this expectation integrates over all subsequent states and actions that are obtained by following policy \(\pi\), i.e. Equation \eqref{eq:v_def} computes the expected return given that all subsequent actions are sampled from \(\pi\) and all rewards and next states are sampled from \(P\). This is implicit in our notation here as well as in further expectations. 

%Also note that equivalently the value function may be expressed as an expectation over the policy \(\pi\) and its corresponding stationary distribution \(d^\pi\).

Next, we define the action-value function 
\[Q_{\pi}(s,a) \coloneqq \mathbb{E}_{\pi}\bigl[G_t \mid S_t=s, A_t=a\bigr],\] 
which differs from the value function in that the very first action \(a\) is provided as an input to the function and not determined by the policy. We observe the following relation between \(V_\pi\) and \(Q_\pi\):
\begin{equation*}
	V_\pi (s) = \int_{a \in \mathcal{A}} \pi(a \mid s) \: Q_\pi (s, a) \: da. \label{eq:v_q_rel}
\end{equation*}
Further, we call 
\[ A_\pi(s,a) \coloneqq Q_\pi(s,a) - V_\pi(s) \]
the advantage function, which determines how good an action \(a\) is in state \(s\) in relation to other possible actions.

From the definitions of \(V_\pi\) and \(Q_\pi\) we can derive the so-called Bellman equations \cite{bellman1966dynamic}. Starting from Equation \eqref{eq:v_def}, we use the definition of the return \(G_t\), explicitly write out the expectation for the first transition and then apply the definition of \(V_\pi\) again:
\begin{align*}
	V_{\pi}(s) &= \mathbb{E}_{\pi}\bigl[G_t \mid S_t = s\bigr] \\
	&= \mathbb{E}_{\pi}\bigl[R_{t+1} + \gamma G_{t+1} \mid S_t = s\bigr] \\
	&= \int_{a \in \mathcal{A}} \pi(a \mid s) \int_{s^\prime \in \mathcal{S}} \int_{r \in \mathbb{R}} P(s^\prime, r \mid s,a)\biggl(r + \gamma \mathbb{E}_{\pi}\Bigl[G_{t+1} \mid S_{t+1} = s^\prime\Bigr]\biggr) \: dr \: ds' \: da \\
	&= \int_{a \in \mathcal{A}} \pi(a \mid s) \int_{s^\prime \in \mathcal{S}} \int_{r \in \mathbb{R}} P(s^\prime, r \mid s,a)\bigl(r + \gamma V_{\pi}(s^\prime)\bigr) \: dr \: ds' \: da
\end{align*}
Thus, we find a formulation of the value function, which depends on the value of subsequent states. Collapsing the expectation again yields the form known as the Bellman equation of the value function: 
\[V_{\pi}(s) = \mathbb{E}_{\pi}\bigl[R_{t+1} + \gamma V_{\pi}(S_{t+1})\bigr].\]
Similarly, we can find the Bellman equation for the action-value function:
\begin{equation}
	Q_{\pi}(s,a) = \mathbb{E}_{\pi}\bigl[R_{t+1} + \gamma Q_{\pi}(S_{t+1}, A_{t+1})\bigr]. \label{eq:bellman_q}
\end{equation}

% Optimality

Now, we can formally define what optimality means in RL. An optimal policy \(\pi^*\) is defined by \(V_{\pi^*}(s) \geq V_\pi(s)\) for all states \(s\) and policies \(\pi\), i.e. any optimal policy maximizes the expected return. It can be shown that in every finite MDP, a deterministic optimal policy exists \cite{MohriRostamizadehTalwalkar18}. All optimal policies share the same optimal value function \(V^*(s) \coloneqq \max_{\pi \in \Pi} V_\pi(s)\) and optimal action-value function \(Q^*(s,a) \coloneqq \max_{\pi \in \Pi} Q_\pi(s,a)\) and select actions \(a \in \argmax_{a^\prime} Q^*(s,a^\prime)\) for every state. Applying this to Equation \eqref{eq:bellman_q} yields the Bellman optimality equation
\begin{align*}
	Q^*(s,a) &= \mathbb{E}_{\pi^*}\bigl[R_{t+1} + \gamma Q^*(S_{t+1}, A_{t+1})\bigr] \\
	&= \mathbb{E}\bigl[R_{t+1} + \gamma \max_{a^\prime \in \mathcal{A}} Q^*(S_{t+1}, a^\prime)\bigr]
\end{align*}
We cite the following result without proof from \cite{Sutton1998} on how to obtain an optimal policy, which we will revisit in Section \ref{sec:convergence}.

\begin{theorem}\label{th:gpi}
	(Generalized Policy Iteration)
	Let \(\pi_\text{old}\) be the current policy. Then, Generalized Policy Iteration updates its policy by
	\begin{equation*}
		\pi_\text{new} \in \argmax_{\pi \in \Pi} \mathbb{E}_{A \sim \pi} \bigl[Q_{\pi_\text{old}}(s,A)\bigr]
	\end{equation*}
	for all \(s \in \mathcal{S}\).
	Let \(\bigl(\pi_n \bigr)^\infty_{n=0}\) be a sequence of policies obtained through Generalized Policy Iteration. Then, this sequence converges to an optimal policy, i.e. 
	\begin{equation*}
		\lim_{n \to \infty} \pi_n = \pi^*
	\end{equation*}
	and 
	\begin{equation*}
		\lim_{n \to \infty} Q_{\pi_n} = Q^*.
	\end{equation*}
\end{theorem}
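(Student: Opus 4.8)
The plan is to prove the theorem in two stages: first establish a monotone \emph{policy improvement} property guaranteeing that each greedy update weakly increases the value function at every state, and then invoke a monotone-convergence argument to show that the induced sequence of value functions converges to a fixed point of the Bellman optimality operator, which must coincide with $V^*$.

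First I would prove the Policy Improvement step as a lemma. Fix $s \in \mathcal{S}$ and let $\pi_\text{new}$ be obtained from $\pi_\text{old}$ via the greedy update. By the defining property of the update, $\mathbb{E}_{A \sim \pi_\text{new}}\bigl[Q_{\pi_\text{old}}(s,A)\bigr] \geq \mathbb{E}_{A \sim \pi_\text{old}}\bigl[Q_{\pi_\text{old}}(s,A)\bigr] = V_{\pi_\text{old}}(s)$. Starting from this inequality and repeatedly substituting the Bellman equation \eqref{eq:bellman_q}, while expanding $Q_{\pi_\text{old}}$ along trajectories generated by $\pi_\text{new}$, I would telescope to obtain $V_{\pi_\text{new}}(s) \geq V_{\pi_\text{old}}(s)$ for all $s$. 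The driving idea is that playing the greedy policy for one step and then reverting to $\pi_\text{old}$ is no worse than $\pi_\text{old}$ throughout; iterating this substitution replaces $\pi_\text{old}$ by $\pi_\text{new}$ one step at a time, and boundedness of the return together with $\gamma \leq 1$ in the finite-horizon episodic setting controls the tail so that the limit is legitimate.

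Then, since $\bigl(V_{\pi_n}(s)\bigr)_{n=0}^\infty$ is non-decreasing and bounded above by $V^*(s)$ (finite because rewards are bounded and the horizon is at most $T$), it converges pointwise to some limit $V_\infty$, and correspondingly $Q_{\pi_n} \to Q_\infty$. It remains to identify $V_\infty = V^*$. At the limit the greedy update can yield no further strict improvement, so $V_\infty$ is a self-consistent fixed point of the greedy update, i.e.\ it solves the Bellman optimality equation. Since $V^*$ is its unique solution, I would conclude $V_\infty = V^*$ and $Q_\infty = Q^*$; the greedy policies $\pi_n$ therefore converge to an optimal policy $\pi^*$. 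For the uniqueness I would invoke the contraction property of the Bellman optimality operator when $\gamma < 1$, and fall back on finite-horizon backward induction in the boundary case $\gamma = 1$.

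\textbf{Main obstacle.} The real difficulty lies not in the algebra of policy improvement but in the continuous state and action spaces. The cited existence result for a deterministic optimal policy is stated only for finite MDPs, so I would additionally need to ensure: (i) that the $\argmax$ over $\Pi$ in the update is attained and yields a \emph{measurable} policy, a measurable-selection issue; (ii) that the interchanges of limit and integral in the telescoping and monotone-convergence steps are justified, which I would handle via monotone or dominated convergence using boundedness of $G_t$; and (iii) that uniqueness of the Bellman optimality fixed point persists in the continuous setting, especially in the $\gamma = 1$ case where the usual contraction argument fails and one must rely essentially on the finite horizon $T$. Making these measure-theoretic and boundary-case points rigorous is where I expect the bulk of the effort to go.
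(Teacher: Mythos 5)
You should first know that the paper itself offers no proof of this theorem: it is explicitly ``cited without proof'' from Sutton and Barto, so there is no in-paper argument to compare against. Your strategy --- policy improvement, then monotone convergence of the values, then identification of the limit with the Bellman-optimality fixed point --- is the standard textbook route that the cited source sketches, and your first two stages are sound: the greedy-update inequality $\mathbb{E}_{A \sim \pi_\text{new}}[Q_{\pi_\text{old}}(s,A)] \geq V_{\pi_\text{old}}(s)$ does telescope through the Bellman equation to $V_{\pi_\text{new}} \geq V_{\pi_\text{old}}$ pointwise, and monotonicity plus boundedness gives pointwise convergence of $\bigl(V_{\pi_n}\bigr)_{n}$ to some $V_\infty \leq V^*$. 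Your flagged measurability and $\gamma = 1$ concerns are also apt.

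The genuine gap is in your third stage. From ``at the limit the greedy update can yield no further strict improvement'' you conclude that $V_\infty$ solves the Bellman optimality equation, but this does not follow from what you have established: a monotone, bounded sequence of value functions can perfectly well converge while stalling strictly below $V^*$, with the per-step improvements shrinking to zero (this is exactly what happens in approximate GPI), and pointwise convergence alone does not license interchanging $\lim_n$ with the maximization over $\Pi$ that defines the greedy update. The missing ingredient is the \emph{sharp} form of policy improvement: writing $(\mathcal{B}V)(s) \coloneqq \max_{a \in \mathcal{A}} \mathbb{E}\bigl[R + \gamma V(S') \mid s, a\bigr]$ for the Bellman optimality operator, your own two inequalities combine to give $V_{\pi_{n+1}}(s) \geq \mathbb{E}_{A \sim \pi_{n+1}}\bigl[Q_{\pi_n}(s,A)\bigr] = \max_{a} Q_{\pi_n}(s,a) = (\mathcal{B}V_{\pi_n})(s)$, i.e.\ $V_{\pi_{n+1}} \geq \mathcal{B}V_{\pi_n}$, and this is what rules out stalling. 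From here you can either iterate with monotonicity of $\mathcal{B}$ to get $V_{\pi_{n+k}} \geq \mathcal{B}^k V_{\pi_n}$ and invoke value-iteration convergence ($\mathcal{B}$ is a $\gamma$-contraction for $\gamma < 1$; backward induction over the finite horizon when $\gamma = 1$) to sandwich $V_\infty$ against $V^*$, or pass $V_{\pi_n} \leq \mathcal{B}V_{\pi_n}$ and $V_{\pi_{n+1}} \geq \mathcal{B}V_{\pi_n}$ to the limit (monotone convergence inside the expectation, and suprema commute with increasing limits) to obtain $V_\infty = \mathcal{B}V_\infty$ and hence $V_\infty = V^*$ by uniqueness of the fixed point. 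Separately, your final claim $\lim_{n \to \infty}\pi_n = \pi^*$ is asserted rather than proved: without a tie-breaking rule the policy sequence need not converge at all, since the argmax set is generally not a singleton and $(\pi_n)$ may oscillate among distinct optimal policies; what your argument actually delivers is $Q_{\pi_n} \to Q^*$ together with optimality of every limit point of $(\pi_n)$, which is also the only sense in which the informally stated theorem can be read.
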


\subsubsection{On-Policy Policy Gradient Methods}\label{sec:prelim_pg_intro}

%%%%%%%%%%%%%%%%%%%%%%%%%%%%%%%%%%%%%%%%%%%

%on policy vs off policy
%function approximation --> motivation for Deep learning chapter.
%value-based RL --> policy-based RL

%%%%%%%%%%%%%%%%%%%%%%%%%%%%%%%%%%%%%%%%%%%

Finally, we will delineate the subfields of RL on which our work focuses. In this context, we will successively introduce function approximation, policy gradient methods and the on-policy paradigm.

% Function approximation

RL algorithms are mostly concerned with learning functions such as \(\pi\), \(V_\pi\) or \(Q_\pi\). Early reinforcement methods learn exact representations of these by maintaining lookup tables with entries for each possible function input \cite{Sutton1998}. While this approach yields theoretical convergence guarantees \cite{MohriRostamizadehTalwalkar18}, it is practically very limited. Similar states are treated independently such that learnings do not generalize from one state to others while specific states are only rarely visited in large state spaces \cite{Sutton1998}. Moreover, this approach is not applicable to continuous spaces. Function approximation remedies these shortcomings by parameterizing the function to be learned. Let \(f_\theta(x)\) be this learnable function, where \(\theta\) are the function's parameters, which are adjusted over the course of learning, and \(x\) are the functions inputs such as states and actions or representations thereof. By choosing \(f_\theta\) to be continuous in its inputs, we can ensure that \(f_\theta\) generalizes across its inputs when we fit it to sampled transitions \cite{Sutton1998}. \(f_\theta\) can be as simple as a linear mapping, i.e. \(f_\theta(x) = \theta^Tx\), however recent works mostly use neural networks as function approximators (e.g., \cite{Mnih2015, silver2016}). The field using neural networks as function approximators is coined deep RL \cite{Mnih2015}. For the remainder of this paper, you can consider any learned function to be a neural network unless explicitly stated otherwise, although all our statements apply to any differentiable function approximators. We will introduce deep learning and neural networks in detail in Section \ref{sec:deep_learning}. 

% Policy gradient methods

% note on other policy optimization methods without gradients: hill climbing / simulated annealing, evolutionary / genetic

Policy gradient methods pose an alternative to value-based methods in RL. Most early successes in RL use value-based methods such as Q-Learning \cite{watkins1989learning} or SARSA \cite{rummery1994sarsa}, that aim at learning a sequence of value functions converging to the optimal value function, from which an optimal policy can then be inferred. In contrast, policy-based RL, which we focus on in this work, directly learns a parameterized policy \(\pi_\theta\). The main idea in this learning process is to increase the probability of those actions that lead to higher returns until we reach an (approximately) optimal policy \cite{Sutton1998}. While this optimization problem can be approached in several ways, gradient-based methods are most commonly used \cite{Hasselt2021RLlecture}. Following \cite{sutton2000comparing}, we define policy gradient methods as follows.

\begin{definition}\label{def:pg}
	(Policy Gradient Algorithm) Let \(\pi_\theta \colon \mathcal{S} \rightarrow \Delta(\mathcal{A})\) be a fully differentiable function with learnable parameters \(\theta \in \mathbb{R}^d\) mapping states to a probability distribution over actions. Let \(J \colon \mathbb{R}^d \rightarrow \mathbb{R}\) be some performance measure of the parameters. We call any learning algorithm a policy gradient algorithm if it learns its policy \(\pi_\theta\) by updating \(\theta\) via gradient ascent (or descent) on \(J\), i.e. its updates have the general form
	\begin{equation}
		\theta_\text{new} \gets \theta + \alpha \nabla_\theta J(\theta), \label{eq:pg_update}
	\end{equation}
	where \(\alpha \in \mathbb{R}\) is a step size parameter of the algorithm.
\end{definition}

% policy parameterization

In policy-based RL, two distinct ways exist to have the policy output a probability distribution over actions, from which actions can be sampled \cite{Sutton1998}. For discrete action spaces, we construct a discrete distribution over the action space by normalizing the policies' raw outputs via a softmax function \cite{goodfellow2016deep}. In this case, we have 
\[ \pi(a \mid s) = \frac{\exp(\pi_\theta(a \mid s))}{\sum_{a' \in \mathcal{A}}\exp(\pi_\theta(a' \mid s))}.\]
For continuous action spaces, we let \(\pi_\theta\) output the mean \(\mu\) and standard deviation \(\sigma\) of a Gaussian distribution, i.e. \(\pi_\theta(s) = \bigl(\mu_\theta(s), \sigma_\theta(s) \bigr)\) such that 
\[ \pi(a \mid s) = \frac{1}{\sigma_\theta(s)\sqrt{2\pi}} \exp \Biggl( -\frac{\bigl(a - \mu_\theta(s) \bigr)^2}{2\sigma_\theta(s)^2} \Biggr). \] 
This parameterization of a Gaussian distribution for the policy was first introduced by \cite{williams1987reinforcement, williams1992simple}. As action spaces are commonly bounded, the actions sampled from such a Gaussian are typically transformed to be within these bounds either by clipping or by applying a squashing distribution \cite{andrychowicz2020matters}. Further, we highlight that the policies learned by policy gradient methods in both the discrete and the continuous case are generally stochastic. This stands in contrast to value-based methods which generally learn deterministic policies \cite{sutton2000comparing}. Policy gradient methods are the core focus of this paper and will be discussed in-depth in subsequent sections.

% on-policy

Lastly, we delineate on-policy from off-policy algorithms. In RL, we distinguish between behavior and target policies \cite{Sutton1998}. A behavior policy is a policy which generates the data in form of trajectories from which we want to learn, i.e. this is the policy from which we sample actions when interacting with the environment. Conversely, the target policy is the policy which we want to learn about to evaluate how good it is in the given environment and improve it. Algorithms where behavior and target policy are not identical, e.g. Q-Learning \cite{watkins1989learning} or DQN \cite{Mnih2015}, are referred to as off-policy algorithms. In this work, we only discuss on on-policy algorithms, where behavior and target policy are identical. Hence, when speaking of policy gradient algorithms in the following, we always implicitly mean on-policy policy gradient algorithms if not mentioned otherwise.

%\begin{example}\label{ex: car}
	%To illustrate the introduced concepts, we will return to the introductory example of driving a car. Here, the action space \(\mathcal{A}\) consists of the controls of the car: steering, accelerating, breaking etc. A state \(s\) consists of all surroundings of the agent: the exact position of the car on the road, any street signs, other cars, pedestrians, the condition of the car itself, fuel status etc. As the goal is to reach a specific destination, the reward may be as simple as 1 if the car has reached its destination and 0 on all other timesteps, which would be an example of a sparse reward function. In contrast, the reward may instead also be a dense function, e.g., the negative remaining distance to the destination. An episode ends once the destination is reached, which is the only terminal state in this case.
%\end{example}

\subsection{Deep Learning}\label{sec:deep_learning}

In this section, we introduce deep learning as a subfield of machine learning since its methods are commonly used in policy gradient algorithms. In recent years, deep learning has emerged as the premier machine learning method in various fields, enabling state-of-the-art performance in domains such as computer vision (e.g., \cite{krizhevsky2012imagenet, he2016deep, dosovitskiy2020image}) and natural language processing (e.g., \cite{vaswani2017attention, brown2020language}. Following \cite{lecun2015deep} and \cite{goodfellow2016deep}, we define deep learning as a set of techniques to solve prediction tasks by learning multiple levels of representations from raw data using a composition of simple non-linear functions. This composition of functions, that we will describe in detail later, is referred to as (deep) neural network. Deep learning stands in contrast to conventional machine learning techniques like logistic regressions, which typically require hand-engineered representations as inputs to be effective \cite{lecun2015deep}. In the following, we introduce the general problem setting of deep learning using the notation of \cite{berner2021modern}, formalize neural networks and describe how they are trained.

Consider measurable spaces \(\mathcal{X}\) and \(\mathcal{Y}\). \(\mathcal{Z} \coloneqq \mathcal{X} \times \mathcal{Y}\) is the data space with each element \(z = (x, y) \in \mathcal{Z}\) being a tuple of input features \(x \in \mathcal{X}\) and a label \(y \in \mathcal{Y}\). Let \(\mathcal{M}(\mathcal{X}, \mathcal{Y})\) be the set of measurable functions from \(\mathcal{X}\) to \(\mathcal{Y}\). The problems we encounter in deep learning are prediction tasks. Thus, the goal is to learn a mapping \(f \in \mathcal{M}(\mathcal{X}, \mathcal{Y})\) from inputs to labels by minimizing some loss function \(\mathcal{L}\) over training data \(S = \{z^{(1)}, \hdots, z^{(m)} \} \) such that it generalizes to unseen data \(z \in \mathcal{Z}\). To learn the function \(f\), we first select a hypothesis set \(\mathcal{F} \subset \mathcal{M}(\mathcal{X}, \mathcal{Y})\). Deep learning then provides learning algorithms \(\mathfrak{A} \colon \mathcal{Z} \rightarrow \mathcal{F}\) that use training data \(S\) to learn the desired function \(f = \mathfrak{A}(S)\). Before we discuss this learning process, we will first further characterize the mapping to be learned.

In deep learning, functions in the hypothesis set \(\mathcal{F}\) represent instances of neural networks. Note that here we limit ourselves to feedforward networks, also called multilayer perceptrons (MLP), and will not discuss transformers \cite{vaswani2017attention}, recurrent (RNN) \cite{hochreiter1997long} or convolutional neural networks (CNN)\cite{krizhevsky2012imagenet}.

\begin{definition}
	(Feedforward Neural Network)
	A feedforward neural network \(f \colon \mathcal{X} \rightarrow \mathcal{Y}\) is a composition of functions 
	\[f = f^{(n+1)} \circ \cdots \circ f^{(1)} \] 
	that is differentiable almost everywhere.
	We refer to \(f^{(i)}, \ i = 1, \hdots, n\) as hidden layers, whereas \(f^{(n+1)}\) is the non-hidden output layer. Consequently, \(n\) denotes the number of of hidden layers in the network. Each hidden layer is characterized by a layer width \(N_i\). Let \(N_0\) and \(N_{n+1}\) further be the size of the input and output vectors respectively. Then, we can write each layer as 
	\[f^{(i)}(x) = g\left(W^{(i)}x + b^{(i)}\right),\]
	where \(x\) is the output of the previous layer or the network's inputs for \(i = 1\), \(W^{(i)} \in \mathbb{R}^{N_i \times N_{i-1}} \) and \(b^{(i)} \in \mathbb{R}^{N_i}\) are the layer's weight matrix and bias vector respectively and \(g \colon \mathbb{R} \rightarrow \mathbb{R}\) is a differentiable activation function introducing non-linearity.  \(g\) is applied element-wise.
\end{definition}

An MLP can be characterized by its architecture \(a = \bigl( (N_i)^{n+1}_{i=0}, \ g \bigr)\), consisting of layer sizes and the activation function to be used. The number of layers \(n+1\) is also referred to as the depth of the network. The Universal Approximation Theorem \cite{cybenko1989approximation, hornik1989multilayer} underpins the expressivity of neural networks: a two-layer network can already approximate any measurable function arbitrarily well under weak conditions on the activation function. Technically, each layer can feature a different activation function albeit this is uncommon. The activation function of the output layer is not defined by the architecture \(a\) but is derived from the prediction task. The standard choice for activation functions in the hidden layers is a rectified linear unit (ReLU)\footnote{Note that the ReLU function is not differentiable at 0. In practice, this is circumvented by using its sub-derivatives.} \cite{nair2010rectified}, due to typically fast learning \cite{glorot2011deep}. See \cite{lederer2021activation} for an overview of other commonly used activation functions. The output layer typically uses no activation function for regression tasks and sigmoid or softmax functions for classification tasks. Each element of a layer \(f^{(i)}\) is called a neuron. The outputs \(a^{(i)} = \bigl( f^{(i)} \circ \cdots \circ f^{(1)} \bigr)(x)\) of any layer are the learned representations of the inputs \(x\). We denote the outputs \(f(x)\), i.e. the predictions, of the neural network with \(\hat{y}\). \autoref{fig:dl_nn} depicts a neural network as an acyclic directed graph.

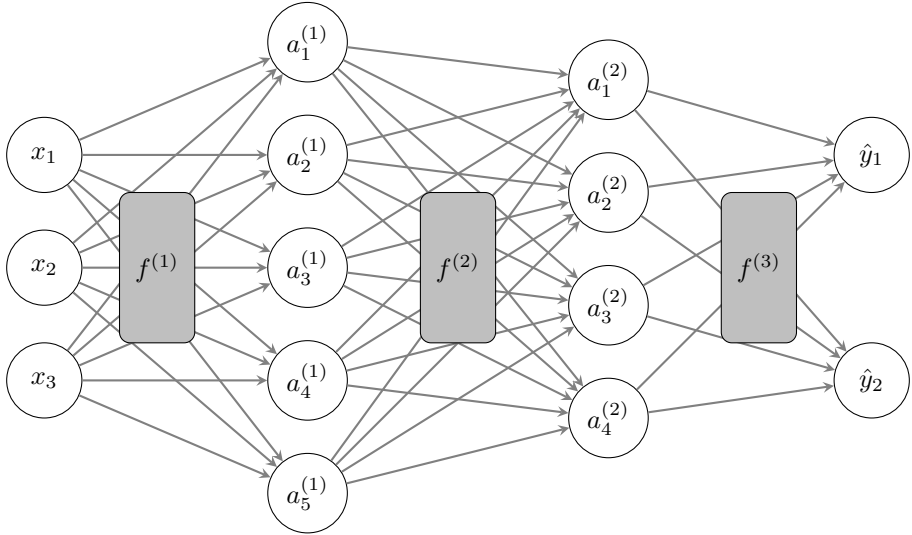
\begin{figure}
\centering
\begin{tikzpicture}[node distance=1.5cm]

\node (inp1) [nn_node] {\(x_1\)};
\node (inp2) [nn_node, below of=inp1] {\(x_2\)};
\node (inp3) [nn_node, below of=inp2] {\(x_3\)};

\node (h13) [nn_node, right of=inp2, xshift=2cm] {\(a^{(1)}_3\)};
\node (h12) [nn_node, above of=h13] {\(a^{(1)}_2\)};
\node (h11) [nn_node, above of=h12] {\(a^{(1)}_1\)};
\node (h14) [nn_node, below of=h13] {\(a^{(1)}_4\)};
\node (h15) [nn_node, below of=h14] {\(a^{(1)}_5\)};

\node (h21) [nn_node, right of=inp1, xshift=6cm, yshift=1cm] {\(a^{(2)}_1\)};
\node (h22) [nn_node, below of=h21] {\(a^{(2)}_2\)};
\node (h23) [nn_node, below of=h22] {\(a^{(2)}_3\)};
\node (h24) [nn_node, below of=h23] {\(a^{(2)}_4\)};

\node (out1) [nn_node, right of=h12, xshift=6cm] {\(\hat{y}_1\)};
\node (out2) [nn_node, right of=h14, xshift=6cm] {\(\hat{y}_2\)};

\draw [arrow] (inp1) -- (h11);
\draw [arrow] (inp1) -- (h12);
\draw [arrow] (inp1) -- (h13);
\draw [arrow] (inp1) -- (h14);
\draw [arrow] (inp1) -- (h15);
\draw [arrow] (inp2) -- (h11);
\draw [arrow] (inp2) -- (h12);
\draw [arrow] (inp2) -- (h13);
\draw [arrow] (inp2) -- (h14);
\draw [arrow] (inp2) -- (h15);
\draw [arrow] (inp3) -- (h11);
\draw [arrow] (inp3) -- (h12);
\draw [arrow] (inp3) -- (h13);
\draw [arrow] (inp3) -- (h14);
\draw [arrow] (inp3) -- (h15);

\draw [arrow] (h11) -- (h21);
\draw [arrow] (h11) -- (h22);
\draw [arrow] (h11) -- (h23);
\draw [arrow] (h11) -- (h24);
\draw [arrow] (h15) -- (h21);
\draw [arrow] (h12) -- (h21);
\draw [arrow] (h12) -- (h22);
\draw [arrow] (h12) -- (h23);
\draw [arrow] (h12) -- (h24);
\draw [arrow] (h15) -- (h22);
\draw [arrow] (h13) -- (h21);
\draw [arrow] (h13) -- (h22);
\draw [arrow] (h13) -- (h23);
\draw [arrow] (h13) -- (h24);
\draw [arrow] (h15) -- (h23);
\draw [arrow] (h14) -- (h21);
\draw [arrow] (h14) -- (h22);
\draw [arrow] (h14) -- (h23);
\draw [arrow] (h14) -- (h24);
\draw [arrow] (h15) -- (h24);

\draw [arrow] (h21) -- (out1);
\draw [arrow] (h22) -- (out1);
\draw [arrow] (h23) -- (out1);
\draw [arrow] (h24) -- (out1);
\draw [arrow] (h21) -- (out2);
\draw [arrow] (h22) -- (out2);
\draw [arrow] (h23) -- (out2);
\draw [arrow] (h24) -- (out2);

\node (f1) [annotation, right of=inp2, xshift=0cm] {\(f^{(1)}\)};
\node (f2) [annotation, right of=inp2, xshift=4cm] {\(f^{(2)}\)};
\node (f3) [annotation, right of=inp2, xshift=8cm] {\(f^{(3)}\)};

\end{tikzpicture}
\caption{A neural network with hidden layers of sizes 5 and 4 as a directed graph.}
\label{fig:dl_nn}
\end{figure}

Selecting a hypothesis set \(\mathcal{F}\) is done implicitly by specifying an architecture \(a\). Hence, we denote the hypothesis set for architecture \(a\), whose elements are all MLPs with that architecture, by \(\mathcal{F}_a\). The MLPs in \(\mathcal{F}_a\) therefore differ only in their weights and biases. We call these the (learnable) parameters of the network and typically collect them in a flattened parameter vector \(\theta \in \mathbb{R}^d\). We denote an MLP with parameters \(\theta\) as \(f_\theta\).

Given a hypothesis set \(\mathcal{F}_a\), we now aim to learn a neural network \(f_\theta \in \mathcal{F}_a\), i.e. to learn parameters \(\theta\), such that we reduce the expected loss or risk, 
\[\mathcal{R}(f) \coloneqq \mathbb{E}_{Z \sim \mathbb{P}_Z} \bigl[ \mathcal{L}(f, Z) \bigr] = \int_{z \in \mathcal{Z}} \mathcal{L}(f, z) \: d \mathbb{P}_Z(z),\]
over the data distribution \(\mathbb{P}_Z\) for some appropriately chosen differentiable loss function \(\mathcal{L} \colon \mathcal{F} \times \mathcal{Z} \rightarrow \mathbb{R}\) \cite{berner2021modern, goodfellow2016deep}. Here \(\mathbb{P}_Z\) is the image measure of \(Z\) on \(\mathcal{Z}\), from which the training data \(S = \{z^{(1)}, \hdots, z^{(m)} \} \) and unknown out-of-sample data \(z\) is drawn. We generally assume that training \(z^{(1)}, \hdots, z^{(m)}\) and out-of-sample data \(z\) are realizations of i.i.d. random variables \(Z^{(1)}, \hdots, Z^{(m)}, Z \sim \mathbb{P}_Z\) \cite{berner2021modern}. For a given MLP \(f_\theta = \mathfrak{A}(S)\) trained on \(S\), the risk becomes \(\mathcal{R}(f_\theta) = \mathbb{E}_{Z \sim \mathbb{P}_Z} \bigl[ \mathcal{L}(f_\theta, Z) \mid S \bigr]\). In practice, noisy data results in a positive lower bound on risk, i.e. an irreducible error \cite{berner2021modern}. Common loss functions are binary cross-entropy loss, 
\[\mathcal{L}(f, (x,y)) = - \bigl( y \cdot \ln(f(x)) + (1 - y) \cdot \ln (1 - f(x)) \bigr),\] 
for (binary) classification and mean squared error (MSE), 
\[\mathcal{L}(f, (x,y)) =  \bigl( y - f(x) \bigr)^2,\] 
for regression tasks. Sometimes, loss functions are augmented by regularization terms \(\Omega(\theta)\) such as an L2-penalty of the parameters, i.e. \(\beta \lVert \theta \rVert^2_2\) with \(\beta \in \mathbb{R}\) \cite{goodfellow2016deep}.

The data distribution \(\mathbb{P}_Z\) is generally unknown. Hence, we replace it by an empirical distribution based on the sampled training data \(S\) and use empirical risk minimization (ERM) as the learning algorithm to minimize it \cite{goodfellow2016deep, berner2021modern}. 

\begin{definition}
	(Empirical Risk). Given training data \(S = \{z^{(1)}, \hdots, z^{(m)} \}\) and a function \(f_\theta \in \mathcal{M}(\mathcal{X}, \mathcal{Y})\), the empirical risk is defined by
	\begin{equation}
		\hat{\mathcal{R}}_S(f_\theta) \coloneqq \frac{1}{m} \sum^m_{i=1} \mathcal{L}(f_\theta, z^{(i)}). 
		\label{eq:empiricalrisk}
	\end{equation}
\end{definition}

\begin{definition}
	(ERM learning algorithm). Given hypothesis set \(\mathcal{F}_a\) and training data \(S\), an empirical risk minimization algorithm \(\mathfrak{A}^\mathrm{erm}\) terminates with an (approximate\footnote{In practice, the empirical risk is generally highly non-convex prohibiting guaranteed convergence to a global minimum \cite{berner2021modern}.}) minimizer \(\hat{f}_S \in \mathcal{F}_a\) of empirical risk:
	\[\mathfrak{A}^\mathrm{erm}(S) = \hat{f}_S \in \argmin_{f \in \mathcal{F}_a} \hat{\mathcal{R}}_S(f). \]
\end{definition}

We approximately minimize empirical risk typically via gradient-based methods due to efficient computation of point-wise derivatives via the backpropagation algorithm \cite{rumelhart1985learning, kelley1960gradient}. Backpropagation means the practical application of the chain rule to neural networks. The gradient of the objective function \(\mathcal{L}\) with respects to the \(i\)-th layer's inputs \(a^{(i-1)}\) can be computed by working backwards from the gradient with respects to the layer's outputs \(a^{(i)}\) as \(\nabla_{a^{(i-1)}} \mathcal{L} = \sum_j (\nabla_{a^{(i-1)}} a^{(i)}_j) \frac{\partial \mathcal{L}}{\partial a^{(i)}_j}\). From these gradients, the gradients with respects to the weights and biases in each layer can be calculated similarly. Due to this flow of information from the objective function to each of the layers, the optimization of MLPs is also referred to as backwards pass, in contrast to the forward pass of calculating \(\hat{y} = f(x)\). The full backpropagation algorithm for MLPs is formulated in Algorithm \ref{alg:backprop}.

\begin{algorithm}
	\caption{Backpropagation, pseudocode taken from \cite{goodfellow2016deep}}\label{alg:backprop}
	\begin{algorithmic}
	\Require labels $y$, regularizer $\Omega(\theta)$, network outputs $\hat{y}$, activated and unactivated layer outputs $a^{(k)}$ and $h^{(k)}$ for $k = 1, \hdots, n$, activation function $g$, loss $\mathcal{L}$
	\State $\delta \gets \nabla_{\hat{y}}\mathcal{L}$
	\For{$k = n, \hdots, 1$}
	\State $\delta \gets \nabla_{h^{(k)}}\mathcal{L} = \delta \odot g'(h^{(k)}) $ \Comment{hadamard product if $g$ is element-wise}
	\State $\nabla_{b^{(k)}}\mathcal{L} \gets \delta + \nabla_{b^{(k)}} \Omega(\theta)$
	\State $\nabla_{W^{(k)}}\mathcal{L} \gets \delta h^{(k-1)\mathsf{T}} + \nabla_{W^{(k)}} \Omega(\theta)$
	\State $\delta \gets \nabla_{a^{(k-1)}}\mathcal{L} = W^{(k)\mathsf{T}} \delta$ 
	\EndFor
	\end{algorithmic}
\end{algorithm}

The gradients computed via backpropagation are used to update the parameters in each layer using gradient descent. However, due to the prohibitive computational costs of evaluating the expectation in Equation \eqref{eq:empiricalrisk}, computing gradients only on a subset of the training data is generally preferred and typically also results in faster convergence \cite{goodfellow2016deep}. At each iteration, a batch \(S'\) of data with size \(m' \leq m\) (typically \(m' \ll m\)) is randomly sampled from the training data to conduct the update \cite{berner2021modern}
\begin{equation}
	\Theta^{(k)} \coloneqq \Theta^{(k-1)} - \alpha_k \frac{1}{m'} \sum_{z \in S'} \nabla_\theta \mathcal{L}\Bigl(f_{\Theta^{(k-1)}}, z\Bigr).
	\label{eq:sgd_update}
\end{equation}
 Here, \(\Theta\) is a random variable whose realizations are neural network parameters \(\theta\). \(\alpha_k\) is the step size or learning rate on the k-th optimization step. The learning rate is commonly decayed over the training process to help convergence \cite{goodfellow2016deep}. The procedure using updates as in Equation \eqref{eq:sgd_update} is known as stochastic (minibatch) gradient descent (SGD)\footnote{Sometimes SGD refers to updates which only involve a single data point. We however follow the nowadays common terminology of calling any sample-based gradient descent stochastic.} \cite{goodfellow2016deep} and dates back to \cite{robbins1951stochastic, kiefer1952stochastic}. Using SGD has the additional benefit of introducing random fluctuations which enable escaping saddle points \cite{berner2021modern}. SGD in its general form is depicted in Algorithm \ref{alg:sgd}, where in our context \(r(\theta) = \hat{\mathcal{R}}_S(f_\theta)\). The neural network parameters \(\theta\) are set to be the realization of the final \(\Theta^{(K)}\) or a convex combination of \(\bigl( \Theta^{(k)} \bigr)^K_{k=1}\) \cite{berner2021modern}.

\begin{algorithm}
	\caption{Stochastic Gradient Descent, pseudocode from \cite{berner2021modern}}\label{alg:sgd}
	\begin{algorithmic}
	\Require Differentiable function $r \colon \mathbb{R}^d \rightarrow \mathbb{R}$, step sizes $\alpha_k \in (0, \infty)$, $k = 1, \hdots, K$, $\mathbb{R}^d$-valued random variable $\Theta^{(0)}$
	\For{$k = 1, \hdots, K$}
	\State Let $D^{k}$ be a random variable such that $\mathbb{E}\bigl[ D^{(k)} \mid \Theta^{(k-1)} \bigr] = \nabla r (\Theta^{(k-1)})$
	\State $\Theta^{(k)} \gets \Theta^{(k-1)} - \alpha_k D^{(k)}$
	\EndFor
	\end{algorithmic}
\end{algorithm}

Despite the stochasticity of SGD and highly non-convex loss landscapes, SGD's convergence can be guaranteed in some regimes \cite{berner2021modern}, and it exhibits strong performance in practice \cite{goodfellow2016deep}. Hence, SGD and its variants are the default choice to optimize neural networks. The most prominently used variant is Adam \cite{kingma2014adam}, which uses momentum \cite{polyak1964some} and an adaptive scaling of gradients to stabilize learning. Nonetheless, the initialization of \(\theta\) is also important for convergence. Biases are commonly initialized to 0 whereas weights are randomly initialized close to 0 using various strategies \cite{goodfellow2016deep}. Finally, note that regardless of the non-convexity of the loss landscapes, local minima are not considered problematic if the neural networks are large enough \cite{dauphin2014identifying, choromanska2015loss}

In practice, the training of neural networks is an iterative process.  We alternate between choosing the network architecture \(a\) as well as further hyperparameters of the learning algorithm such as the learning rates \(\alpha\), and approximately minimizing the empirical risk for this set of hyperparameters. This is generally a trial-and-error process to find a suitable set of hyperparameters to maximize generalization performance, i.e. to minimize risk. To approximate risk, the trained models are typically evaluated by the empirical risk on a held-out test data set, which was not seen during training \cite{goodfellow2016deep}. The achieved empirical risk can be decomposed into a generalization error, an optimization error, an approximation error and the irreducible error \cite{berner2021modern}. The generalization error is the difference between empirical and actual risk stemming from the random sampling of training data, which may not be representative of the actual data distribution \(\mathbb{P}_Z\). The optimization error is the result of potentially not finding a global mimimum during the learning process. The approximation error is the difference between the minimum achievable risk over functions in \(\mathcal{F}_a\) and over all \(f \in \mathcal{M}(\mathcal{X}, \mathcal{Y})\).

% theory (policy gradient theorem)
\section{Theoretical Foundations of Policy Gradients}\label{sec:pg_theory}

% In addition, we introduce fundamental concepts unique to policy gradient methods.

Having introduced the fundamentals of deep RL, we can now discuss policy gradient algorithms in detail. In this section, we derive their theoretical foundations. Our main focus is going to be the Policy Gradient Theorem, on which all policy gradient algorithms build. This theorem will be discussed in Section \ref{sec:pg_theorem}. Furthermore, Sections \ref{sec:baseline} and \ref{sec:importance} introduce the theoretical justifications for additional methods that are frequently used in policy gradient algorithms.

\subsection{Policy Gradient Theorem}\label{sec:pg_theorem}

Given an MDP \(\mathcal{M} = (\mathcal{S}, \mathcal{A}, P, \gamma, p_0)\), consider a parameterized policy \(\pi_\theta\), which is differentiable almost everywhere, and the following objective function \(J\) for maximizing the expected episodic return: 
\begin{align*}
	J(\theta) &= \mathbb{E}_{S_0 \sim p_0, \pi_\theta} \bigl[ G_0 \bigr] \\
	&= \mathbb{E}_{S_0 \sim p_0} \Bigl[ \mathbb{E}_{\pi_\theta} \bigl[ G_t \mid S_t = S_0 \bigr] \Bigr] \\
	&= \mathbb{E}_{S_0 \sim p_0} \Bigl[ V_{\pi_\theta}(S_0) \Bigr]
\end{align*}
The idea of policy gradient algorithms is to maximize \(J(\theta)\) over the parameters \(\theta\) by performing gradient ascent \cite{Sutton1998}. Hence, we require the gradients \(\nabla_\theta J(\theta)\), however it is a priori not obvious how the right-hand side \(\mathbb{E}_{S_0 \sim p_0, \pi_\theta} \bigl[ G_0 \bigr]\) depends on \(\theta\) as changes in the policy \(\pi\) also affect the state distribution \(d^\pi\). The Policy Gradient Theorem \cite{sutton1999policy, marbach2001simulation} yields an analytic form of \(\nabla_\theta J(\theta)\) from which we can sample gradients that does not involve the derivative of \(d^\pi\). Here, we focus on the undiscounted case, i.e. \(\gamma = 1\). Note that any discounted problem instance can be reduced to the undiscounted case by letting the reward function absorb the discount factor \cite{schulman2015high}.

\begin{theorem} \label{pg_theorem}
	(Policy Gradient Theorem)
	For a given MDP, let \(\pi_\theta\) be differentiable w.r.t. \(\theta\) and \(\nabla_\theta \pi_\theta\) be bounded, let \(Q_{\pi_\theta}\) be differentiable w.r.t. \(\theta\) and \(\nabla_\theta Q_{\pi_\theta}\) be bounded for all \(s \in \mathcal{S}\) and \(a \in \mathcal{A}\). Then, there exists a constant \(\eta\) such that
	\begin{equation}
		\nabla_\theta J(\theta) = \eta \: \mathbb{E}_{S \sim d^{\pi_\theta}, A \sim \pi_\theta} \Bigl[ Q_{\pi_\theta}(S,A) \: \nabla_\theta \ln \pi_\theta(A \mid S) \Bigr]. \label{eq:pg_theorem}
	\end{equation}
\end{theorem}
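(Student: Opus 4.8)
The plan is to follow the classical recursive-unrolling argument, adapted to the continuous, finite-horizon setting. First I would compute $\nabla_\theta V_{\pi_\theta}(s)$ for a fixed state $s$, starting from the relation $V_{\pi_\theta}(s) = \int_{a \in \mathcal{A}} \pi_\theta(a \mid s)\, Q_{\pi_\theta}(s,a)\, da$, interchanging the gradient with the integral, and applying the product rule. This splits the derivative into an \emph{immediate} term $\phi(s) := \int_a \bigl(\nabla_\theta \pi_\theta(a \mid s)\bigr) Q_{\pi_\theta}(s,a)\, da$ and a \emph{future} term $\int_a \pi_\theta(a \mid s)\bigl(\nabla_\theta Q_{\pi_\theta}(s,a)\bigr)\, da$.

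Next I would expand the future term via the Bellman equation for $Q_{\pi_\theta}$. Since $\gamma = 1$ and neither the reward nor the transition kernel $P$ depends on $\theta$, differentiating $Q_{\pi_\theta}(s,a) = \int_{s'}\int_r P(s',r \mid s,a)\bigl(r + V_{\pi_\theta}(s')\bigr)\, dr\, ds'$ yields $\nabla_\theta Q_{\pi_\theta}(s,a) = \int_{s'} P(s' \mid s,a)\, \nabla_\theta V_{\pi_\theta}(s')\, ds'$. Substituting back produces the recursion $\nabla_\theta V_{\pi_\theta}(s) = \phi(s) + \int_{s'} \bigl(\int_a \pi_\theta(a \mid s) P(s' \mid s,a)\, da\bigr)\, \nabla_\theta V_{\pi_\theta}(s')\, ds'$, where the inner integral is the one-step state-transition density under $\pi_\theta$.

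I would then unroll this recursion. Writing $\rho^{\pi_\theta}(s \to x, k)$ for the $k$-step transition density from $s$ to $x$ under the policy, repeated substitution gives $\nabla_\theta V_{\pi_\theta}(s) = \sum_{k=0}^{T} \int_x \rho^{\pi_\theta}(s \to x, k)\, \phi(x)\, dx$, a finite sum because the horizon is bounded by $T$. Taking the outer expectation over $S_0 \sim p_0$, swapping summation with integration, and collecting terms, I would define the expected (unnormalized) visitation measure $\eta(x) := \int_{s_0} p_0(s_0) \sum_k \rho^{\pi_\theta}(s_0 \to x, k)\, ds_0$; its total mass $\eta := \int_x \eta(x)\, dx$ is the normalizing constant, and $d^{\pi_\theta}(x) = \eta(x)/\eta$ is precisely the stationary state distribution. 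Finally, rewriting $\phi(x)$ through the log-derivative identity $\nabla_\theta \pi_\theta = \pi_\theta\, \nabla_\theta \ln \pi_\theta$ turns the immediate term into an expectation over $A \sim \pi_\theta$, producing the claimed form with the constant $\eta$.

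The main obstacle is rigor in the continuous setting rather than the algebra. The interchange of $\nabla_\theta$ with $\int_a$ in the first step, and with $\int_{s'}$ when differentiating the Bellman equation, must be justified by a dominated-convergence / Leibniz argument; this is exactly where the hypotheses that $\nabla_\theta \pi_\theta$ and $\nabla_\theta Q_{\pi_\theta}$ are bounded, together with compactness of $\mathcal{A}$ and $\mathcal{S}$ and boundedness of rewards, supply integrable dominating functions. Likewise, unrolling the recursion and exchanging the sum with the integrals requires Fubini's theorem plus an induction verifying that the partial unrollings behave correctly; here the finite horizon $T$ keeps every sum finite and avoids the convergence subtleties of the infinite-horizon discounted case.
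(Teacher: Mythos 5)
Your proposal is correct and follows essentially the same route as the paper's proof: the same product-rule decomposition into the immediate term \(\phi(s)\) and the future term, the same use of the Bellman equation to reduce \(\nabla_\theta Q_{\pi_\theta}\) to \(\nabla_\theta V_{\pi_\theta}\), the same recursive unrolling via multi-step transition densities over the finite horizon, the same normalization of the visitation measure to produce \(d^{\pi_\theta}\) and the constant \(\eta\), and the same log-derivative identity at the end, with the interchanges justified exactly as in the paper by Leibniz/dominated convergence (using the boundedness hypotheses) and Fubini. The only difference is cosmetic: you carry out the recursion for \(\nabla_\theta V_{\pi_\theta}(s)\) at a fixed state and integrate against \(p_0\) at the end, whereas the paper keeps the \(p_0\)-integral in place throughout.
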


\begin{proof}
	We largely follow the proof by \cite{Sutton1998} albeit in a more detailed form  and extended to continuous state and action spaces.
	To enhance readability, we omit subscripts \(\theta\) for the policy \(\pi\) and all gradients \(\nabla\) but both always depend on the parameters \(\theta\).
	
	Starting from the definition of the objective function, we explicitly write out the expectation over starting states, use the relationship between value and action-value function, 
	\( V_\pi (s) = \int_{a \in \mathcal{A}} \pi(a \mid s) \: Q_\pi (s, a) \: da, \) 
	and differentiate by parts.
	\begin{align}
		\nabla J(\theta) &= \nabla\mathbb{E}_{S \sim p_0} \bigl[ V_\pi(S) \bigr] \nonumber \\ 
		&= \nabla \int_{s \in \mathcal{S}} p_0(s) \: V_\pi(s) \:ds \nonumber \\
		% relation v and q
		&= \nabla \int_{s \in \mathcal{S}} p_0(s) \int_{a \in \mathcal{A}} \pi(a \mid s) \: Q_\pi(s,a) \: da \: ds \nonumber \\ 
		% &\text{expanding expectation}\\
		% differentiate by parts (product rule)
		&= \int_{s \in \mathcal{S}} p_0(s) \biggl( \int_{a \in \mathcal{A}} \bigl(\nabla \pi(a \mid s)\bigr) \: Q_\pi(s,a) \: da  +  \int_{a \in \mathcal{A}} \pi(a \mid s) \: \nabla Q_\pi(s,a) \: da \biggr) \: ds. \label{eq:pgproof_1}
	\end{align}
	Note that in the last step via used the Leibniz integral rule (Theorem \ref{th:leibniz}) to swap the order of integration and differentiation prior to applying the product rule. 
	The conditions for Leibniz are satisfied since \(\pi(\cdot \mid s) Q_\pi(s, \cdot)\) is integrable for any \(s \in \mathcal{S}\) and its partial derivatives exist and are bounded for all \(s \in \mathcal{S}\) and \(a \in \mathcal{A}\) since \(\pi\) and \(Q_\pi\) are bounded and \(\nabla Q_\pi\) and \(\nabla \pi\) exist and are bounded by assumption.
	
	Now, consider the recursive formulation of the action-value function 
	\[Q_\pi(s,a) = \int_{s' \in \mathcal{S}} \int_{r \in \mathbb{R}} P(s', r \mid s,a) \: \bigl(r + V_\pi(s')\bigr) \: dr \: ds'.\]
	Due to the identity \(\int_{r \in \mathbb{R}} P(s', r \mid s,a) \: dr = P(s' \mid s,a)\) and since realized rewards \(r\) and environment transitions for a given action no longer depend on the policy, we can reformulate the gradients of \(Q_\pi\) w.r.t. \(\theta\), again using the Leibniz integral rule.
	\begin{align}
		\nabla Q_\pi(s,a) &= \nabla \int_{s' \in \mathcal{S}} \int_{r \in \mathbb{R}} P(s', r \mid s,a) \: \bigl(r + V_\pi(s')\bigr) \: dr \: ds' \nonumber \\
		% exchange differentiation and integration
		&= \int_{s' \in \mathcal{S}} \int_{r \in \mathbb{R}} P(s', r \mid s,a) \: \nabla \bigl(r + V_\pi(s')\bigr) \: dr \: ds' \nonumber \\ 
		% r does not depend on \theta
		&= \int_{s' \in \mathcal{S}} \int_{r \in \mathbb{R}} P(s', r \mid s,a) \: \nabla V_\pi(s') \: dr \: ds' \nonumber \\ 
		&= \int_{s' \in \mathcal{S}} \biggl( \int_{r \in \mathbb{R}} P(s', r \mid s,a) \: dr \biggr) \: \nabla V_\pi(s') \: ds' \nonumber \\ 
		 % using sum over r = omit r
		% &= \nabla \int_{s'} p(s' \mid s,a) V_\pi(s') ds' \\ 
		% exchange integration and differentiation 
		&= \int_{s' \in \mathcal{S}} P(s' \mid s,a) \: \nabla V_\pi(s') \: ds' \label{eq:pgproof_2}.
	\end{align}
	Further, note that for all \(s \in \mathcal{S}\)
	\begin{align}
		\nabla V_\pi(s) &= \nabla \int_{a \in \mathcal{A}} \pi(a \mid s) \: Q_\pi(s,a) \: da \nonumber \\
		&= \int_{a \in \mathcal{A}} \bigl(\nabla \pi(a \mid s)\bigr) \: Q_\pi(s,a) \: da + \int_{a \in \mathcal{A}} \pi(a \mid s) \: \nabla Q_\pi(s,a) \: da, \label{eq:pgproof_3}
	\end{align}
	which is equivalent to the inner expression in Equation \eqref{eq:pgproof_1}. By using \eqref{eq:pgproof_2} and \eqref{eq:pgproof_3}, we can transform \eqref{eq:pgproof_1} into a recursive form, which we are then going to unroll subsequently to yield an explicit form. In the following, we simply notation by defining 
	\begin{equation}
		\label{eq:pgproof_4}
		\phi(s) \coloneqq \int_{a \in \mathcal{A}} \bigl(\nabla \pi(a \mid s)\bigr) \: Q_\pi(s,a) \: da.
	\end{equation}
	Applying \eqref{eq:pgproof_4} and \eqref{eq:pgproof_2} to \eqref{eq:pgproof_1} in order and rearranging the integrals gives
	\begin{align}
		\nabla J(\theta) &= \int_{s \in \mathcal{S}} p_0(s) \: \biggl( \int_{a \in \mathcal{A}} \bigl(\nabla \pi(a \mid s)\bigr) \: Q_\pi(s,a) da + \int_{a \in \mathcal{A}} \pi(a \mid s) \: \nabla Q_\pi(s,a) \: da \biggr) \: ds \nonumber \\
		&= \int_{s \in \mathcal{S}} p_0(s) \: \biggl( \phi(s) +  \int_{a \in \mathcal{A}} \pi(a \mid s) \: \nabla Q_\pi(s,a) \: da \biggr) \: ds \nonumber \\
		&= \int_{s \in \mathcal{S}} p_0(s) \: \biggl( \phi(s) + \int_{a \in \mathcal{A}} \pi(a \mid s) \: \int_{s' \in \mathcal{S}} P(s' \mid s,a) \: \nabla V_\pi(s') \: ds' \: da \biggr) \: ds \nonumber \\
		% Fubini 
		&= \int_{s \in \mathcal{S}} p_0(s) \: \biggl( \phi(s) + \int_{s' \in \mathcal{S}} \int_{a \in \mathcal{A}} \pi(a \mid s) \: P(s' \mid s,a) \: da \: \nabla V_\pi(s') \: ds' \biggr) \: ds \label{eq:pgproof_5}
	\end{align}
	In the final step, we switched the order of integration using Fubini's Theorem (Theorem \ref{th:fubini}), which is applicable since \(\nabla V_\pi\) is bounded and \(\pi(\cdot \mid s) P(\cdot \mid s, \cdot)\) is a probability measure on \(\mathcal{S} \times \mathcal{A}\) such that \(\lvert \pi(\cdot \mid s) P(\cdot \mid s, \cdot) \nabla V_\pi \rvert\) is integrable over the product space \(\mathcal{S} \times \mathcal{A}\). To unroll Equation \eqref{eq:pgproof_5} across time, we introduce notation for multi-step transition probabilities. Let \(\rho_\pi (s \to s', k)\) be the probability of transitioning from state \(s\) to \(s'\) after \(k\) steps under policy \(\pi\). We have that 
	\[\rho_\pi (s \to s', 0) \coloneqq 
	\begin{cases}
		1 & \text{if } s = s',\\
		0 & \text{else}
	\end{cases}
	\] 	
	and \(\rho_\pi (s \to s', 1) \coloneqq \int_{a \in \mathcal{A}} \pi(a \mid s) \: P(s' \mid s,a) \: da\). 
	Now, we can recursively write
	\[\rho_\pi (s \to s'', k+1) = \int_{s' \in \mathcal{S}} \rho_\pi (s \to s', k) \: \rho_\pi (s' \to s'', 1) \: ds'.\]
	Using this notation, iteratively substituting in \eqref{eq:pgproof_2} and \eqref{eq:pgproof_3} and applying Fubini, we can unroll \eqref{eq:pgproof_5}:
	\begin{align*}
		\nabla J(\theta) &= \int_{s \in \mathcal{S}} p_0(s) \: \biggl( \phi(s) + \int_{s' \in \mathcal{S}} \int_{a \in \mathcal{A}} \pi(a \mid s) \: P(s' \mid s,a) \: da \: \nabla V_\pi(s') \: ds' \biggr) \: ds \\
		&= \int_{s \in \mathcal{S}} p_0(s) \: \biggl( \phi(s) + \int_{s' \in \mathcal{S}} \rho_\pi (s \to s', 1) \: \nabla V_\pi(s') \: ds' \biggr) \: ds \\
		&= \int_{s \in \mathcal{S}} p_0(s) \: \Biggl( \phi(s) + \int_{s' \in \mathcal{S}} \rho_\pi (s \to s', 1) \: \biggl( \phi(s') + \int_{a \in \mathcal{A}} \pi(a \mid s') \: \nabla Q_\pi(s',a) \: da \biggr) \: ds' \Biggr) \: ds \\
		&= \int_{s \in \mathcal{S}} p_0(s) \: \Biggl( \phi(s) + \int_{s' \in \mathcal{S}} \rho_\pi (s \to s', 1) \: \biggl( \phi(s') + \int_{s'' \in \mathcal{S}} \rho_\pi (s' \to s'', 1) \: \nabla V_\pi(s'') \: ds'' \biggr) \: ds' \Biggr) \: ds \\ 
		% Fubini again
		&= \int_{s \in \mathcal{S}} p_0(s) \: \Biggl( \phi(s) + \int_{s' \in \mathcal{S}} \rho_\pi (s \to s', 1) \: \phi(s') \: ds' \\
		&\qquad + \int_{s'' \in \mathcal{S}} \biggl(\int_{s' \in \mathcal{S}} \rho_\pi (s \to s', 1) \: \rho_\pi (s' \to s'', 1) \: ds'\biggr) \: \nabla V_\pi(s'') \: ds'' \: \Biggr) \: ds \\ 
		&= \int_{s \in \mathcal{S}} p_0(s) \: \biggl( \phi(s) + \int_{s' \in \mathcal{S}} \rho_\pi (s \to s', 1) \: \phi(s') \: ds' + \int_{s'' \in \mathcal{S}} \rho_\pi (s \to s'', 2) \: \nabla V_\pi(s'') \: ds'' \biggr) \: ds \\
		&= \int_{s \in \mathcal{S}} p_0(s) \: \biggl( \rho_\pi (s \to s, 0) \phi(s) + \int_{s' \in \mathcal{S}} \rho_\pi (s \to s', 1) \: \phi(s') \: ds' \\ 
		&\qquad + \int_{s'' \in \mathcal{S}} \rho_\pi (s \to s'', 2) \: \phi(s'') \: ds'' + \int_{s''' \in \mathcal{S}} \rho_\pi (s \to s''', 3) \: \nabla V_\pi(s''') \: ds''' \biggr) \: ds \\
		&\;\;\vdots \\
		&= \int_{s \in \mathcal{S}} p_0(s) \int_{s' \in \mathcal{S}} \sum^T_{t=0} \rho_\pi (s \to s', t) \: \phi(s') \: ds' \: ds
	\end{align*}
	We set \(\eta_s(s') \coloneqq \sum^T_{t=0} \rho_\pi (s \to s', t)\), rearrange the integrals and multiply by 1 to obtain 
	\begin{align}
		\nabla_\theta J(\theta) &= \int_{s \in \mathcal{S}} p_0(s) \int_{s' \in \mathcal{S}} \sum^T_{t=0} \rho_\pi (s \to s', t) \: \phi(s') \: ds' \: ds \nonumber \\
		&= \int_{s' \in \mathcal{S}} \int_{s \in \mathcal{S}} p_0(s) \: \eta_s(s') \: \phi(s') \: ds \: ds' \nonumber \\
		&= \frac{\int_{s'' \in \mathcal{S}} \int_{s \in \mathcal{S}} p_0(s) \eta_s(s'') \: ds \: ds''}{\int_{s'' \in \mathcal{S}} \int_{s \in \mathcal{S}} p_0(s) \eta_s(s'') \: ds \: ds''} \int_{s' \in \mathcal{S}} \int_{s \in \mathcal{S}} p_0(s) \: \eta_s(s') \: ds \: \phi(s') \: ds' \nonumber \\
		&= \int_{s'' \in \mathcal{S}} \int_{s \in \mathcal{S}} p_0(s) \eta_s(s'') \: ds \: ds'' \int_{s' \in \mathcal{S}} \frac{\int_{s \in \mathcal{S}} p_0(s) \eta_s(s') \: ds}{\int_{s'' \in \mathcal{S}} \int_{s \in \mathcal{S}} p_0(s) \eta_s(s'') \: ds \: ds''} \:  \: \phi(s') \: ds' \nonumber \\
%		&= \int_{s' \in \mathcal{S}} \int_{s \in \mathcal{S}} p_0(s) \: \frac{\int_{s'' \in \mathcal{S}} \eta_s(s'') \: ds''}{\int_{s'' \in \mathcal{S}} \eta_s(s'') \: ds''} \: \eta_s(s') \: \phi(s') \: ds \: ds' \nonumber \\
%		&=  \int_{s'' \in \mathcal{S}} \eta_s(s'') \: ds'' \int_{s' \in \mathcal{S}} \int_{s \in \mathcal{S}} p_0(s) \: \frac{\eta_s(s')}{\int_{s'' \in \mathcal{S}} \eta_s(s'') ds''} \: \phi(s') \: ds \: ds' \nonumber \\
		&= \int_{s \in \mathcal{S}} p_0(s) \int_{s'' \in \mathcal{S}} \eta_s(s'') \: ds'' \: ds \int_{s' \in \mathcal{S}} d^\pi(s') \: \phi(s') \: ds'. \label{eq:pgproof_6}
	\end{align}
	In the final step, we used the identity \[d^\pi(s') = \frac{\int_{s \in \mathcal{S}} p_0(s) \eta_s(s') \: ds}{\int_{s'' \in \mathcal{S}} \int_{s \in \mathcal{S}} p_0(s) \eta_s(s'') \: ds \: ds''},\]
	which can be seen as \(\eta_s(s')\) is the accumulate sum over probabilities of reaching \(s'\) after any number of steps for a given starting state. Integrating over the starting state distribution and normalizing hence yields the probability of visiting state \(s'\) and thereby the stationary distribution \(d^\pi\) over states under the current policy.
	
	Finally, we can derive the canonical form of the Policy Gradient Theorem from \eqref{eq:pgproof_6} by using the definition of \(\phi(s)\), setting 
	\begin{equation*}
		\eta \coloneqq \int_{s \in \mathcal{S}} p_0(s) \int_{s'' \in \mathcal{S}} \eta_s(s'') \: ds'' \: ds 
	\end{equation*}
	 and multiplying with 1:
	\begin{align}
		\nabla J(\theta) &= \int_{s \in \mathcal{S}} p_0(s) \int_{s'' \in \mathcal{S}} \eta_s(s'') \: ds'' \: ds \int_{s' \in \mathcal{S}} d^\pi(s') \: \phi(s') \: ds' \nonumber \\
		&= \eta \int_{s' \in \mathcal{S}} d^\pi(s') \int_{a \in \mathcal{A}} \bigl(\nabla \pi(a \mid s')\bigr) \: Q_\pi(s',a) \: da \: ds' \nonumber \\
		&= \eta \int_{s' \in \mathcal{S}} d^\pi(s') \int_{a \in \mathcal{A}} \pi(a \mid s') \: \frac{\nabla \pi(a \mid s')}{\pi(a \mid s')} \: Q_\pi(s',a) \: da \: ds' \nonumber \\
		&= \eta \int_{s' \in \mathcal{S}} d^\pi(s') \int_{a \in \mathcal{A}} \pi(a \mid s') \: \bigl(\nabla \ln \pi(a \mid s')\bigr) \: Q_\pi(s',a) \: da \: ds' \nonumber \\
		&= \eta \: \mathbb{E}_{S \sim d^\pi} \biggl[ \mathbb{E}_{A \sim \pi} \Bigl[Q_\pi(S,A) \: \nabla \ln \pi(A \mid S) \Bigr] \biggr]. \nonumber
	\end{align}
\end{proof}

The Policy Gradient Theorem provides us with an explicit form of the policy gradients from which we can sample gradients. This allows the use of gradient-based optimization to directly optimize the policy using the methods presented in Section \ref{sec:deep_learning}. Thus, the theorem serves as the foundation for the policy gradient algorithms which we will discuss in Section \ref{sec:algorithms}.

We conclude this section with some further remarks on Equation \eqref{eq:pg_theorem}. First, we note that for any starting state \(s \in \mathcal{S}\), we have that 
\begin{align*}
	\eta &= \int_{s \in \mathcal{S}} p_0(s) \int_{s' \in \mathcal{S}} \eta_s(s') \: ds' \: ds = \int_{s \in \mathcal{S}} p_0(s) \int_{s' \in \mathcal{S}} \sum^T_{t=0} \rho_\pi (s \to s', t) \: ds' \: ds \\
	&= \mathbb{E}_{S \sim p_0} \Biggl[ \sum^T_{t=0} \int_{s' \in \mathcal{S}} \rho_\pi (S \to s', t) \: ds' \Biggr],
\end{align*}
which is the average episode length\footnote{Note that \(\rho_\pi (s_t \to s_{t+1}, 1) = 0\) if the episode already terminated due to reaching a terminal state on any previous step.} under policy \(\pi\) \cite{Sutton1998}. Second, the use of gradient-based methods makes it sufficient to sample gradients which are only proportional to the actual gradients since any constant of proportionality can be absorbed by the learning rate parameter of the optimization algorithms. Hence, \(\eta\) is commonly omitted \cite{Sutton1998}, i.e. 
\begin{equation}
	\nabla_\theta J(\theta) \propto \mathbb{E}_{S \sim d^{\pi_\theta}, A \sim \pi_\theta} \Bigl[ Q_{\pi_\theta}(S,A) \: \nabla_\theta \ln \pi_\theta(A \mid S) \Bigr]. \label{eq:pg_proportional}
\end{equation}
	We observe that all terms on the right hand side are known or can be estimated via sampling.
		
\subsection{Value Function Estimation with Baselines}\label{sec:baseline}

In practice, the resulting estimates of the policy gradients can become very noisy when sampling from Equation \eqref{eq:pg_proportional}. Therefore, a main practical challenge of policy gradient algorithms is to introduce measures to reduce the variance of the gradients while keeping the bias low \cite{sutton2000comparing}. In this context, a well-known and widely used technique is to use a baseline \cite{williams1992simple} when sampling an estimate of the action-value function \(Q_\pi\) \cite{greensmith2004variance}. In this section, we show that using an appropriately chosen baseline does not bias the estimate but can greatly reduce the variance of the sampled gradients.

Let \(\hat{Q}(s,a)\) be a sampled estimate of \(Q_\pi(s,a)\), assuming \(\mathbb{E}\bigl[ \hat{Q}(s,a) \bigr]= Q_\pi(s,a)\). Then, we can construct a new estimator \(\hat{Q}_b(s,a)\) by subtracting some baseline \(b \colon \mathcal{S} \rightarrow \mathbb{R}\), i.e. \(\hat{Q}_b(s,a) = \hat{Q}(s,a) - b(s)\). Our only condition towards \(b\) is that it does not depend on the action \(a\), though it can depend on the state \(s\) and even be a random variable \cite{Sutton1998}. Our sampled estimate of the gradient \(\nabla_\theta J(\theta)\) becomes
\begin{equation*}
	\hat{\nabla}_\theta J(\theta) = \nabla_\theta \ln \pi_\theta(a \mid s) \bigl( \hat{Q}(s,a) - b(s) \bigr).
\end{equation*}
In expectation over the policy \(\pi\), this yields
\begin{align*}
	\mathbb{E}_\pi \Bigl[ \hat{\nabla}_\theta J(\theta) \Bigr] &= \mathbb{E}_\pi \Bigl[\nabla_\theta \ln \pi_\theta(A \mid S) \: \bigl( \hat{Q}(S,A) - b(S) \bigr) \Bigr] \\
	&= \mathbb{E}_\pi \Bigl[\nabla_\theta \ln \pi_\theta(A \mid S) \: \hat{Q}(S,A) \Bigr] - \mathbb{E}_\pi \Bigl[\nabla_\theta \ln \pi_\theta(A \mid S) \: b(S) \Bigr]
\end{align*}
using the linearity of the expectation. Now, we show that the second part is 0. Using the Leibniz integral rule, we have that 
\begin{align*}
	\mathbb{E}_{S \sim d^{\pi_\theta}, A \sim \pi_\theta} \Bigl[\nabla_\theta \ln \pi_\theta(A \mid S) \: b(S) \Bigr] &= \int_{s \in \mathcal{S}} d^\pi(s) \int_{a \in \mathcal{A}} \pi_\theta(a \mid s) \: \nabla_\theta \ln \pi_\theta(a \mid s) \: b(s) \: da \: ds \\
	&= \int_{s \in \mathcal{S}} d^\pi(s) \: b(s) \int_{a \in \mathcal{A}} \pi_\theta(a \mid s) \: \nabla_\theta \ln \pi_\theta(a \mid s) \: da \: ds \\
	&= \int_{s \in \mathcal{S}} d^\pi(s) \: b(s) \int_{a \in \mathcal{A}} \pi_\theta(a \mid s) \frac{\nabla_\theta \pi_\theta(a \mid s)}{\pi_\theta(a \mid s)} \: da \: ds \\
	&= \int_{s \in \mathcal{S}} d^\pi(s) \: b(s) \: \nabla_\theta \int_{a \in \mathcal{A}} \pi_\theta(a \mid s) \: da \: ds \\
	&= \int_{s \in \mathcal{S}} d^\pi(s) \: b(s) \: \nabla_\theta \: 1 \: ds \\
	&= 0
\end{align*}
since \(\pi(\cdot \mid s)\) is a probability distribution over actions.
Thus, subtracting an action-independent baseline \(b\) from an action-value function estimator \(\hat{Q}\) does indeed not add any bias to the gradient estimate. While here we have shown this for a baseline which only depends on the current state, this result can be extended to baselines which depend on the current and all subsequent states \cite{schulman2015high}.

Next, we analyze the effect on the variance of the gradient estimates. Here, we only provide an approximate explanation,
see \cite{greensmith2004variance} for a more thorough analysis which derives bounds of the true variance. We can compute the variance using \(\mathrm{Var}[X] = \mathbb{E}[X^2] - \mathbb{E}[X]^2\). Due to the above, \(\mathbb{E}[X]^2\) is independent of the baseline in our case. This yields
\begin{align*}
	\argmin_b \mathrm{Var}_\pi \Bigl[\nabla_\theta \ln \pi_\theta(A \mid S) \bigl( \hat{Q}(S,A) - b(S) \bigr) \Bigr] 
	&= \argmin_b \mathbb{E}_\pi\Bigl[ \Bigl( \nabla_\theta \ln \pi_\theta(A \mid S) \bigl( \hat{Q}(S,A) - b(S) \bigr) \Bigr)^2 \Bigr] \\
	&\approx \argmin_b \biggl( \mathbb{E}_\pi\bigl[ \bigl( \nabla_\theta \ln \pi_\theta(A \mid S) \bigr)^2 \bigr] \cdot \mathbb{E}_\pi \bigl[ \bigl( \hat{Q}(S,A) - b(S) \bigr)^2 \bigr] \biggr),
\end{align*}
where we approximated the variance by assuming independence of the two terms in the second step. Under this approximation, the variance of sampled gradients can be minimized by minimizing \(\mathbb{E}_\pi \bigl[ \bigl( \hat{Q}(S,A) - b(S) \bigr)^2 \bigr]\). This is a common least squares problem resulting in the optimal choice of \(b(s) = \mathbb{E}_\pi[\hat{Q}(s,A)]\) (see Theorem \ref{th:least_squares}). This result indicates that an appropriately chosen baseline can potentially significantly reduce variance of the gradients. Using this choice for the baseline, we would like to compute gradients for sampled states and actions as
\begin{align*}
	\nabla_\theta \ln \pi_\theta(a \mid s) \bigl( Q_\pi(s,a) - \mathbb{E}_{A \sim \pi_\theta}[Q_\pi(s,A)] \bigr) &= \nabla_\theta \ln \pi_\theta(a \mid s) \bigl( Q_\pi(s,a) - V_\pi(s) \bigr) \\
	&= \nabla_\theta \ln \pi_\theta(a \mid s) A_\pi(s,a).
\end{align*}
Here, we used the relation of the value function \(V_\pi\) to \(Q_\pi\) and the definition of the advantage function \(A_\pi\). Despite our approximations, this choice of a baseline turns out to yield almost the lowest possible variance of the gradients \cite{schulman2015high}. However, note that in practice the advantage function must also be estimated. Learning this estimate typically introduces bias \cite{konda2003onactor, sutton1999policy}.

% Not sure if we want to keep this part
%With the introduction of baselines, we can generalize the expression for the policy gradients from equation \eqref{eq:pg_theorem} given by the Policy Gradient Theorem as in \cite{schulman2015high} to
%\[\nabla_\theta J(\theta) = \eta \: \mathbb{E}_{\pi_\theta} \Bigl[ \Psi(S,A) \: \nabla_\theta \ln \pi_\theta(A \mid S) \Bigr] \]
%where \(\Psi(s,a)\) may be any of the following:
%\begin{align*}
%	&1. \quad Q_\pi(s,a) & &2. \quad A_\pi(s,a) \\
%	&3. \quad \sum^T_{t=0} \bigl( r_t - b(s_t) \bigr) & &4. \quad r_t + V_\pi(s') - V_\pi(s).
%\end{align*}

\subsection{Importance Sampling}\label{sec:importance}

Importance sampling is a technique to calculate expectations under one distribution given samples from another \cite{rubinstein1981simulation, hesterberg1988advances, Sutton1998}. Traditionally, this is only needed in off-policy RL, where we sample transitions using a behavior policy \(\beta\) but want to calculate expectations over the target policy \(\pi\). However, in some implementations of on-policy algorithms the policy may be updated before all data generated by it is processed. This makes these implementations slightly off-policy and thus importance sampling becomes relevant even for theoretically on-policy algorithms \cite{weng2018PG}. We build our presentation of importance sampling on \cite{Sutton1998}, Section 5.5. 

 Given a behavior policy \(\beta\), we want to estimate the value function \(V_\pi\) of our target policy \(\pi\).
Generally, we have
\[ V_\beta(s) = \mathbb{E}_{\beta} \bigl[G_t \mid S_t = s \bigr] \neq V_\pi(s). \]
We can calculate the probability of a trajectory \((a_t, s_{t+1}, a_{t+1}, \hdots, a_{T-1}, s_T)\) under any policy \(\pi\) as 
\begin{equation*}
	\prod^{T-1}_{k=t} \pi(a_k \mid s_k) P(s_{k+1} \mid s_k, a_k). 
\end{equation*}
Now, we can define the importance sampling ratio.

\begin{definition}
	(Importance Sampling Ratio) Given a target policy \(\pi\), a behavior policy \(\beta\) and a trajectory \(\tau = (a_t, s_{t+1}, a_{t+1}, \hdots, s_T)\) generated by \(\beta\), the importance sampling ratio is defined as 
	\begin{equation*}
		\rho_{t:T-1} \coloneqq \frac{\prod^{T-1}_{k=t} \pi(a_k \mid s_k) P(s_{k+1} \mid s_k, a_k)}{\prod^{T-1}_{k=t} \beta(a_k \mid s_k) P(s_{k+1} \mid s_k, a_k)} = \frac{\prod^{T-1}_{k=t} \pi(a_k \mid s_k) }{\prod^{T-1}_{k=t} \beta(a_k \mid s_k)}.
		\label{eq:importance_sampling}
	\end{equation*}
\end{definition}

Let \(\mathcal{T}\) be the set of possible trajectories. By multiplying returns of trajectories \(\tau \in \mathcal{T}\) generated by the behavior policy \(\beta\) with the importance sampling ratio \(\rho\) we get
\begin{align*}
	\mathbb{E}_{\beta} \bigl[\rho_{t:T-1} G_t \mid S_t = s \bigr] &= 
	\mathbb{E}_{\beta} \bigl[\rho_{t:T-1} G(\tau) \mid S_t = s \bigr] \\
	&= \sum_{\tau \in \mathcal{T}} \rho_{t:T-1} G(\tau) \prod^{T-1}_{k=t} \beta(a_k \mid s_k) \: P(s_{k+1} \mid s_k, a_k)  \\
	&= \sum_{\tau \in \mathcal{T}} \frac{\prod^{T-1}_{k=t} \pi(a_k \mid s_k) }{\prod^{T-1}_{k=t} \beta(a_k \mid s_k)} G(\tau)\prod^{T-1}_{k=t} \beta(a_k \mid s_k) \: P(s_{k+1} \mid s_k, a_k)  \\
	&= \sum_{\tau \in \mathcal{T}} G(\tau) \prod^{T-1}_{k=t} \frac{\pi(a_k \mid s_k) }{\beta(a_k \mid s_k)} \beta(a_k \mid s_k) \: P(s_{k+1} \mid s_k, a_k)  \\
	&= \sum_{\tau \in \mathcal{T}} G(\tau)\prod^{T-1}_{k=t} \pi(a_k \mid s_k) P(s_{k+1} \mid s_k, a_k) \\
	&= \mathbb{E}_{\pi} \bigl[G_t \mid S_t = s \bigr] \\
	&= V_\pi(s).
\end{align*}
The intuition behind this importance sampling correction is that, to evaluate \(\pi\), we want to weigh returns more heavily that are more likely under \(\pi\) than under \(\beta\) and vice versa. As an extension of the derivation above, we also get the per-decision importance sampling ratio \(\rho \coloneqq \frac{\pi(a \mid s)}{\beta(a \mid s)}\) \cite{Sutton1998}.

Using importance sampling, we can derive the following approximate policy gradients of the target policy \(\pi_\theta\) in an off-policy setting with behavior policy \(\beta\):
\begin{equation*}
	\nabla_\theta J(\theta) \approx \eta \: \mathbb{E}_{S \sim d^{\beta}, A \sim \beta} \biggl[ \frac{\pi_\theta(A \mid S)}{\beta(A \mid S)} \: Q_{\pi_\theta}(S,A) \: \nabla_\theta \ln \pi_\theta(A \mid S)  \biggr].
\end{equation*}
% Do we want to proof this here / appendix??
See \cite{degris2012off} for a proof. Note that \(\eta\) now is the average episode length under \(\beta\).

% algorithms
\section{Policy Gradient Algorithms}\label{sec:algorithms}

Building on Theorem \autoref{pg_theorem}, several policy gradient algorithms have been proposed, which compute sample-based estimates \(\hat{\nabla}_\theta J(\theta)\) of the actual policy gradients \(\nabla_\theta J (\theta)\). This is done by constructing surrogate objectives \(J_*\) such that \(\hat{\nabla}_\theta J(\theta) = \nabla_\theta J_*(\theta)\). Additionally, most algorithms focus on stabilizing learning by regularizing the policy \cite{andrychowicz2020matters} and reducing the variance of \(\hat{\nabla}_\theta J(\theta)\) \cite{sutton2000comparing}. In this section, we derive the most prominent\footnote{As determined by their impact on subsequent research and the adoption rate by users.} algorithms before than comparing them in the final subsection. 

\subsection{REINFORCE}

% intro: earliest pg, pg theorem implicit
% idea/motivation: straightforward application, monte-carlo estimate of q
% gradient estimator: monte-carlo methods to sample return
% derivation: unbiased estimator
% pseudocode 
% implementation details / further techniques: --------

REINFORCE (\textbf{RE}ward Increment = \textbf{N}on-negative \textbf{F}actor \(\times\) \textbf{O}ffset \textbf{R}einforcement \(\times\) \textbf{C}haracteristic \textbf{E}ligibility) \cite{williams1992simple} is the earliest policy gradient algorithm. While this algorithm precedes the formulation of the Policy Gradient Theorem, REINFORCE can be seen as a straightforward application of it. By using Monte Carlo methods \cite{Sutton1998} to estimate \(Q_\pi\) in Equation \eqref{eq:pg_proportional}, i.e. by sampling entire episodes to compute the sample returns \(G_t = \sum_{k=0}^{T} \gamma^k r_{t+k+1}\), REINFORCE samples policy gradients
\begin{equation*}
	\hat{\nabla}_\theta J(\theta) = G_t \nabla_\theta \ln \pi_\theta (a_t \mid s_t).
\end{equation*}
Using the generic policy gradient update from Equation \eqref{eq:pg_update} results in the gradient ascend updates
\[\theta_\text{new} = \theta + \alpha G_t \nabla_\theta \ln \pi_\theta (a_t \mid s_t)\]
 where \(\alpha \in (0,1]\) is the learning rate determining the step size of the gradient steps and is set as a hyperparameter. At times, REINFORCE is extended by subtracting some baseline value from \(G_t\) to reduce variance \cite{williams1992simple}. The pseudocode for REINFORCE is presented in Algorithm \ref{alg:reinforce}. 

\begin{algorithm}
	\caption{REINFORCE}\label{alg:reinforce}
	\begin{algorithmic}
	\Require $\alpha \in (0,1], \gamma \in [0,1]$
	\State Initialize $\theta$ at random 
	\ForAll{episodes}
	\State Generate trajectory $s_0, a_0, r_1, s_1 \hdots, s_T$ under policy $\pi_\theta$
	\For{$t = 1, \hdots, T$}
	\State $G_t \gets \sum_{k=t}^T \gamma^{k-t} r_k$ \Comment{estimate expected return $Q_\pi$}
	\State $\theta \gets \theta + \alpha G_t \nabla_\theta \ln \pi_\theta(a_t \mid s_t)$ \Comment{update policy parameters}
	\EndFor
	\EndFor
	\end{algorithmic}
	\end{algorithm}
\subsection{A3C}

% intro 
% idea/motivation: learns value function (critic)
% gradient estimator: one step TD estimate of advantage function
% derivation: baseline reference
% pseudocode 
% implementation details / further techniques: critic update, asynchronous workers, shared vs non-shared network

Instead of estimating \(Q_\pi\) directly via sampling as in REINFORCE, we can alternatively learn such an estimate via function approximation. Algorithms that use this approach to learn a parameterized action-value function \(\hat{Q}_{\phi}\) or value function \(\hat{V}_{\phi}\) (called critic) with parameters \(\phi\) in addition to learning the parameterized policy \(\pi_{\theta}\) (called actor) are referred to as actor-critic algorithms \cite{Sutton1998}. Note that in practice the actor and the critic may also share parameters.

The most archetypical representative of this class of algorithms is Asynchronous Advantage Actor-Critic (A3C) \cite{mnih2016asynchronous}. A3C builds on two main ideas from which the algorithm's name originates. First, as suggested by the results from Section \ref{sec:baseline}, A3C learns an estimate \(\hat{A}_{\phi}\) of the advantage function indirectly by learning an estimate \(\hat{V}_{\phi}\) of the value function. Second, A3C introduces the concept of using multiple parallel actors to interact with the environment to stabilize training. We will discuss both ideas in detail below. The algorithm samples policy gradients 
\begin{equation*}
	\hat{\nabla}_{\theta}J(\theta) = \frac{1}{\lvert \mathcal{D} \rvert} \sum_{s,a \in \mathcal{D}} \hat{A}_\phi(s,a) \nabla_{\theta} \ln \pi_{\theta} (a \mid s),
\end{equation*}
where \(\mathcal{D}\) is a batch of transitions collected by the actors. The pseudocode for A3C is presented in Algorithm \ref{alg:a3c}.

In the original work \cite{Mnih2015}, the advantage function is estimated via 
\begin{equation}
	\hat{A}_{\phi}(s_t, a_t) = \Bigl(\sum^{k-1}_{i=0} \gamma^i r_{t+i} + \gamma^k\hat{V}_{\phi}(s_{t+k})\Bigr) - \hat{V}_{\phi}(s_{t}). \label{eq:a3c_adv}
\end{equation}
To understand this estimate, observe that
\begin{align*}
	A_\pi(s_t,a_t) &= Q_\pi(s_t, a_t) - V_\pi(s_t) \\
	&= \mathbb{E}_\pi\Bigl[ R_{t+1} + \gamma V_\pi(S_{t+1}) \mid S_t = s_t, A_t = a_t \Bigl] - V_\pi(s_t) \\
	&= \mathbb{E}_\pi\Bigl[ R_{t+1} + \gamma R_{t+2} + \gamma^2 V_\pi(S_{t+2}) \mid S_t = s_t, A_t = a_t \Bigl] - V_\pi(s_t) \\
	&\;\;\vdots \\
	&= \mathbb{E}_\pi\Bigl[ \sum^{k-1}_{i=0} \gamma^i R_{t+i} + \gamma^k V_\pi(S_{t+k}) \mid S_t = s_t, A_t = a_t \Bigl] - V_\pi(s_t),
\end{align*}
for any \( k \in \mathbb{N}\), which follows from the definition of the value and action-value functions as well as their relationship. Sampling this n-step temporal difference \cite{Sutton1998} expression and replacing \(V_\pi\) with our learned \(\hat{V}_{\phi}\) yields Equation \eqref{eq:a3c_adv}. Simultaneously to updating \(\pi_{\theta}\), we learn \(\hat{V}_{\phi}\) by minimizing the mean squared error loss
\begin{equation*}
	\frac{1}{\lvert \mathcal{D} \rvert} \sum_{\mathcal{D}} \biggl( \Bigl( \sum^{k-1}_{i=0} \gamma^i r_{t+i} + \gamma^k\hat{V}_{\phi}(s_{t+k})\Bigr) - \hat{V}_{\phi}(s_{t}) \biggr)^2 \label{eq:a3c_value_loss}
\end{equation*}
over \(\phi\) via SGD. Note that the inner expression is identical to the right hand side in Equation \eqref{eq:a3c_adv}. In Equation \eqref{eq:a3c_adv}, we compute the difference between the estimated return when choosing action \(a_t\) in state \(s_t\) and the estimated return when in state \(s_t\), under policy \(\pi\) respectively. However, \(a_t\) is sampled from \(\pi\) such that in expectation this difference should be 0 for the true value function \(V_\pi\). Hence, we minimize this squared difference to optimize \(\phi\) by treating the first term, \(\sum^{k-1}_{i=0} \gamma^i r_{t+i} + \gamma^k\hat{V}_{\phi}(s_{t+k})\), as independent of \(\phi\).

The use of multiple parallel actors is justified as follows. Deep RL is notoriously unstable, which was first resolved by off-policy algorithms using replay buffers that store and reuse sampled transitions for multiple updates \cite{Mnih2015}. As an alternative, \cite{mnih2016asynchronous} propose using several actors \(\pi^{(1)}_\theta, \hdots, \pi^{(k)}_\theta\) to decrease noise by accumulating the gradients over multiple trajectories. These accumulated gradients are applied to a centrally maintained copy of \(\theta\), which is then redistributed to each actor. By doing this asynchronously, each actor has a potentially unique set of parameters at any point in time compared to the other actors. This decreases the correlation of the sampled trajectories across actors, which can further stabilize learning.

\begin{algorithm}
	\caption{A3C}\label{alg:a3c}
	\begin{algorithmic}
	\Require $n \in \mathbb{N}$, $\alpha \in (0,1], \gamma \in [0,1]$, $t_\mathrm{MAX} \in \mathbb{N}$, $T_\mathrm{MAX} \in \mathbb{N}$
	\State Initialize $\theta$ and $\phi$ at random 
	\For{$i = 1, \hdots, n$} \Comment{in parallel}
	\While{$T \leq T_\mathrm{MAX}$}
	\State $d\theta \gets 0, \ d\phi \gets 0$ \Comment{reset gradients}
	
	\State $\theta^{(i)} \gets \theta, \ \phi^{(i)} \gets \phi$ \Comment{synchronize parameters on actors}
	
	\State $s_t \sim p_0$ \Comment{sample start state}
	\State $t_\mathrm{start} \gets t$ 
	\While{$s_t$ not terminal and $t - t_\mathrm{start} \leq t_\mathrm{MAX}$}
	\State $a_t \sim \pi_{\theta^{(i)}}$ \Comment{sample action}
	\State $s_{t+1}, r_{t+1} \sim P(s_t, a_t)$ \Comment{sample next state and reward}
	\State $t \gets t + 1, \ T \gets T + 1$
	\EndWhile
	\State $ R \gets \begin{cases}
 	0 & \text{if }s_t \text{ is terminal} \\ V_{\phi^{(i)}}(s_t) & \text{else}
 \end{cases}$ \Comment{bootstrap if necessary}
	\For{$j = t - 1, \hdots, t_\mathrm{start}$}
	\State $R \gets r_j + \gamma R$
	\State $A = R - V_{\phi^{(i)}}(s_j)$
	\State $d\theta \gets d\theta + \nabla_{\theta^{(i)}} \ln \pi_{\theta^{(i)}} (a_j \mid s_j)A$ \Comment{accumulate gradients}
	\State $d\phi \gets d\phi + \nabla_{\phi^{(i)}} (R - V_{\phi^{(i)}}(s_j))^2$ \Comment{accumulate gradients}
	\EndFor
	\State update $\theta$ and $\phi$ using $d\theta$ and $d\phi$ via gradient ascent / descent
	\EndWhile
	\EndFor
	\end{algorithmic}
\end{algorithm}
	
As a final implementation detail, the policy loss function of A3C, from which the policy gradients are obtained, is typically augmented with an entropy bonus for the policy. Thus, the policy gradients become 
\begin{equation*}
	\hat{\nabla}_{\theta} J(\theta) = \frac{1}{\lvert \mathcal{D} \rvert} \sum_{\mathcal{D}} \Biggl( \biggl( \sum^{k-1}_{i=0} \gamma^i r_{t+i} + \gamma^k\hat{V}_{\phi}(s_{t+k}) - \hat{V}_{\phi}(s_{t}) \biggr) \nabla_{\theta} \ln \pi_{\theta} (a_t \mid s_t) + \beta \nabla_{\theta} H(\pi_{\theta}(\cdot \mid s_t)) \Biggr),
\end{equation*}
where \(H\) is the entropy (see Definition \ref{def:entropy}) and the entropy coefficient \(\beta\) is a hyperparameter. This entropy bonus, first proposed by \cite{williams1991function}, regularizes the policy such that it does not prematurely converges to a suboptimal policy. By rewarding entropy, the policy is encouraged to spread probability mass over actions which improves exploration \cite{mnih2016asynchronous}.

\subsection{TRPO}

% intro 
% idea/motivation: avoid large policy changes to stabilize training --> limit KL divergence
% gradient estimator: subject KL constraint
% derivation: theoretical justification for trust region (guarantees)
% pseudocode 
% implementation details / further techniques: estimate advantage function, off-policy correction (importance sampling), implementation details of constraint

Excessively large changes in the policy can result in instabilities during the training of RL algorithms. Even small changes in policy parameters \(\theta\) can lead to significant changes in the resulting policy and its performance. Hence, small step sizes during gradient ascent cannot fully remedy this problem and would impair the sample efficiency of the algorithm \cite{SpinningUp2018}. Trust Region Policy Optimization (TRPO) \cite{schulman2015trust} mitigates these issues by imposing a trust region constraint on the Kullback-Leibler (KL) divergence (see Definition \ref{def:kl}) between consecutive policies. In addition, TRPO uses an off-policy correction through importance sampling as discussed in Section \ref{sec:importance} to account for the interleaved optimization and collection of transitions. 

TRPO samples gradients
\begin{equation*}
	\hat{\nabla}_{\theta}J(\theta) = \frac{1}{\lvert \mathcal{D} \rvert} \sum_{s,a \in \mathcal{D}} \hat{A}_\phi(s,a) \nabla_{\theta} \frac{\pi_\theta (a \mid s)}{\pi_\text{old} (a \mid s)} \label{eq:trpo_grad}
\end{equation*}
and postprocesses them as detailed below to solve the approximate trust region optimization problem
\begin{align*}
	\begin{split}
	\max_\theta &\quad \biggl(J_\text{TRPO}(\theta) = \mathbb{E}_{S \sim d^{\pi_\text{old}}, A \sim \pi_{\text{old}}} \biggl[ \hat{A}_\phi(S, A) \frac{ \pi_{\theta} (A \mid S)}{\pi_{\text{old}} (A \mid S)} \biggr]\biggr) \\
	\text{subject to} &\quad \mathbb{E}_{S \sim d^{\pi_\text{old}}} \bigl[ D_{KL} (\pi_{\text{old}}(\cdot \mid S) \: \Vert \: \pi_\theta( \cdot \mid S)) \bigr] \leq \delta \label{eq:trpo_objective}
	\end{split}
\end{align*}
where \(\pi_\text{old} = \pi_{\theta_\text{old}}\) is the previous policy and \(\theta_\text{old}\) the corresponding parameters. This optimization problem is an approximation to an objective with convergence guarantees, which we will show in the following. We start by presenting \cite{schulman2015trust}'s main theoretical result. Consider the objective of maximizing the expected return \(\mathbb{E}_{S_0 \sim p_0, \pi} \bigl[ G_0 \bigr]\) under policy \(\pi\), which we denote as \(\eta(\pi)\) here. Let \(L_\pi\) be the following local approximation of \(\eta\):
\[L_\pi(\tilde{\pi}) = \eta(\pi) + \int_{s \in \mathcal{S}} d^{\pi}(s) \int_{a \in \mathcal{A}} \tilde{\pi} (a \mid s) A_{\pi} (s,a) \: da \: ds,\]
with \(L_{\pi_\theta}(\pi_\theta) = \eta(\pi_\theta)\) and \(\nabla_\theta L_{\pi_{\theta_0}}(\pi_\theta)|_{\theta = \theta_0} = \nabla_\theta \eta(\pi_\theta)|_{\theta = \theta_0}\) \cite{kakade2002approximately}. Based on the total variation divergence \(D_{TV}\) (see Definition \ref{def:tv_div}), we define 
\[D^{max}_{TV}(\pi, \tilde{\pi}) \coloneqq \max_{s \in \mathcal{S}} D_{TV} (\pi(\cdot \mid s) \Vert \tilde{\pi}( \cdot \mid s)).\] Then, we have \cite{schulman2015trust}:

\begin{theorem}\label{th:trpo_theorem}
	Let \(\alpha = D^{max}_{TV}(\pi_\text{old}, \pi_\text{new})\), then
	\begin{equation}
		\eta(\pi_\text{new}) \geq L_{\pi_\text{old}}(\pi_\text{new}) - \frac{4\varepsilon\gamma}{(1 - \gamma)^2} \alpha^2, \label{eq:trpo_theorem}
	\end{equation}
	where \(\varepsilon = \max_{s \in \mathcal{S},a \in \mathcal{A}} \lvert A_\pi(s,a)\rvert\).
\end{theorem}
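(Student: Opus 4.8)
The plan is to control the gap $\eta(\pi_\text{new}) - L_{\pi_\text{old}}(\pi_\text{new})$ by bounding how far the discounted state-visitation distributions of the two policies can drift apart. Writing $\pi = \pi_\text{old}$ and $\tilde{\pi} = \pi_\text{new}$ for brevity, I would first establish the performance difference lemma \cite{kakade2002approximately}: for any two policies,
\[ \eta(\tilde{\pi}) = \eta(\pi) + \sum_{t=0}^{\infty} \gamma^t \, \mathbb{E}_{S_t \sim \tilde{\pi}}\bigl[ \bar{A}(S_t) \bigr], \quad \text{where } \bar{A}(s) \coloneqq \mathbb{E}_{A \sim \tilde{\pi}}\bigl[ A_\pi(s, A) \bigr], \]
which follows by telescoping $A_\pi(s_t,a_t) = \mathbb{E}[R_{t+1} + \gamma V_\pi(S_{t+1})] - V_\pi(s_t)$ along a trajectory drawn under $\tilde{\pi}$ and taking expectations. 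By the identical computation, $L_{\pi_\text{old}}(\pi_\text{new})$ equals the same expression but with each $S_t$ distributed according to $\pi$ instead of $\tilde{\pi}$. Thus the quantity I must bound reduces to $\sum_{t} \gamma^t \bigl( \mathbb{E}_{S_t \sim \tilde{\pi}}[\bar{A}(S_t)] - \mathbb{E}_{S_t \sim \pi}[\bar{A}(S_t)] \bigr)$.

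The crucial pointwise estimate is $\lvert \bar{A}(s) \rvert \leq 2\alpha\varepsilon$. Since $\mathbb{E}_{A\sim\pi}[A_\pi(s,A)] = 0$, I can rewrite $\bar{A}(s) = \int_{a\in\mathcal{A}} \bigl( \tilde{\pi}(a\mid s) - \pi(a\mid s) \bigr) A_\pi(s,a) \: da$; bounding $\lvert A_\pi \rvert$ by $\varepsilon$ and recognizing the remaining integral as twice the total variation distance $D_{TV}(\pi(\cdot\mid s) \Vert \tilde{\pi}(\cdot\mid s)) \leq \alpha$ gives the claim.

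To convert this per-state bound into a bound on the visitation gap I would use a maximal coupling of the two trajectories: by the coupling characterization of total variation, at every visited state the two policies can be made to choose the same action with probability at least $1 - D_{TV}(\pi(\cdot\mid s)\Vert\tilde{\pi}(\cdot\mid s)) \geq 1-\alpha$. Hence the two trajectories coincide through time $t$ with probability at least $(1-\alpha)^t$, and conditional on this event $S_t$ is the same under both policies, so its contribution cancels in the difference. Only the divergence event, of probability at most $1-(1-\alpha)^t$, survives, and there the two conditional expectations of $\bar{A}$ differ by at most $2\max_s\lvert\bar{A}(s)\rvert \leq 4\alpha\varepsilon$. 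This yields
\[ \bigl\lvert \eta(\pi_\text{new}) - L_{\pi_\text{old}}(\pi_\text{new}) \bigr\rvert \leq 4\alpha\varepsilon \sum_{t=0}^{\infty} \gamma^t \bigl( 1 - (1-\alpha)^t \bigr). \]
Evaluating the geometric series as $\frac{1}{1-\gamma} - \frac{1}{1-\gamma(1-\alpha)} = \frac{\gamma\alpha}{(1-\gamma)(1-\gamma+\gamma\alpha)} \leq \frac{\gamma\alpha}{(1-\gamma)^2}$ then produces the factor $\frac{4\varepsilon\gamma}{(1-\gamma)^2}\alpha^2$, and dropping the absolute value gives the one-sided inequality \eqref{eq:trpo_theorem}.

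I expect the main obstacle to be making the coupling rigorous in the continuous setting: constructing the joint trajectory process explicitly, verifying that conditioning on ``no divergence up to $t$'' genuinely forces the two $\bar{A}(S_t)$ terms to cancel, and controlling the conditional expectation on the divergence event. A secondary technical point is justifying the performance difference lemma itself \gs interchanging the infinite sum with the state integrals \gs which is safe here because bounded rewards and bounded advantages together with $\gamma < 1$ make every series absolutely convergent.
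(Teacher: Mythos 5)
Your proposal is correct and follows essentially the same route as the argument this paper relies on: the paper itself states Theorem \ref{th:trpo_theorem} without proof and defers to the appendix of the TRPO paper, and your proof reproduces exactly that argument \gs the performance difference lemma of Kakade and Langford, the pointwise bound \(\lvert \bar{A}(s)\rvert \le 2\alpha\varepsilon\) via total variation, the maximal-coupling argument yielding the \(1-(1-\alpha)^t\) divergence probability, and the geometric-series evaluation giving \(\frac{4\varepsilon\gamma}{(1-\gamma)^2}\alpha^2\). All steps check out, including the series identity \(\sum_{t}\gamma^t\bigl(1-(1-\alpha)^t\bigr) = \frac{\gamma\alpha}{(1-\gamma)(1-\gamma+\gamma\alpha)} \le \frac{\gamma\alpha}{(1-\gamma)^2}\).
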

See the appendix in \cite{schulman2015trust} for a proof. By using the relationship between total variation divergence and KL divergence \(D_{TV}(\pi \Vert \tilde{\pi})^2 \leq D_{KL}(\pi \Vert \tilde{\pi})\) \cite{pollard2000asymptopia} and setting \(D^{max}_{KL}(\pi, \tilde{\pi}) \coloneqq \max_{s \in \mathcal{S}} D_{KL} (\pi(\cdot \mid s) \Vert \tilde{\pi}( \cdot \mid s))\) and \(C = \frac{4\varepsilon\gamma}{(1-\gamma)^2}\), we derive the following lower bound for the objective from Equation \eqref{eq:trpo_theorem}:
\begin{equation}
	\eta(\pi_\text{new}) \geq L_{\pi_\text{old}}(\pi_\text{new}) - C D^{max}_{KL}(\pi_\text{old}, \pi_\text{new}) \label{eq:trpo_theory}
\end{equation}
Iteratively maximizing the right-hand side yields a sequence of policies \(\pi_i, \pi_{i+1}, \pi_{i+2}, \hdots\) with the monotonic improvement guarantee \(\eta(\pi_i) \leq \eta(\pi_{i+1}) \leq \eta(\pi_{i+2}) \leq \hdots\). This is  because we have equality in \eqref{eq:trpo_theory} for \(\pi_\text{new} = \pi_\text{old}\) and hence 
\[\eta(\pi_{i+1}) - \eta(\pi_i) \geq \Bigl( L_{\pi_i}(\pi_{i+1}) - C D^{max}_{KL}(\pi_i, \pi_{i+1}) \Bigr) - \Bigl( L_{\pi_i}(\pi_{i}) - C D^{max}_{KL}(\pi_i, \pi_{i}) \Bigr), \]
which is non-negative as we maximize over \(\pi\) each iteration. Thus, we could construct a Minorization-Maximization-type algorithm \cite{hunter2004tutorial} which maximizes the right-hand side of Inequality \eqref{eq:trpo_theory} at each iteration and is thereby guaranteed to converge to an optimum as the objective is bounded. 

Such an algorithm would be impractical as it requires evaluating the advantage function at every point in the state-action product space \(\mathcal{S} \times \mathcal{A}\) and the KL penalty at every point in the state space \(\mathcal{S}\). Hence, \cite{schulman2015trust} apply several approximations to the objective stemming from Inequality \eqref{eq:trpo_theory}. We replace the KL penalty, which would yield restrictively small step sizes given by \(C\), by a trust region constraint:
\begin{align*}
	\begin{split}
	\max_\theta &\quad L_{\pi_\text{old}}(\pi_\theta) \\
	\text{subject to} &\quad D^\text{max}_{KL} (\pi_{\text{old}}, \pi_\theta) \leq \delta.
	\end{split}
\end{align*}
To avoid computing \(D^\text{max}_{KL}\), we use the average KL divergence
\[\bar{D}^{\pi_\text{old}}_{KL}(\pi \Vert \tilde{\pi}) \coloneqq \mathbb{E}_{S \sim d^{\pi_\text{old}}} \bigl[ D_{KL} (\pi(\cdot \mid S) \Vert \tilde{\pi}( \cdot \mid S)) \bigr]\]
between policies as heuristic constraint, which we can sample. Further, we rewrite the surrogate objective \(\max_\theta L_{\pi_\text{old}}(\pi_\theta)\) as an expectation over the old policy \(\pi_{\text{old}}\) via importance sampling. Note that \(\eta(\pi_\text{old})\) is a constant w.r.t \(\theta\):
\begin{align*}
	\argmax_\theta L_{\pi_\text{old}}(\pi_\theta) &= \argmax_\theta \biggl( \eta(\pi_\text{old}) + \int_{s \in \mathcal{S}} d^{\pi_{\text{old}}}(s) \int_{a \in \mathcal{A}} \pi_\theta (a \mid s) A_{\pi_{\text{old}}} (s,a) \: da \: ds \biggr) \\
	&= \argmax_\theta \int_{s \in \mathcal{S}} d^{\pi_{\text{old}}}(s) \int_{a \in \mathcal{A}} \pi_\theta (a \mid s) A_{\pi_{\text{old}}} (s,a) \: da \: ds \\
	&= \argmax_\theta \int_{s \in \mathcal{S}} d^{\pi_{\text{old}}}(s) \int_{a \in \mathcal{A}} \frac{\pi_{\text{old}} (a \mid s)}{\pi_{\text{old}} (a \mid s)} \pi_\theta (a \mid s) A_{\pi_{\text{old}}} (s,a) \: da \: ds \\
	&= \argmax_\theta \mathbb{E}_{S \sim d^{\pi_\text{old}}, A \sim \pi_{\text{old}}} \biggl[ \frac{\pi_{\theta} (A \mid S)}{\pi_{\text{old}} (A \mid S)} A_{\pi_{\text{old}}} (S,A) \biggr].
\end{align*}
Using these modifications, we are now left with solving the trust region problem
\begin{align}
	\begin{split}
	\max_\theta &\quad \mathbb{E}_{S \sim d^{\pi_\text{old}}, A \sim \pi_{\text{old}}} \biggl[ \frac{\pi_{\theta} (A \mid S)}{\pi_{\text{old}} (A \mid S)} A_{\pi_{\text{old}}} (S,A) \biggr] \\
	\text{subject to} &\quad \mathbb{E}_{S \sim d^{\pi_\text{old}}} \Bigl[ D_{KL} (\pi_{\text{old}}(\cdot \mid S) \Vert \pi_\theta( \cdot \mid S)) \Bigr] \leq \delta. \label{eq:trpo_approx_problem}
	\end{split}
\end{align}
To approximately solve this constrained problem, \cite{schulman2015trust} use backtracking line search, where the search direction is computed by Taylor-expanding (see Theorem \ref{th:taylor}) the objective function and the constraint. Let \(g = \nabla_\theta \mathbb{E}_{S \sim d^{\pi_\text{old}}, A \sim \pi_{\text{old}}} \Bigl[ \frac{\pi_{\theta} (A \mid S)}{\pi_{\text{old}} (A \mid S)} A_{\pi_{\text{old}}} (S,A) \Bigr] \). Approximating \(L_{\pi_\text{old}}(\pi_\theta)\) to first order around \(\theta_\text{old}\) yields 
\[ L_{\pi_\text{old}}(\pi_\theta) \approx g^\mathsf{T}(\theta - \theta_\text{old}), \]
where we again ignored the constant \(\eta(\pi_{\text{old}})\). The quadratic approximation of the constraint at \(\theta_\text{old}\) is 
\[\bar{D}^{\pi_\text{old}}_{KL}(\pi \Vert \tilde{\pi})  \approx \frac{1}{2} (\theta - \theta_\text{old})^\mathsf{T}H(\theta - \theta_\text{old}),\]
where \(H\) is the Fisher information matrix, which is estimated via 
\[\hat{H}_{i,j} = \frac{1}{\lvert \mathcal{D} \rvert} \sum_{s \in \mathcal{D}} \frac{\partial^2}{\partial \theta_i \partial \theta_j} D_{KL} (\pi_\text{old} (\cdot \mid s) \Vert \pi (\cdot \Vert s)), \]
albeit the full matrix is not required.
We solve the resulting approximate optimization problem analytically using Lagrangian duality methods \cite{boyd2004convex} leading to 
\[\theta_\text{new} = \theta_\text{old} + \sqrt{\frac{2\delta}{g^\mathsf{T}\hat{H}^{-1}g}}\hat{H}^{-1}g. \]
However, due to the Taylor approximations, this solution may not satisfy the original trust region constraint or may not improve the surrogate objective of Problem \eqref{eq:trpo_approx_problem}. Therefore, TRPO employs backtracking line search along the search direction \(H^{-1}g\) with search parameter \(\beta \in (0,1)\):
\[\theta_\text{new} = \theta_\text{old} + \beta^m \sqrt{\frac{2\delta}{g^\mathsf{T}H^{-1}g}}H^{-1}g. \]
We choose the exponent \(m\) as the smallest non-negative integer such that the trust region constraint is satisfied and the surrogate objective improves. We circumvent the computationally expensive matrix inversion of \(H\) for the search direction \(d \approx H^{-1}g\) by computing \(d\) via the conjugate gradient algorithm \cite{hestenes1952methods}. To further reduce the computational costs, the Fisher-vector products in this process can also be only calculated on a subset of the dataset \(\mathcal{D}\) of sampled transitions. 

\begin{algorithm}
	\caption{TRPO}\label{alg:trpo}
	\begin{algorithmic}
	\Require $\delta \in \mathbb{R}$, $b \in (0,1)$, $K \in \mathbb{N}$, $\alpha \in (0,1]$, $U \in \mathbb{N}$, $T \in \mathbb{N}$
	\State Initialize $\theta$ and $\phi$ at random 
	\State $t \gets 0$
	\While{$t \leq T$}
	\For{$i = 1, \hdots, U$}
	\State $a \sim \pi_\theta$ \Comment{sample action}
	\State $\beta(a \mid s) \gets \pi_\theta(a \mid s)$
	\State $s, r \sim P(s, a)$ \Comment{sample next state and reward}
	\State $t \gets t + 1$
	\State Store $(a, s, r, \beta(a \mid s)) $ in $\mathcal{D}$
	\EndFor
	\ForAll{epochs}
	\State Compute returns $R$ and advantages $A$
	\State $g \gets \frac{1}{\lvert \mathcal{D} \rvert} \sum_{\mathcal{D}} \nabla_{\theta} \frac{\pi_\theta (a \mid s)}{\beta (a \mid s)}  A$
	\State Compute $\hat{H}$ as the Hessian of the sample average KL-divergence
	\State Compute $d \approx \hat{H}^{-1}g$ via conjugate gradient algorithm
	\State $m \gets 0$
	\Repeat
	\State $\theta \gets \theta_\text{old} + b^m \sqrt{\frac{2\delta}{d^\mathsf{T}\hat{H} d}} d$ 
	\State $m \gets m + 1$ 
	\Until (sample loss improves and KL constraint satisfied) or $m > K$
	\State $d\phi \gets \frac{1}{\lvert \mathcal{D} \rvert} \sum_{\mathcal{D}} \nabla_{\phi} (R - V_{\phi}(s))^2$
	\State Update $\phi$ using $d\phi$ via gradient descent
	\EndFor
	\EndWhile
\end{algorithmic}
\end{algorithm}

\cite{schulman2015trust} do not specify an advantage estimator to be used in TRPO. The algorithm is commonly used with either the estimator used by A3C or the one which we present in the next subsection. TRPO is typically used with multiple parallel actors as A3C. The pseudocode for TRPO is presented in Algorithm \ref{alg:trpo}. We remark that while being a policy-based algorithm, TRPO does not strictly adhere to Definition \ref{def:pg} as it solves a constrained optimization problem via line search. Yet, it does compute gradients of its objective function w.r.t. the policy parameters and therefore we treat it as a policy gradient algorithm.

\subsection{PPO}

% intro 
% idea/motivation: similar constraint as trpo but simplify
% gradient estimator: clipped policy ratio
% derivation: justification for clipping
% pseudocode 
% implementation details / further techniques: GAE, rollout workers, importance sampling, entropy bonus

Given the complexity of TRPO, Proximal Policy Optimization (PPO) \cite{schulman2017proximal} is designed to enforce comparable constraints on the divergence between consecutive policies during the learning process while simplifying the algorithm to not require second-order methods. This is achieved by  heuristically flattening the gradients outside of an approximate trust region around the old policy. In addition, PPO uses a novel method to learn an estimate of the advantage function.

Let \(r_{\theta}(a \mid s) = \frac{ \pi_{\theta} (a \mid s)}{\pi_{\text{old}} (a \mid s)}\). Then, PPO uses the following estimate of the policy gradients:
\begin{equation}
	\hat{\nabla}_{\theta}J(\theta) = \frac{1}{\lvert \mathcal{D} \rvert} \sum_{s,a \in \mathcal{D}} \hat{A}_{\phi}(s, a) \nabla_{\theta} \min \biggl\{ r_{\theta}(a \mid s), \text{clip}\Bigl( r_{\theta}(a \mid s), 1-\varepsilon, 1+\varepsilon \Bigr)  \biggr\}. \label{eq:ppo_grad}
\end{equation}
Here, the clip-function \(\text{clip} \colon \mathbb{R} \times \mathbb{R} \times \mathbb{R} \rightarrow \mathbb{R}\) is defined by
\[\text{clip}(x, a, b) = \begin{cases}
	a & \text{if } x < a, \\
	x & \text{if } a \leq x \leq b, \\
	b & \text{if } b < x.
\end{cases}\]
and is applied element-wise to \(r_{\theta}\). \(\varepsilon\) is a hyperparameter.

This clipped objective conservatively removes the incentive for moving the new policy to far away from the old one. Intuitively, this can be seen as follows. We distinguish two cases: positive and negative estimated advantages \(\hat{A}(s, a)\), i.e. whether action \(a\) is good or bad. If \(\hat{A}(s, a) > 0\), the surrogate objective \(J_\text{PPO}(\theta)\) increases when \(a\) becomes more likely. Similarly, if \(\hat{A}(s, a) < 0\), \(J_\text{PPO}(\theta)\) increases when \(a\) becomes less likely. Hence, we want to adjust the policy parameters \(\theta\) accordingly. However, by clipping the policy ratio \(r_{\theta}\), this positive effect on the objective function disappears once we move outside the clip range. This clipping process is conservative as we only clip if the objective function would improve. If the policy is changed in the opposite direction such that \(J_\text{PPO}(\theta)\) decreases, \(r_{\theta}\) is not clipped due to taking the minimum in Equation \eqref{eq:ppo_grad}. \autoref{fig:ppo_expl} illustrates this explanation. The pseudocode for PPO is presented in Algorithm \ref{alg:ppo}.

\begin{figure}
     \centering
     \begin{subfigure}[b]{0.4\textwidth}
         \centering
         \begin{tikzpicture}[scale=0.85]
			\begin{axis}[
    			axis lines = left,
    			xlabel = \(r_\theta\),
    			ylabel = {\(J_\text{PPO}(\theta)\)},
    			ymax = 2,
    			xtick={0,1, 1.3},
    			xticklabels={\(0\), \(1\), \(1 + \varepsilon\),},
    			ymajorticks=false,
			]
			\addplot [
    			domain=0:2, 
    			samples=100, 
    			color=darkgray,
				]
				{min(x, 1.3};
			\draw [dashed] (1.3,-2) -- (1.3,2);
			\end{axis}
		\end{tikzpicture}
         \caption{$A > 0$}
         \label{fig:y equals x}
     \end{subfigure}
     \hfill
     \begin{subfigure}[b]{0.4\textwidth}
         \centering
         \begin{tikzpicture}[scale=0.85]
			\begin{axis}[
    			axis lines = left,
    			xlabel = \(r_\theta\),
    			ylabel = {\(J_\text{PPO}(\theta)\)},
    			ymax = 0,
    			xtick={0,0.7, 1},
    			xticklabels={\(0\), \(1 + \varepsilon\), \(1\),},
    			ymajorticks=false,
			]
			\addplot [
    			domain=0:2, 
    			samples=100, 
    			color=darkgray,
				]
				{min(-x, -0.7};
			\draw [dashed] (0.7,-2) -- (0.7,2);
			\end{axis}
		\end{tikzpicture}
         \caption{$A < 0$}
         \label{fig:three sin x}
     \end{subfigure}
        \caption{Illustration of the conservative clipping of PPO's objective function, which is shown as a function of the ratio \(r_\theta\) for a single transition depending on whether the advantages are positive (a) or negative (b). Replicated from \cite{schulman2017proximal}.}
        \label{fig:ppo_expl}
\end{figure}
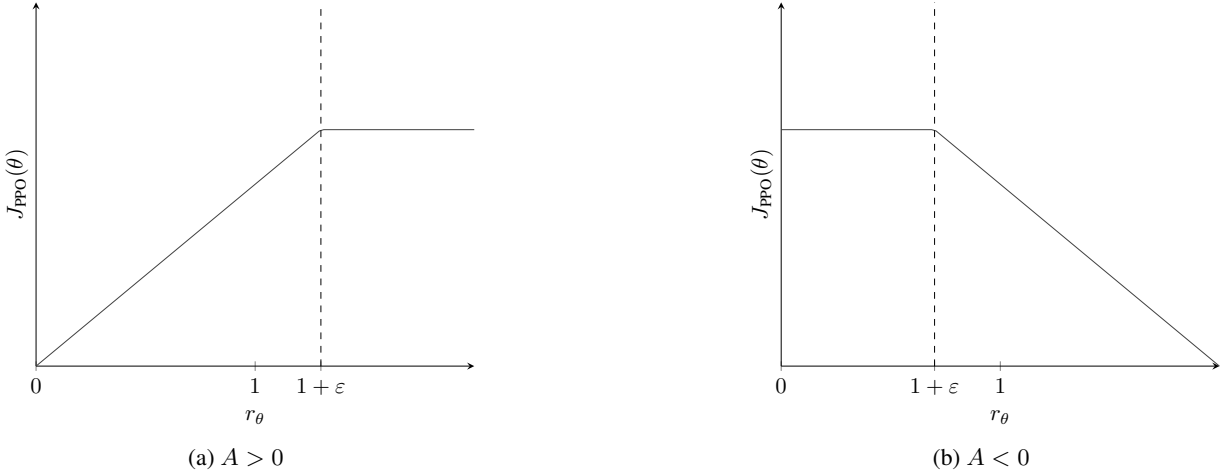

\begin{algorithm}
	\caption{PPO}\label{alg:ppo}
	\begin{algorithmic}
	\Require $\varepsilon \in \mathbb{R}$, $\alpha \in (0,1], \gamma \in [0,1]$, $\lambda \in [0,1]$, $U \in \mathbb{N}$, $T \in \mathbb{N}$
	\State Initialize $\theta$ and $\phi$ at random 
	\State $t \gets 0$
	\While{$t \leq T$}
	\For{$i = 1, \hdots, U$}
	\State $a \sim \pi_\theta$ \Comment{sample action}
	\State $\beta(a \mid s) \gets \pi_\theta(a \mid s)$
	\State $s, r \sim P(s, a)$ \Comment{sample next state and reward}
	\State $t \gets t + 1$
	\State Store $(a, s, r, \beta(a \mid s)) $ in $\mathcal{D}$
	\EndFor
	\ForAll{epochs}
	\State $R, A \gets \text{computeGAE}(v, r, \lambda, \gamma)$ \Comment{Compute returns and advantages}
	\State $d\theta \gets \nabla_{\theta} \frac{1}{\lvert \mathcal{D} \rvert} \sum_\mathcal{D} \min \bigl( \frac{\pi(a \mid s)}{\beta(a \mid s)} , \text{clip}(\frac{\pi(a \mid s)}{\beta(a \mid s)}, 1-\varepsilon, 1+\varepsilon) \bigr)A $
	\State $d\phi \gets \nabla_{\phi} \frac{1}{\lvert \mathcal{D} \rvert} \sum_\mathcal{D} (R - V_{\phi}(s))^2 $ 
	\State update $\theta$ and $\phi$ using $d\theta$ and $d\phi$ via gradient ascent / descent 
	\EndFor
	\EndWhile
\end{algorithmic}
\end{algorithm}

To compute the estimate \(\hat{A}_\phi\) of the advantage function, PPO uses generalized advantage estimation (GAE) \cite{schulman2015high} to further reduce the variance of gradients. GAE computes the estimated advantage as 
\begin{equation}
	\hat{A}_\phi(s_t, a_t) = \sum^{T-1}_{i=t} (\gamma \lambda)^{i-t} \delta_i, \label{eq:ppo_gae}
\end{equation}
where \(\delta_i = r_i + \gamma \hat{V}_\phi(s_{i+1}) - \hat{V}_\phi(s_i)\).
% \todo{\emph{[in paper: exponent T-t+1 ?? shouldnt it be T-t-1 ???]}} 
The value function estimate \(\hat{V}_\phi\) is learned by minimizing 
\begin{equation*}
	\frac{1}{\lvert \mathcal{D} \rvert} \sum_{\mathcal{D}} \Bigl( \bigl(\hat{A}_\phi(s, a) + \hat{V}_{\phi}(s)\bigr) - \hat{V}_{\phi}(s) \Bigr)^2,
\end{equation*}
where the first term is treated as independent of \(\phi\). GAE relates to the idea of eligibility traces \cite{sutton1981toward} to use both the sampled rewards and the current value function estimate on every time step. By computing such an exponentially weighted estimator, GAE reduces the variance of the policy gradients at the cost of introducing a slight bias to the value function estimate \cite{schulman2015high}. The hyperparameters \(\gamma\) and \(\lambda\) both adjust this bias-variance tradeoff. \(\gamma\) does so by scaling the value function estimate \(\hat{V}\) whereas \(\lambda\) controls the dependence on delayed rewards. Note that GAE is a strict generalization of A3C's advantage estimate as Equation \eqref{eq:ppo_gae} reduces to Equation \eqref{eq:a3c_adv} when \(\lambda = 1\). The pseudocode for GAE is presented in Algorithm \ref{alg:gae}.

\begin{algorithm}
	\caption{GAE}\label{alg:gae}
	\begin{algorithmic}
	\Require $\gamma \in [0,1]$, $\lambda \in [0,1]$
	\Require rewards $(r_k)^{t+n}_{k=t}$, values $(v_k)^{t+n+1}_{t=k}$
	\State $A_t, \hdots, A_{t+n} \gets 0$
	\State $x \gets 0$
	\For{$i = t+n, \hdots, t$} 
	\If{transition was terminal}
	\State $\omega \gets 1$
	\Else
	\State $\omega \gets 0$
	\EndIf
	\State $\delta \gets r_i + \gamma \cdot v_{i+1} \cdot (1-\omega) - v_i$
	\State $x \gets \delta + \gamma \cdot \lambda \cdot (1-\omega) \cdot x$
	\State $A_i \gets x$
	\EndFor
	\For{$i = t, \hdots, t+n$} 
	\State $R_i \gets A_i + v_i $
	\EndFor
\end{algorithmic}
\end{algorithm}

Beyond these main innovations, PPO uses several implementational details to improve learning. PPO conducts multiple update epochs for each batch of data such that several gradient descent steps are based on the same transitions to increase sample efficiency and speed up learning. Moreover, PPO commonly augments its surrogate objective with an entropy bonus \(H(\pi_{\theta}(\cdot \mid s))\) and uses multiple actors similarly to A3C. Lastly, we note that further algorithms have been proposed as modifications of PPO, e.g. Phasic Policy Gradients \cite{cobbe2021phasic} and Robust Policy Optimization \cite{rahman2022robust}, which we will not discuss further as they only modify minor details.

\subsection{V-MPO}

% intro 
% idea/motivation 
% gradient estimator 
% derivation 
% pseudocode 
% implementation details / further techniques

% does not use likelihood-trick but separates rl step from policy fitting
% avoid entropy regularization (that needs to be carefully tuned) to avoid policy collapse

In the previous algorithms, we learn a policy from the control perspective by selecting actions to maximize expected rewards. In this subsection, we consider an alternative formulation of RL problems, which casts them as probabilistic inference problems of estimating posterior policies that are consistent with a desired outcome \cite{abdolmaleki2018relative}. This problem is then solved via Expectation Maximization (EM) \cite{dempster1977maximum}. This procedure was first proposed in the off-policy algorithm Maximum a-posteriori Policy Optimization (MPO) \cite{abdolmaleki2018maximum, abdolmaleki2018relative}. Here, we discuss its on-policy variant V-MPO \cite{song2019v}, where the "V" in the name refers to learning the value function \(V_\pi\) instead of \(Q_\pi\) as in MPO. 

The main idea of V-MPO is to find a maximum a posteriori estimate of the policy by sequentially finding a tight lower bound on the posterior and then maximizing this lower bound. This problem can be transformed into the objective function
\begin{equation*}
	J_\text{V-MPO}(\theta, \eta, \nu) = \mathcal{L}_\pi(\theta) + \mathcal{L}_\eta(\eta) + \mathcal{L}_\nu(\theta, \nu),
\end{equation*}
where \(\mathcal{L}_\pi\) is the policy loss
\begin{equation}
	\mathcal{L}_\pi(\theta) = - \sum_{a,s \in \tilde{\mathcal{D}}} \frac{\exp\Bigl( \frac{\hat{A}_{\phi}(s,a)}{\eta} \Bigr)}{\sum_{a',s' \in \tilde{\mathcal{D}}} \exp\Bigl( \frac{\hat{A}_{\phi}(s',a')}{\eta} \Bigr)} \ln \pi_{\theta}(a \mid s), \label{eq:mpo_policy_loss} 
\end{equation}
\(\mathcal{L}_\eta\) is the temperature loss
\begin{equation}
	\mathcal{L}_\eta(\eta) = \eta \varepsilon_\eta + \eta \ln \Biggl[ \frac{1}{\lvert \tilde{\mathcal{D}} \rvert} \sum_{a,s \in \tilde{\mathcal{D}}} \exp\biggl( \frac{\hat{A}_{\phi}(s,a)}{\eta} \biggr) \Biggr] \label{eq:mpo_temp_loss}
\end{equation}
and \(\mathcal{L}_\nu\) is the trust-region loss
\begin{align}
\begin{split}
	\mathcal{L}_\nu(\theta, \nu) &= \frac{1}{\lvert \mathcal{D} \rvert} \sum_{s \in \mathcal{D}} \biggl( \nu \biggl( \varepsilon_\nu - \mathrm{sg}\Bigl[\Bigl[ D_{KL}(\pi_{\text{old}}(\cdot \mid s) \: \Vert \: \pi_{\theta}(\cdot \mid s))  \Bigr]\Bigr] \biggr)  + \mathrm{sg}\bigl[\bigl[ \nu \bigr]\bigr] D_{KL}\bigl( \pi_{\text{old}}(\cdot \mid s) \: \Vert \: \pi_{\theta}(\cdot \mid s) \bigr) \biggr). \label{eq:mpo_kl_loss}
\end{split}
\end{align}
Here, \(\mathrm{sg}[[\cdot]]\) is a stop-gradient operator, meaning its arguments are treated as constants when computing gradients, \(\eta\) is a learnable temperature parameter, \(\nu\) is a learnable KL-penalty parameter, \(\varepsilon_\nu\) and \(\varepsilon_\eta\) are hyperparameters, \(\mathcal{D}\) is a batch of transitions and \(\tilde{\mathcal{D}} \subset \mathcal{D}\) is the half of these transitions with the largest advantages. We will provide a sketch of how to derive this objective function in the following. We refer the interested reader to Appendix \autoref{sec:mpo_details} for a more detailed derivation. 

Let \(p_\theta(s,a) = \pi_\theta(a \mid s) d^{\pi_\theta}(s)\) denote the joint state-action distribution under policy \(\pi_\theta\) conditional on the parameters \(\theta\). Let \(\mathcal{I}\) be a binary random variable whether the updated policy \(\pi_\theta\) is an improvement over the old policy \(\pi_{\text{old}}\), i.e. \(\mathcal{I}=1\) if it is an improvement. We assume the conditional probability of \(\pi_\theta\) being an improvement given a state \(s\) and an action \(a\) is proportional to the following expression 
\begin{equation}
	p_\theta(\mathcal{I} = 1 \mid s,a) \propto \exp\Bigl( \frac{A_{\pi_\text{old}}(s,a)}{\eta} \Bigr). \label{eq:mpo_0}
\end{equation}
Given the desired outcome \(\mathcal{I}=1\), we seek the posterior distribution conditioned on this event. Specifically, we seek the maximum a posteriori estimate 
\begin{align}
\begin{split}
	\theta^* &= \argmax_\theta \bigl[ p_\theta(\mathcal{I}=1) \rho(\theta) \bigr] \\
	&= \argmax_\theta \bigl[ \ln p_\theta(\mathcal{I}=1) + \ln \rho(\theta) \bigr], \label{eq:mpo_2}
\end{split}
\end{align}  
where \(\rho\) is some prior distribution. \(\ln p_\theta(\mathcal{I}=1)\) can be rewritten as
\begin{equation}
	\ln p_\theta(\mathcal{I}=1) = \mathbb{E}_{S,A \sim \psi} \biggl[ \ln \frac{p_\theta(\mathcal{I}=1,S,A)}{\psi(S,A)} \biggr] + D_{KL} \bigl(\psi \:\Vert \: p_\theta(\cdot,\cdot \mid \mathcal{I}=1)\bigr) \label{eq:mpo_1}
\end{equation}
for some distribution \(\psi\) over \(\mathcal{S} \times \mathcal{A}\). Observe that, since the KL divergence is non-negative, the first term is a lower bound for \(\ln p_\theta(\mathcal{I}=1)\). Akin to EM algorithms, V-MPO now iterates by choosing the variational distribution \(\psi\) in the expectation (E) step to minimize the KL divergence in Equation \eqref{eq:mpo_1} to make the lower bound as tight as possible. In the maximization (M) step, we maximize this lower bound and the prior \(\ln \rho(\theta)\) to obtain a new estimate of \(\theta^*\) via Equation \eqref{eq:mpo_2}.

First, we consider the E-step. Under the proportionality assumption \eqref{eq:mpo_0}, we turn the problem of finding a variational distribution \(\psi\) to minimize \(D_{KL} (\psi \: \Vert \: p_{\theta_\text{old}}(\cdot,\cdot \mid \mathcal{I}=1))\) into an optimization problem over the temperature \(\eta\). This is formulated as a constrained problem subject to a bound on the KL divergence between \(\psi\) and the previous state-action distribution \(p_{\theta_\text{old}}\) while ensuring that \(\psi\) is a state-action distribution. To enable optimizing \(\eta\) via gradient descent, we transform this constrained problem into an unconstrained problem via Lagrangian relaxation, which emits both the form of the variational distribution
\begin{equation*}
	\psi(s,a) = \frac{p_{\theta_\text{old}}(s,a) \: p_{\theta_\text{old}}(\mathcal{I}=1 \mid s,a)}{\int_{s \in \mathcal{S}} \int_{a \in \mathcal{A}} p_{\theta_\text{old}}(s,a) \: p_{\theta_\text{old}}(\mathcal{I}=1 \mid s,a) \: da \: ds }
\end{equation*}
and the temperature loss \eqref{eq:mpo_temp_loss}
\begin{equation*}
	\mathcal{L}_\eta(\eta) = \eta \varepsilon_\eta + \eta \ln \biggl( \int_{s \in \mathcal{S}} \int_{a \in \mathcal{A}} \exp\Bigl( \frac{A_{\pi_\text{old}}(s,a)}{\eta} \Bigr) \: da \: ds \biggr).
\end{equation*} 
\cite{song2019v} find that using only the highest 50 \% of advantages per batch when sampling these expressions, i.e. replacing \(\mathcal{D}\) with \(\tilde{\mathcal{D}}\), substantially improves the algorithm. The advantage function \(A_{\pi}\) is estimated by \(\hat{A}_\phi\), which is learned as in A3C.

Then, in the M-Step we solve the maximum a posterior estimation problem \eqref{eq:mpo_2} over the policy parameters \(\theta\) for the constructed variational distribution \(\psi(s,a)\) and the thereby implied lower bound.
This lower bound, i.e. the first term in Equation \eqref{eq:mpo_1}, becomes the weighted maximum likelihood policy loss \eqref{eq:mpo_policy_loss}
\begin{equation*}
	\mathcal{L}_\pi(\theta) = - \int_{s \in \mathcal{S}} \int_{a \in \mathcal{A}} \psi(s,a) 
	\ln \pi_\theta(a \mid s) \: da \: ds
\end{equation*}
after dropping terms independent of \(\theta\). This loss is computed over the same reduced batch \(\tilde{\mathcal{D}}\) as the temperature loss, effectively assigning out-of-sample transitions a weight of zero. Simultaneously, we want to maximize the prior \(\rho(\theta)\) according to the maximization problem \eqref{eq:mpo_2}. V-MPO follows TRPO and PPO to choose a prior such that the new policy is kept close to the previous one, i.e.
\[\rho(\theta) = -\nu \mathbb{E}_{S \sim d^{\pi_\text{old}}} \bigl[D_{KL}\bigl( \pi_{\text{old}} (\cdot \mid S) \Vert \pi_\theta (\cdot \mid S) \bigr) \bigr], \]
with learnable parameter \(\nu\). However, optimizing the resulting sample-based maximum likelihood objective directly tends to result in overfitting. Hence, a sequence of transformations is applied. First, the prior is transformed into a hard constraint on the KL divergence when optimizing the policy loss. To employ gradient-based optimization, we use Lagrangian relaxation to transform this constrained optimization problem back into an unconstrained problem and use a coordinate-descent strategy to simultaneously optimize for \(\theta\) and \(\nu\). This can equivalently be written via the stop-gradient operator yielding the trust-region loss \eqref{eq:mpo_kl_loss}. 

\begin{algorithm}
	\caption{V-MPO}\label{alg:vmpo}
	\begin{algorithmic}
	\Require $\eta \in \mathbb{R}$, $\nu \in \mathbb{R}$, $\varepsilon_\eta \in \mathbb{R}$, $\varepsilon_\nu \in \mathbb{R}$, $U \in \mathbb{N}$, $T \in \mathbb{N}$
	\State Initialize $\theta$ and $\phi$ at random 
	\State $t \gets 0$
	\While{$t \leq T$}
	\For{$i = 1, \hdots, U$}
	\State $a \sim \pi_\theta$ \Comment{sample action}
	\State $\beta(a \mid s) \gets \pi_\theta(a \mid s)$
	\State $s, r \sim P(s, a)$ \Comment{sample next state and reward}
	\State $t \gets t + 1$
	\State Store $(a, s, r, \beta(a \mid s)) $ in $\mathcal{D}$
	\EndFor
	\ForAll{epochs}
	\State Compute returns $R$ and advantages $A$
	\State Compute $\tilde{\mathcal{D}}$
	\State $L_\nu \gets \frac{1}{\lvert \mathcal{D} \rvert} \sum_\mathcal{D} \nu (\varepsilon_\nu - \mathrm{sg}(D_{KL}(\pi_\text{old} \Vert \pi_\theta))) + \mathrm{sg}(\nu)D_{KL}(\pi_\text{old} \Vert \pi_\theta)  $ \Comment{KL loss}
	\State $L_\pi \gets - \frac{1}{\lvert \tilde{\mathcal{D}} \rvert} \sum_{\tilde{\mathcal{D}}} \ln \pi_\theta(a \mid s) \psi(s,a)$ \Comment{Policy loss}
	\State $L_\eta \gets \eta \varepsilon_\eta + \eta \ln (\frac{1}{\lvert \tilde{\mathcal{D}} \rvert} \sum_{\tilde{\mathcal{D}}} \exp \frac{A}{\eta})$ \Comment{Temperature loss}
 	
	\State $d\theta \gets \nabla_{\theta} (L_\pi + L_\nu) $, $d\eta \gets \frac{\partial}{\partial \eta} L_\eta$, $d\nu \gets \frac{\partial}{\partial \nu} L_\nu$ \Comment{Compute gradients}
	\State $d\phi \gets \frac{1}{\lvert \mathcal{D} \rvert} \sum_{\mathcal{D}} \nabla_{\phi} (R - V_{\phi}(s))^2$
	\State update $\theta$, $\eta$, $\nu$ and $\phi$ using $d\theta$, $d\eta$, $d\nu$ and $d\phi$ via gradient ascent / descent 
	\EndFor
	\EndWhile
\end{algorithmic}
\end{algorithm}

The learnable parameters \(\eta\) and \(\nu\) are Lagrangian multipliers and hence must be positive. We enforce this by projecting the computed values to small positive values \(\eta_\text{min}\) and \(\nu_\text{min}\) respectively if necessary. The pseudocode for V-MPO is depicted in Algorithm \ref{alg:vmpo}. As implementational details, V-MPO typically uses decoupled KL constraints for the mean and covariance of the policy in continuous action spaces following \cite{abdolmaleki2018maximum}. This enables better exploration without moving the policy mean as well as fast learning by rapidly changing the mean without resulting in a collapse of the policy due to vanishing standard deviations. In addition, V-MPO can be used with an off-policy correction via an importance sampling ratio similarly to TRPO and PPO and uses multiple actors following A3C.

\subsection{Comparing Design Choices in Policy Gradient Algorithms}\label{sec:comparison}

Having outlined the main on-policy policy gradient algorithms, we want to shortly compare them to characterize the main design choices in constructing such algorithms. 
%Here, we focus on the primary novelties of the presented algorithms ignoring code-level optimizations.

The predominant differences across policy gradient algorithms lie in the estimators \(\hat{\nabla}_\theta J(\theta)\) of the policy gradients. We summarize these estimates in \autoref{table:gradient_estimators}\footnote{For V-MPO, we focus on the policy loss, thus ignoring the gradient of the KL loss \(\mathcal{L}_\nu\) w.r.t. the policy parameters \(\theta\) here.}. The algorithms can be distinguished along several dimensions with respects to the gradients. First, they use different variance reduction techniques, which are especially reflected in how \(Q_\pi\) in the policy gradient formula \eqref{eq:pg_proportional} is estimated. Second, various policy regularization strategies are used. Third, the algorithms employ further lower-level details to stabilize learning. We will discuss each of these dimensions in the following.

\begin{table}
\begin{center}
\begin{tabular}{c | c} 
\toprule
\\[-1em]
 \tablecentered{Algorithm} & \tablecentered{Gradient estimator} \\ 
 \\[-1em] 
 \toprule
 \\[-1em]
 REINFORCE  & \(G \nabla_\theta \ln \pi_\theta (a \mid s)\) \\ 
 \\[-1em]
 \midrule
 \\[-1em]
 A3C & \( \frac{1}{\lvert \mathcal{D} \rvert} \sum_{\mathcal{D}} \hat{A} \nabla_{\theta} \ln \pi_{\theta} (a \mid s)\) \\
 \\[-1em]
 \midrule
 \\[-1em]
 TRPO  & \( \frac{1}{\lvert \mathcal{D} \rvert} \sum_{\mathcal{D}} \hat{A} \nabla_{\theta} \frac{\pi_\theta (a \mid s)}{\pi_\text{old} (a \mid s)}\) \\
 \\[-1em]
 \midrule
 \\[-1em]
 PPO  & \(\frac{1}{\lvert \mathcal{D} \rvert} \sum_{\mathcal{D}} \hat{A} \nabla_{\theta} \min \biggl( \frac{\pi_\theta (a \mid s)}{\pi_\text{old} (a \mid s)}, \text{clip}\Bigl( \frac{\pi_\theta (a \mid s)}{\pi_\text{old} (a \mid s)}, 1-\varepsilon, 1+\varepsilon \Bigr) \biggr)\) \\
 \\[-1em]
 \midrule
 \\[-1em]
 V-MPO & \(\frac{1}{\sum_{\tilde{\mathcal{D}}} \exp\bigl( \frac{\hat{A}}{\eta} \bigr)} \sum_{\tilde{\mathcal{D}}} \exp\bigl( \frac{\hat{A}}{\eta} \bigr) \nabla_\theta \ln \pi_{\theta}(a \mid s)\) \\
 \bottomrule
\end{tabular}
\caption{Policy gradient estimates used by various policy gradient algorithms.}\label{table:gradient_estimators}
\end{center}
\end{table}

Reducing variance is important to stabilize learning and speed up convergence \cite{sutton2000comparing}. However, while high variance means algorithms require more samples to converge, bias in the estimates is not resolvable even with infinite samples \cite{schulman2015high}. All contemporary policy gradient algorithms, i.e. all presented algorithms except REINFORCE, make use of baselines to reduce variance as discussed in Section \ref{sec:baseline} when approximating the unknown \(Q_\pi\). While REINFORCE samples returns as an unbiased but high-variance estimate \cite{Sutton1998}, the other algorithms learn a value function \(\hat{V}\) to estimate the advantage function \(\hat{A}\). Notably, this reduces variance at the cost of introducing bias \cite{konda2003onactor, sutton1999policy, schulman2015high}. Further differences arise from how advantages are estimated, albeit these strategies can be easily transferred between algorithms. PPO uses GAE to estimate advantages, which generalizes the n-step temporal difference estimates used in A3C, TRPO and V-MPO. In addition, V-MPO scales the advantages via the learned temperature \(\eta\) and only uses the top 50 \% of advantages per batch.

To stabilize learning beyond variance reduction, several regularization techniques are proposed by TRPO, PPO and V-MPO to limit the change in policies across iterations. TRPO imposes a constraint on the KL divergence between the newly learned and the previous policy. Thus, the policy gradients are not directly applied to update the policy parameters \(\theta\) but instead they are postprocessed to yield an approximate solution to this constrained optimization problem. This comes at the cost of algorithmic complexity however. Whereas the other algorithms directly compute the estimated policy gradients via automatic differentiation \cite{wengert1964simple, margossian2019review}, TRPO requires the estimation of a hessian and the application of the conjugate gradient algorithm followed by a line search. PPO avoids such complexity by introducing a heuristic, which bounds the probability ratio \(\frac{\pi_\theta (a \mid s)}{\pi_\text{old} (a \mid s)}\), into its objective function. By conservatively clipping this ratio, large policy changes induced by overfitting the advantage function are prevented. This also enables PPO to conduct multiple updates on the same data to accelerate learning \cite{schulman2017proximal}. V-MPO too limits the KL divergence across policies. The prior distribution is selected such that V-MPO arrives at a similar optimization problem with a penalty on the KL divergence as TRPO. Following TRPO, V-MPO transforms this into a constrained optimization problem, albeit now with the goal of automatically tuning the penalty parameter by applying coordinate-descent to the Lagrangian relaxation of this constrained optimization problem.

Lastly, we point out that different lower-level details are employed by the discussed algorithm. Except for REINFORCE, all algorithms use several actors, which are potentially updated asynchronously, and average gradients over batches of transitions to further reduce their variance. Here, V-MPO slightly diverges from the other algorithms as it computes a weighted average based on the advantages of the transitions, i.e. with weights \[\frac{\exp\Bigl( \frac{A_{\pi_{\theta_\text{old}}}(s,a)}{\eta} \Bigr)}{\sum_{a,s \in \tilde{\mathcal{D}}} \exp\Bigl( \frac{A_{\pi_{\theta_\text{old}}}(s,a)}{\eta} \Bigr)}.\]
A3C and PPO commonly use an entropy bonus to prevent premature convergence to a suboptimal policy by incentivizing a higher standard deviation of the Gaussian output by the policy. We observe that V-MPO does not use an entropy bonus but achieves a comparable effect in continuous action spaces by constraining the policy mean and standard deviation separately. Lastly, TRPO and PPO include an importance sampling correction to compensate for the slight off-policy nature of the algorithms induced by using multiple asynchronous workers. This is also mentioned as an option for V-MPO \cite{song2019v}.

% convergence
\section{Convergence Results}\label{sec:convergence}

% Plan: 
% discuss works on convergences and their differences
% what are assumptions (on policy parameters etc.)
% strength of convergence (local vs global)
% algorithms 

In this section, we discuss convergence results for policy gradient algorithms from literature. First, we present an overview of different convergence proofs in Section \ref{sec:convergence_lit}. Then, we thoroughly present one selected result in Section \ref{sec:mirror_learning}. 

\subsection{Literature Overview}\label{sec:convergence_lit}

% REINFORCE
%\cite{phansalkar1995local} REINFORCE can exhibit unbounded behavior. proof local convergence of modified REINFORCE. (ODE methods)

Several convergence results have been proposed for policy gradient algorithms. They differ along various dimensions: the specific algorithms covered, the shown strength of convergence, the problem settings and the employed proving techniques. The following overview of convergence results is not intended to be complete but rather shall showcase these differences. 

As previously discussed, REINFORCE uses an unbiased estimator of the policy gradients, which in expectation therefore point in the direction of the true gradients. Hence, under common stochastic approximation assumptions towards the step sizes, REINFORCE can be shown to converge to a locally optimal policy \cite{Sutton1998}. In the general form of policy gradient algorithms, agnostic to the specific estimator of \(Q_\pi\), showing convergence is more complex as the estimated gradients are typically biased when using a learned value function \cite{konda2003onactor, sutton1999policy}. \cite{agarwal20optimality} and \cite{bhandari2021linear} consider the simplest case where state and action spaces \(\mathcal{S}\) and \(\mathcal{A}\) are finite and no function approximation is used, i.e. the policy uses a tabular parameterization with one parameter for each state-action combination. Using  that the exact policy gradients can be calculated in this case, both studies show the global convergence to an optimal policy with a linear convergence rate. 

\cite{sutton1999policy} and \cite{zhang2020global} generalize these results to settings with function approximation, albeit under impractical conditions on the approximators, which are required to be linear in their inputs. The extension to function approximation comes at the cost of only being able to proof local convergence using stochastic approximation and the Supermartingale Convergence Theorem \cite{robbins1971convergence}.

Finally, some convergence results exist for the specific algorithms such as TRPO and PPO. TRPO is based on Theorem \ref{th:trpo_theorem}, which comes with monotonic improvement guarantees. However, TRPO is only an approximation to the algorithm stemming from Theorem \ref{th:trpo_theorem}, so that no such guarantee holds for TRPO in practice. We further remark that PPO is similarly intended as an heuristic of this theoretical algorithm \cite{schulman2017proximal}. Nonetheless, efforts have been made to proof the convergence of these practical algorithms. \cite{liu2019neural} show that a slightly modified version of PPO converges to a globally optimal policy at a sublinear rate under specific assumptions. In particular, they require an overparameterized neural network as the function approximator such that they can use infinite-dimensional mirror-descent \cite{beck2003mirror} to proof the convergence. \cite{holzleitner2021convergence} provide a proof using two time-scale stochastic approximation \cite{karmakar2018two} that PPO converges to a locally optimal policy under more realistic assumptions akin to typical learning scenarios. 

\subsection{Mirror Learning}\label{sec:mirror_learning}

In this subsection, we focus on the convergence proof provided by \cite{kuba2022mirror}. While primarily of theoretical interest, we choose to discuss this particular result as it is agnostic to the selected algorithm and parameterization and can hence by applied to a range of policy gradient algorithms. \cite{kuba2022mirror} introduce a framework called \emph{mirror learning}, which comes with global convergence guarantees for all policy gradient algorithms that adhere to a specific form. In the following, we follow \cite{kuba2022mirror} in deriving their results. We start by giving some definitions, based on which we then present the general form of mirror learning updates. We show that the discussed algorithms largely adhere to this form. Finally, we proof that this implies the convergence to an optimal policy.

\subsubsection{Fundamentals of Mirror Learning}

From here onwards, we do not explicitly write down the policy parameters, i.e. we omit the subscript \(\theta\) when describing a policy \(\pi\). \cite{kuba2022mirror} define the drift \(\mathfrak{D}\) and the neighborhood operator \(\mathcal{N}\) as follows.

\begin{definition}\label{def:drift}
	(Drift)
	A drift functional 
	\[\mathfrak{D} \colon \Pi \times \mathcal{S} \rightarrow \{ \mathfrak{D}_\pi(\cdot \mid s) \colon \Delta(\mathcal{A}) \rightarrow \mathbb{R} \}\] 
	is a map which satisfies the following conditions for all \(s \in \mathcal{S}\) and \(\pi, \bar{\pi} \in \Pi\):
	\begin{enumerate}
		\item \(\mathfrak{D}_\pi(\bar{\pi} \mid s) \geq \mathfrak{D}_\pi(\pi \mid s) = 0\), \quad (non-negativity)
		\item \(\mathfrak{D}_\pi(\bar{\pi} \mid s)\) has zero gradient with respects to \(\bar{\pi}(\cdot \mid s)\) at \(\bar{\pi}(\cdot \mid s) = \pi(\cdot \mid s)\), more precisely all its Gâteaux derivatives\footnote{See Definition \ref{def:gateaux}.} are zero,  \quad (zero gradient)
	\end{enumerate}
	
	where we used \(\mathfrak{D}_\pi\bigl(\bar{\pi}(\cdot \mid s) \mid s\bigr) \coloneqq \mathfrak{D}_\pi(\bar{\pi} \mid s)\).
	
	For any state distribution \(\nu^{\bar{\pi}}_\pi \in \Delta(\mathcal{S})\), that can depend on both \(\bar{\pi}\) and \(\pi\), the drift from \(\bar{\pi}\) to \(\pi\) is given by
	\[\mathfrak{D}^\nu_\pi(\bar{\pi}) \coloneqq \mathbb{E}_{s \sim \nu^{\bar{\pi}}_\pi} \bigl[ \mathfrak{D}_\pi(\bar{\pi} \mid s) \bigr]. \]
	We require \(\nu^{\bar{\pi}}_\pi\) to be such that this expectation is continuous in \(\bar{\pi}\) and \(\pi\). We call a drift trivial if \(\mathfrak{D}^\nu_\pi(\bar{\pi}) = 0\) for all \(\pi, \bar{\pi} \in \Pi\).
\end{definition}

\begin{definition}
	(Neighborhood Operator)
	A mapping
	\[ \mathcal{N} \colon \Pi \rightarrow \mathcal{P}(\Pi) \]
	is a neighborhood operator if it satisfies the following conditions:
	\begin{enumerate}
		\item \(\mathcal{N}\) is continuous, \quad (continuity)
		\item \(\mathcal{N}(\pi)\) is compact for all \(\pi \in \Pi\), \quad (compactness)
		\item There exists a metric \(\chi \colon \Pi \times \Pi \rightarrow \mathbb{R}\) such that \(\chi(\pi, \bar{\pi}) \leq \zeta\) implies \(\bar{\pi} \in \mathcal{N}(\pi)\) for all \(\pi, \bar{\pi} \in \Pi\) given some \(\zeta \in \mathbb{R}_+\). \quad (closed ball)
	\end{enumerate} 
	We call \(\mathcal{N}(\cdot) = \Pi\) the trivial neighborhood operator.
\end{definition}

With these definitions, we can define the mirror learning update rule.

\begin{definition}
	(Mirror Learning Update)
	Let \(\pi_\text{old}\) be the previous policy and \(d^{\pi_\text{old}}\) the state distribution under \(\pi_\text{old}\). Further, let 
	\[ \Bigl[\mathcal{M}^{\bar{\pi}}_\mathfrak{D} V_{\pi} \Bigr](s) \coloneqq \mathbb{E}_{A \sim \bar{\pi}} \bigl[Q_{\pi}(s,A) \bigr] - \frac{\nu^{\bar{\pi}}_{\pi}}{d^{\pi}} \mathfrak{D}_\pi(\bar{\pi} \mid s) \]
	be the mirror learning operator.
	Then, the mirror learning update chooses the new policy \(\pi_\text{new}\) as
	\begin{equation}
		\pi_\text{new} \in \argmax_{\bar{\pi} \in \mathcal{N}(\pi_\text{old})} \mathbb{E}_{S \sim d^{\pi_\text{old}}} \biggl[ \Bigl[\mathcal{M}^{\bar{\pi}}_\mathfrak{D} V_{\pi_\text{old}} \Bigr](S) \biggr].\label{eq:mirror_update}
	\end{equation} 
\end{definition} 

Under the light of this mirror learning update, the drift \(\mathfrak{D}\) from one policy to the next induces some penalty on the objective while the neighborhood operator puts a hard constraint on the divergence of subsequent policies. 

\subsubsection{Policy Gradient Algorithms as Instances of Mirror Learning}

Before proving the convergence of mirror learning to an optimal policy, we first show that the discussed policy gradient algorithms can partly be seen as instances of mirror learning, i.e. use updates of the form
\[\pi_\text{new} \in \argmax_{\pi \in \mathcal{N}(\pi_\text{old})} \mathbb{E}_{S \sim d^{\pi_\text{old}}} \biggl[\mathbb{E}_{A \sim \pi} \bigl[Q_{\pi_\text{old}}(S,A) \bigr] - \frac{\nu^{\pi}_{\pi_\text{old}}}{d^{\pi_\text{old}}} \mathfrak{D}_{\pi_\text{old}}(\pi \mid S) \biggr]. \]

\textbf{A3C}

A3C is a direct application of the Policy Gradient Theorem, albeit with a learned advantage function. Thus, at each iteration it approximately solves the optimization problem
\[\pi_\text{new} \in \argmax_{\pi \in \Pi} \mathbb{E}_{S \sim d^{\pi_\text{old}}, A \sim \pi} \Bigl[A_{\pi_\text{old}}(S,A) \Bigr] = \argmax_{\pi \in \Pi} \mathbb{E}_{S \sim d^{\pi_\text{old}}} \Bigl[ \mathbb{E}_{A \sim \pi} \bigl[Q_{\pi_\text{old}}(S,A) \bigr] \Bigr]. \]
This is the most trivial instantiation of mirror learning by using the trivial drift \(\mathfrak{D}(\cdot \mid \cdot) = 0 \) and the trivial neighborhood operator \(\mathcal{N}(\cdot) = \Pi\). The same argumentation also applies to REINFORCE. Note that in practice however, we maximize the expectation over \(\pi_\text{old}\) rather than \(\pi\). For this reason, these are not exact instances of mirror learning.

\textbf{TRPO}

TRPO's constrained optimization problems
\begin{align*}
	&\pi_\text{new} \in \argmax_{\pi \in \Pi} \mathbb{E}_{S \sim d^{\pi_\text{old}}, A \sim \pi_\text{old}} \biggl[ \frac{ \pi (A \mid S)}{\pi_\text{old} (A \mid S)} A_{\pi_\text{old}}(S, A) \biggr] \\
	&\text{subject to} \quad \mathbb{E}_{S \sim d^{\pi_\text{old}}} \bigl[ D_{KL} (\pi_\text{old}(\cdot \mid S) \Vert \pi( \cdot \mid S)) \bigr] \leq \delta
\end{align*}
can be rewritten as 
\[\pi_\text{new} \in \argmax_{\pi \in \mathcal{N}_\text{TRPO}(\pi_\text{old})} \mathbb{E}_{S \sim d^{\pi_\text{old}}} \Bigl[ \mathbb{E}_{A \sim \pi} \bigl[Q_{\pi_\text{old}}(S,A) \bigr] \Bigr] \]
with the average-KL ball as the neighborhood operator, i.e.
\[ \mathcal{N}_\text{TRPO}(\pi_\text{old}) = \bigl\{ \pi \mid \mathbb{E}_{S \sim d^{\pi_\text{old}}} \bigl[ D_{KL} (\pi_\text{old}(\cdot \mid S) \Vert \pi( \cdot \mid S)) \bigr] \leq \delta \bigr\}. \]
Here, we used that 
\begin{align*}
	\mathbb{E}_{A \sim \pi_\text{old}} \biggl[ \frac{ \pi (A \mid s)}{\pi_\text{old} (A \mid s)} A_{\pi_\text{old}}(s, A) \biggr] &= \int_{a \in \mathcal{A}} \pi_\text{old}(a \mid s) \frac{ \pi (a \mid s)}{\pi_\text{old} (a \mid s)} A_{\pi_\text{old}}(s, a) da \\
	&= \mathbb{E}_{A \sim \pi} \Bigl[ A_{\pi_\text{old}}(s, A) \Bigr]
\end{align*}
% correct ???
and that maximizing over the action-value function is identical to maximizing over the advantage function following the discussion in Section \ref{sec:baseline}.
Thus, TRPO is a mirror learning instance with the trivial drift \(\mathfrak{D}(\cdot \mid \cdot) = 0 \). 

\textbf{PPO}

Each iteration, PPO searches for
\[\pi_\text{new} \in \argmax_{\pi \in \Pi} \mathbb{E}_{\pi_\text{old}} \biggl[ \min \biggl\{ r_{\pi}(A \mid S) , \text{clip}\bigl( r_{\pi}(A \mid S), 1-\varepsilon, 1+\varepsilon \bigr)  \biggr\}A_{\pi_\text{old}}(S, A) \biggr], \]
where we write \(r_\pi(a \mid s)\) for \(\frac{\pi(a \mid s)}{\pi_\text{old} (a \mid s)}\). We can rewrite the expectation over actions by adding zero as 
\begin{align*}
	\mathbb{E}_{A \sim \pi_\text{old}} &\biggl[ \min \biggl\{ r_{\pi}(A \mid s) A_{\pi_\text{old}}(s, A), \text{clip}\bigl( r_{\pi}(A \mid s), 1-\varepsilon, 1+\varepsilon \bigr) A_{\pi_\text{old}}(s, A) \biggr\} \biggr] \\
	&= \mathbb{E}_{A \sim \pi_\text{old}} \biggl[ r_\pi(A \mid s) A_{\pi_\text{old}}(s,A)  \biggr] - \mathbb{E}_{A \sim \pi_\text{old}} \biggl[ r_\pi(A \mid s) A_{\pi_\text{old}}(s,A) \\
	&\quad - \min \biggl\{ r_{\pi}(A \mid s) A_{\pi_\text{old}}(s, A), \text{clip}\bigl( r_{\pi}(A \mid s), 1-\varepsilon, 1+\varepsilon \bigr) A_{\pi_\text{old}}(s, A) \biggr\} \biggr].
\end{align*}
Using the same technique as before, we can write the first expectation equivalently as \(\mathbb{E}_{A \sim \pi} \bigl[ A_{\pi_\text{old}}(s, A) \bigr]\). We now focus on the second expectation. We replace the \(\min\) operator with a \(\max\) and push the first term inside the \(\max\) to obtain
\begin{align*}
	&\mathbb{E}_{A \sim \pi_\text{old}} \biggl[ r_\pi(A \mid s) A_{\pi_\text{old}}(s,A)  - \min \biggl\{ r_{\pi}(A \mid s) A_{\pi_\text{old}}(s, A), \text{clip}\bigl( r_{\pi}(A \mid s), 1-\varepsilon, 1+\varepsilon \bigr) A_{\pi_\text{old}}(s, A) \biggr\} \biggr] \\
	&\qquad = \mathbb{E}_{A \sim \pi_\text{old}} \biggl[ r_\pi(A \mid s) A_{\pi_\text{old}}(s,A)  + \max \biggl\{ - r_{\pi}(A \mid s) A_{\pi_\text{old}}(s, A), -\text{clip}\bigl( r_{\pi}(A \mid s), 1-\varepsilon, 1+\varepsilon \bigr) A_{\pi_\text{old}}(s, A) \biggr\} \biggr] \\
	&\qquad = \mathbb{E}_{A \sim \pi_\text{old}} \biggl[ \max \biggl\{ r_\pi(A \mid s) A_{\pi_\text{old}}(s,A) - r_{\pi}(A \mid s) A_{\pi_\text{old}}(s, A), \\
	&\qquad \qquad r_\pi(A \mid s) A_{\pi_\text{old}}(s,A)-\text{clip}\bigl( r_{\pi}(A \mid s), 1-\varepsilon, 1+\varepsilon \bigr) A_{\pi_\text{old}}(s, A) \biggr\} \biggr] \\
	&\qquad = \mathbb{E}_{A \sim \pi_\text{old}} \biggl[ \max \biggl\{0, \Bigl( r_\pi(A \mid s) -\text{clip}\bigl( r_{\pi}(A \mid s), 1-\varepsilon, 1+\varepsilon \bigr) \Bigr) A_{\pi_\text{old}}(s, A) \biggr\} \biggr]. \\
\end{align*}
This final expression is non-negative. Moreover, it is zero for \(\pi\) sufficiently close to \(\pi_\text{old}\), i.e. such that for all actions \(a \in \mathcal{A}\) we have \(r_\pi(a \mid s) = \frac{\pi(a \mid s)}{\pi_\text{old} (a \mid s)} \in [1- \varepsilon, 1 + \varepsilon]\), because then the \(\mathrm{clip}\)-function reduces to the identity function w.r.t. its first argument. Thus, the derivatives of this expression must also be zero at \(\pi(\cdot \mid s) = \pi_\text{old}( \cdot \mid s)\). These properties are the exact conditions for a mapping to be considered a drift in the sense of Definition \ref{def:drift}. With this preparation, we can now write the PPO update as 
\[\pi_\text{new} \in \argmax_{\pi \in \Pi} \mathbb{E}_{S \sim d^{\pi_\text{old}}} \biggl[ \mathbb{E}_{A \sim \pi} \Bigl[ A_{\pi_\text{old}}(S, A) \Bigr] - \mathfrak{D}_{\pi_\text{old}}(\pi \mid S) \biggr], \]
where \(\mathfrak{D}_{\pi_\text{old}}\) is a drift given by
\[\mathfrak{D}_{\pi_\text{old}}(\pi \mid s) = \mathbb{E}_{A \sim \pi_\text{old}} \biggl[ \max \biggl\{0, \Bigl( r_{\pi}(A \mid s) - \mathrm{clip}\bigl(r_{\pi}(A \mid s), 1 - \varepsilon, 1 + \varepsilon \bigr) \Bigr) A_{\pi_\text{old}}(s,A) \biggr\} \biggr]. \]
This is an instance of the mirror learning update with the trivial neighborhood operator \(\mathcal{N}(\cdot) = \Pi\) and \(\nu^\pi_{\pi_\text{old}} = d^{\pi_\text{old}} \).

\subsubsection{Convergence Proof}

Now, we present the main theoretical result of \cite{kuba2022mirror}. 

\begin{theorem}\label{th:mirror_learning}
	Let \(\mathfrak{D}^\nu\) be a drift, \(\mathcal{N}\) a neighborhood operator and \(d^\pi\) the sampling distribution, all continuous in \(\pi\). Let the objective, i.e. the expected returns under a policy \(\pi\), be written as \(J(\pi) = \mathbb{E}_{S_0 \sim p_0, \pi} \bigl[ G_0 \bigr]\). Let \(\pi_0 \in \Pi\) be the initial policy and the sequence of policies \(\bigl( \pi_n \bigr)^\infty_{n=0}\) be obtained through the mirror learning update rule \eqref{eq:mirror_update} under \(\mathfrak{D}^\nu\), \(\mathcal{N}\) and \(d^\pi\). Then,
	\begin{enumerate}
		\item \quad (Strict monotonic improvement)
		\[ J(\pi_{n+1}) \geq J(\pi_n) + \mathbb{E}_{S \sim p_0} \biggl[\frac{\nu^{\pi_{n+1}}_{\pi_n}(S)}{d^{\pi_n}(S)} \mathfrak{D}_{\pi_n} (\pi_{n+1} \mid S) \biggr] \quad \forall n \in \mathbb{N}_0. \]
		\item (Value function optimality)
		\[\lim_{n \to \infty} V_{\pi_n} = V^*.\] \quad 
		\item (Maximum attainable return)
		\[ \lim_{n \to \infty} J(\pi_n) = \max_{\pi \in \Pi} J(\pi). \]
		\item (Policy optimality) % \(\omega\)-limit set consists of optimal policies
		\[\lim_{n \to \infty} \pi_n = \pi^*.\]
	\end{enumerate}
\end{theorem}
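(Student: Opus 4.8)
The plan is to establish the four claims in a cascade: prove strict monotonic improvement (1) first, and then combine boundedness of the returns with compactness of $\Pi$ and the stated continuity of $\mathfrak{D}$, $\mathcal{N}$ and $d^\pi$ to pin down the limit, yielding (2)--(4). For (1) I would begin from a performance-difference identity of the form $J(\bar\pi) - J(\pi) = \mathbb{E}_{S \sim \tilde{d}^{\bar\pi}}\bigl[\mathbb{E}_{A \sim \bar\pi}[Q_\pi(S,A)] - V_\pi(S)\bigr]$, where $\tilde{d}^{\bar\pi}$ is the (unnormalised) state-visitation measure under $\bar\pi$. This identity is obtained by exactly the telescoping/unrolling of $\nabla V_\pi$ carried out in the proof of Theorem \ref{pg_theorem}, only with the difference $Q_\pi - V_\pi$ in place of the score-weighted advantage. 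Setting $\bar\pi = \pi_{n+1}$ and $\pi = \pi_n$, I would recognise the inner bracket as the advantage part of the mirror operator $\mathcal{M}^{\bar\pi}_{\mathfrak{D}} V_{\pi_n}$, and then exploit that $\pi_n \in \mathcal{N}(\pi_n)$ (the closed-ball property) together with the fact that the update \eqref{eq:mirror_update} makes $\pi_{n+1}$ a maximiser, so the surrogate value at $\pi_{n+1}$ dominates its value at $\pi_n$, which equals $\mathbb{E}_{S\sim d^{\pi_n}}[V_{\pi_n}(S)]$ since the drift vanishes at $\bar\pi = \pi_n$ by Definition \ref{def:drift}.

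The crux, and what I expect to be the main obstacle, is reconciling the two state distributions: the performance-difference identity is weighted by the new policy's visitation $\tilde{d}^{\pi_{n+1}}$, whereas the surrogate is weighted by $d^{\pi_n}$. The whole design of the drift and of the weight $\nu^{\bar\pi}_{\pi}/d^{\pi}$ is meant to absorb exactly this mismatch, so the careful bookkeeping that converts the surrogate gain into the genuine gain $J(\pi_{n+1}) - J(\pi_n)$ while leaving behind the nonnegative residual $\mathbb{E}_{S\sim p_0}\bigl[\tfrac{\nu^{\pi_{n+1}}_{\pi_n}(S)}{d^{\pi_n}(S)}\mathfrak{D}_{\pi_n}(\pi_{n+1}\mid S)\bigr]$ is the delicate step; here I would lean on continuity and on the precise role of $\nu^{\bar\pi}_{\pi}$ as the measure rendering $\mathfrak{D}^\nu$ continuous in both arguments.

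For (2)--(4) I would argue as follows. Since rewards are bounded and the horizon is finite, $J$ is bounded above, so by (1) the sequence $J(\pi_n)$ is monotone and bounded, hence convergent; telescoping (1) also forces the residuals $\mathbb{E}_{S\sim p_0}[\tfrac{\nu^{\pi_{n+1}}_{\pi_n}(S)}{d^{\pi_n}(S)}\mathfrak{D}_{\pi_n}(\pi_{n+1}\mid S)] \to 0$. Using compactness of $\Pi$ I would extract a convergent subsequence $\pi_{n_k} \to \pi_\infty$, and by continuity of $\mathfrak{D}$, $\mathcal{N}$ and $d^\pi$ the limit $\pi_\infty$ must be a fixed point of \eqref{eq:mirror_update}. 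To upgrade this to \emph{global} optimality I would suppose $\pi_\infty$ is suboptimal; then, by the policy-improvement mechanism underlying Theorem \ref{th:gpi}, there is a policy $\pi'$ with $\mathbb{E}_{A\sim\pi'}[Q_{\pi_\infty}(s,A)] > V_{\pi_\infty}(s)$ on a set of positive measure. Interpolating $\pi_t = (1-t)\pi_\infty + t\pi'$, which lies in $\mathcal{N}(\pi_\infty)$ for small $t$ by the closed-ball property, the advantage part of the surrogate increases to first order in $t$, while the drift part contributes only $o(t)$ because its G\^ateaux derivatives vanish at $\pi_\infty$ (Definition \ref{def:drift}); this contradicts $\pi_\infty$ being a maximiser.

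Hence every subsequential limit is optimal, which gives $V_{\pi_\infty} = V^*$ and therefore $\lim_{n\to\infty} V_{\pi_n} = V^*$ for claim (2) and $\lim_{n\to\infty} J(\pi_n) = \max_{\pi\in\Pi} J(\pi)$ for claim (3), the monotone convergence of $J(\pi_n)$ ensuring the whole sequence (not just a subsequence) attains the supremum. Finally, for (4) I would use that the vanishing of the drift residual, combined with the closed-ball structure of $\mathcal{N}$ and the metric $\chi$, controls the step sizes $\chi(\pi_n, \pi_{n+1})$, so the limit set of $(\pi_n)$ is a connected set consisting entirely of optimal policies; invoking, where necessary, uniqueness of the optimal policy then yields $\pi_n \to \pi^*$. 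I anticipate that this last step on full policy convergence, rather than the optimality of the limit value, will require the most additional care beyond the monotonic-improvement bound.
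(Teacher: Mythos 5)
Your plan for part (1) stalls exactly where you predict it will, and the tool you hope will rescue it cannot do so. From the update rule you only obtain maximality \emph{in expectation under \(d^{\pi_n}\)}, i.e. \(\mathbb{E}_{S \sim d^{\pi_n}}\bigl[[\mathcal{M}^{\pi_{n+1}}_\mathfrak{D} V_{\pi_n}](S)\bigr] \geq \mathbb{E}_{S \sim d^{\pi_n}}\bigl[V_{\pi_n}(S)\bigr]\), while the performance-difference identity weighs the per-state gain \(g(s) = \mathbb{E}_{A \sim \pi_{n+1}}\bigl[A_{\pi_n}(s,A)\bigr]\) by the visitation measure of \(\pi_{n+1}\). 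If \(g\) can be negative on some states, no continuity argument and no property of \(\nu^{\bar\pi}_\pi / d^{\pi}\) can transfer nonnegativity from one weighting to the other; indeed the drift may be trivial (\(\mathfrak{D} \equiv 0\), precisely the case the paper uses to cover A3C and TRPO), in which case the residual you hope ``absorbs the mismatch'' is identically zero and your argument reduces to the false claim that expected surrogate improvement under the old state distribution implies true improvement. The missing idea is the paper's Step 1: the mirror-operator improvement holds \emph{pointwise}, \([\mathcal{M}^{\pi_{n+1}}_\mathfrak{D} V_{\pi_n}](s) \geq [\mathcal{M}^{\pi_n}_\mathfrak{D} V_{\pi_n}](s)\) for every \(s\), proven by contradiction with a patching construction: if the inequality failed at some \(s_0\), the hybrid policy \(\hat\pi\) equal to \(\pi_{n+1}\) off \(s_0\) and to \(\pi_n\) at \(s_0\) still lies in \(\mathcal{N}(\pi_n)\) and has strictly larger expected surrogate value than \(\pi_{n+1}\), contradicting the maximality of \(\pi_{n+1}\). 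Only once this state-wise inequality is in hand does any change of state distribution become legitimate: the paper then proceeds via a Bellman recursion and an infimum bound with a \(1/(1-\gamma)\) factor, but your performance-difference route would also work from the pointwise statement, since the integrand is then nonnegative everywhere and the visitation measure dominates \(p_0\).

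The remainder of your plan tracks the paper fairly closely: compactness/Bolzano--Weierstrass for subsequential limits, Berge-type continuity to see that limits are fixed points of the update, and your interpolation argument --- affine dependence of \(\mathbb{E}_{A\sim\pi}\bigl[A_{\bar\pi}(s,A)\bigr]\) on \(\pi(\cdot\mid s)\), strictly positive directional derivative toward \(\pi'\), \(o(t)\) drift contribution by the zero-G\^{a}teaux-derivative property --- is essentially the paper's Step 4; GPI then yields optimality. Two caveats. First, the paper's Step 3 contradiction relies on the \emph{pointwise} monotone convergence of \(V_{\pi_n}\) (from Step 1 plus the Monotone Convergence Theorem), not merely convergence of the scalar \(J(\pi_n)\); your version can be repaired using continuity of \(\pi \mapsto V_\pi\) together with compactness, but this must be stated. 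Second, your closing argument for (4), that the vanishing drift residual ``controls the step sizes \(\chi(\pi_n,\pi_{n+1})\),'' fails again in the trivial-drift case, where the residual carries no information; the paper itself settles for showing that every limit point is an optimal policy, which is the honest content of claim (4).
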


\begin{proof}
	We structure the proof by \cite{kuba2022mirror} in five steps. In step 1, we start by showing that mirror learning updates lead to improvements under the mirror learning operator \(\mathcal{M}^{\pi_n}_\mathfrak{D} V_{\pi_{n-1}}\), which implies improvements in the value function \(V_{\pi_n}\). In step 2, we prove that the sequence of value functions \(\bigl(V_{\pi_n}\bigr)^\infty_{n=0}\) converges to some limit. In step 3, we show the existence of limit points of the sequence of policies \(\bigl(\pi_n\bigr)^\infty_{n=0}\), which are fixed points of the mirror learning update \eqref{eq:mirror_update}. In step 4, we prove that these limit points are also fixed points of Generalized Policy Iteration (GPI) \cite{Sutton1998}, from which we conclude that these limit points are optimal policies in step 5. For simplicity, we proof Theorem \ref{th:mirror_learning} for discrete state and actions spaces. However, the results are straightforward to extended to the continuous cases (see the appendix in \cite{kuba2022mirror} for details).
	
	\textbf{Step 1} 
	
	We start by showing by contradiction that for all \(n \in \mathbb{N}_0\) and for all \(s \in \mathcal{S}\):
	\begin{equation}
		\bigl[\mathcal{M}^{\pi_{n+1}}_\mathfrak{D} V_{\pi_n}\bigr] (s) \geq \bigl[\mathcal{M}^{\pi_n}_\mathfrak{D} V_{\pi_n}\bigr] (s). \label{eq:mirror_operator_impr}
	\end{equation}
	
	Suppose there exists \(s_0 \in \mathcal{S}\), which violates \eqref{eq:mirror_operator_impr}. We define a policy \(\hat{\pi}\) with
	\[ \hat{\pi}(\cdot \mid s) = \begin{cases}
		\pi_{n+1}(\cdot \mid s) & \text{if } s \neq s_0, \\
		\pi_{n}(\cdot \mid s) & \text{if } s = s_0.
	\end{cases} \]
	This way, we guarantee \(\hat{\pi} \in \mathcal{N}(\pi_n)\) because \(\pi_{n+1} \in \mathcal{N}(\pi_n)\) is forced by the mirror learning update \eqref{eq:mirror_update} and the distance between \(\hat{\pi}\) and \(\pi_n\) is similar to the distance between \(\pi_{n+1}\) and \(\pi_n\) at every \(s \neq s_0\) but smaller at \(s = s_0\). 
	
	By assumption, we have at \(s_0\) that 
	\begin{align*}
		\bigl[\mathcal{M}^{\hat{\pi}}_\mathfrak{D} V_{\pi_n} \bigr](s_0) &= \mathbb{E}_{A \sim \hat{\pi}} \Bigl[Q_{\pi_n}(s_0,A) \Bigr] - \frac{\nu^{\hat{\pi}}_{\pi_n}}{d^{\pi_n}} \mathfrak{D}_{\pi_n}(\hat{\pi} \mid s_0) \\
		&= \mathbb{E}_{A \sim \pi_{n}} \Bigl[Q_{\pi_n}(s_0,A) \Bigr] - \frac{\nu^{\pi_{n}}_{\pi_n}}{d^{\pi_n}} \mathfrak{D}_{\pi_n}(\pi_{n} \mid s_0) \\
		&= \bigl[\mathcal{M}^{\pi_{n}}_\mathfrak{D} V_{\pi_n} \bigr](s_0) \\
		&> \bigl[\mathcal{M}^{\pi_{n+1}}_\mathfrak{D} V_{\pi_n} \bigr](s_0).
	\end{align*} 
	
	It follows that 
	\begin{align*}
		\mathbb{E}_{S \sim d^{\pi_n}} \biggl[ \Bigl[\mathcal{M}^{\hat{\pi}}_\mathfrak{D} V_{\pi_n} \Bigr](S) \biggr] - \mathbb{E}_{S \sim d^{\pi_n}} \biggl[ \Bigl[\mathcal{M}^{\pi_{n+1}}_\mathfrak{D} V_{\pi_n} \Bigr](S) \biggr] = d^{\pi_n}(s_0) \biggl( \Bigl[\mathcal{M}^{\hat{\pi}}_\mathfrak{D} V_{\pi_n} \Bigr](s_0) - \Bigl[\mathcal{M}^{\pi_{n+1}}_\mathfrak{D} V_{\pi_n} \Bigr](s_0) \biggr) > 0,
	\end{align*}
	where we used that \(\Bigl[\mathcal{M}^{\hat{\pi}}_\mathfrak{D} V_{\pi_n} \Bigr](s) = \Bigl[\mathcal{M}^{\pi_{n+1}}_\mathfrak{D} V_{\pi_n} \Bigr](s)\) for \(s \neq s_0\). Thus, 
	\[\mathbb{E}_{S \sim d^{\pi_n}} \biggl[ \Bigl[\mathcal{M}^{\hat{\pi}}_\mathfrak{D} V_{\pi_n} \Bigr](S) \biggr] > \mathbb{E}_{S \sim d^{\pi_n}} \biggl[ \Bigl[\mathcal{M}^{\pi_{n+1}}_\mathfrak{D} V_{\pi_n} \Bigr](S) \biggr],\]
	which contradicts the mirror learning update rule, i.e. that 
	\[ \mathbb{E}_{S \sim d^{\pi_n}} \biggl[ \Bigl[\mathcal{M}^{\pi_{n+1}}_\mathfrak{D} V_{\pi_n} \Bigr](S) \biggr] = \max_{\bar{\pi} \in \mathcal{N}(\pi_n)} \mathbb{E}_{S \sim d^{\pi_n}} \biggl[ \Bigl[\mathcal{M}^{\bar{\pi}}_\mathfrak{D} V_{\pi_n} \Bigr](S) \biggl]. \]
	Hence, we have shown that the sequence of policies \(\bigl( \pi_n \bigr)^\infty_{n=0}\) created by the mirror learning updates monotonically increases the mirror learning operator at every state. Next, we show that this property, i.e. \(\bigl[\mathcal{M}^{\pi_{n+1}}_\mathfrak{D} V_{\pi_n}\bigr] (s) \geq \bigl[\mathcal{M}^{\pi_n}_\mathfrak{D} V_{\pi_n}\bigr] (s)\),
	implies the monotonic improvement in the value function
	\begin{equation}
		V_{\pi_{n+1}} (s) \geq V_{\pi_n} (s) \label{eq:mirror_value_improv}
	\end{equation}
	for all \(s \in \mathcal{S}\) and \(n \in \mathbb{N}_0\).
	
	By using the definitions of the value function \(V_\pi\), the action-value function \(Q_\pi\), the mirror learning operator \(\mathcal{M}^{\bar{\pi}}_\mathfrak{D} V_{\pi}\) and the identity \(\mathfrak{D}_{\pi}(\pi \mid s) = 0\), adding zeros and rearranging, we obtain
	\begin{align}
		V_{\pi_{n+1}}(s) - V_{\pi_n}(s)
		&= \mathbb{E}_{\pi_{n+1}} \Bigl[R + \gamma V_{\pi_{n+1}}(S') \Bigr] - \mathbb{E}_{\pi_{n}} \Bigl[R + \gamma V_{\pi_{n}}(S') \Bigr] \nonumber \\
		&= \mathbb{E}_{\pi_{n+1}} \Bigl[R + \gamma V_{\pi_{n+1}}(S') \Bigr] - \mathbb{E}_{\pi_{n}} \Bigl[R + \gamma V_{\pi_{n}}(S') \Bigr] \nonumber \\
		&\qquad + \frac{\nu^{\pi_{n+1}}_{\pi_n}(s)}{d^{\pi_n}(s)} \mathfrak{D}_{\pi_n}(\pi_{n+1} \mid s) - \frac{\nu^{\pi_{n+1}}_{\pi_n}(s)}{d^{\pi_n}(s)} \mathfrak{D}_{\pi_n}(\pi_{n+1} \mid s) \nonumber \\
		&= \mathbb{E}_{\pi_{n+1}} \Bigl[R + \gamma V_{\pi_{n+1}}(S') + \gamma V_{\pi_{n}}(S') - \gamma V_{\pi_{n}}(S') \Bigr] - \mathbb{E}_{\pi_{n}} \Bigl[R + \gamma V_{\pi_{n}}(S') \Bigr] \nonumber \\
		&\qquad + \frac{\nu^{\pi_{n+1}}_{\pi_n}(s)}{d^{\pi_n}(s)} \mathfrak{D}_{\pi_n}(\pi_{n+1} \mid s) - \frac{\nu^{\pi_{n+1}}_{\pi_n}(s)}{d^{\pi_n}(s)} \mathfrak{D}_{\pi_n}(\pi_{n+1} \mid s) \nonumber \\
		&= \biggl( \mathbb{E}_{\pi_{n+1}} \Bigl[R + \gamma V_{\pi_{n}}(S') \Bigr] - \frac{\nu^{\pi_{n+1}}_{\pi_n}(s)}{d^{\pi_n}(s)} \mathfrak{D}_{\pi_n}(\pi_{n+1} \mid s) \biggr) \nonumber \\
		&\qquad - \biggl( \mathbb{E}_{\pi_{n}} \Bigl[R + \gamma V_{\pi_{n}}(S') \Bigr] - \frac{\nu^{\pi_{n}}_{\pi_n}(s)}{d^{\pi_n}(s)} \mathfrak{D}_{\pi_n}(\pi_{n} \mid s) \biggr) \nonumber \\
		&\qquad + \gamma \mathbb{E}_{\pi_{n+1}} \Bigl[V_{\pi_{n+1}}(S') - V_{\pi_{n}}(S') \Bigr] + \frac{\nu^{\pi_{n+1}}_{\pi_n}(s)}{d^{\pi_n}(s)} \mathfrak{D}_{\pi_n}(\pi_{n+1} \mid s) \nonumber \\
		&= \biggl( \mathbb{E}_{\pi_{n+1}} \Bigl[Q_{\pi_{n}}(s, A) \Bigr] - \frac{\nu^{\pi_{n+1}}_{\pi_n}(s)}{d^{\pi_n}(s)} \mathfrak{D}_{\pi_n}(\pi_{n+1} \mid s) \biggr) \nonumber \\
		&\qquad - \biggl( \mathbb{E}_{\pi_{n}} \Bigl[Q_{\pi_{n}}(s, A) \Bigr] - \frac{\nu^{\pi_{n}}_{\pi_n}(s)}{d^{\pi_n}(s)} \mathfrak{D}_{\pi_n}(\pi_{n} \mid s) \biggr) \nonumber \\
		&\qquad + \gamma \mathbb{E}_{\pi_{n+1}} \Bigl[V_{\pi_{n+1}}(S') - V_{\pi_{n}}(S') \Bigr] + \frac{\nu^{\pi_{n+1}}_{\pi_n}(s)}{d^{\pi_n}(s)} \mathfrak{D}_{\pi_n}(\pi_{n+1} \mid s) \nonumber \\
		\begin{split}
		&= \bigl[\mathcal{M}^{\pi_{n+1}}_\mathfrak{D} V_{\pi_n} \bigr] - \bigl[ \mathcal{M}^{\pi_n}_\mathfrak{D} V_{\pi_n} \bigr] \\
		&\qquad + \gamma \mathbb{E}_{\pi_{n+1}} \Bigl[V_{\pi_{n+1}}(S') - V_{\pi_{n}}(S') \Bigr] + \frac{\nu^{\pi_{n+1}}_{\pi_n}(s)}{d^{\pi_n}(s)} \mathfrak{D}_{\pi_n}(\pi_{n+1} \mid s) \label{eq:mirror_1}
		\end{split} \\
		&\geq \gamma \mathbb{E}_{\pi_{n+1}} \Bigl[V_{\pi_{n+1}}(S') - V_{\pi_{n}}(S') \Bigr] + \frac{\nu^{\pi_{n+1}}_{\pi_n}(s)}{d^{\pi_n}(s)} \mathfrak{D}_{\pi_n}(\pi_{n+1} \mid s), \nonumber
	\end{align}
	where we used Inequality \eqref{eq:mirror_operator_impr} in the final step. We take the infimum over states and replace the expectation with another infimum over states as a lower bound:
	\begin{align*}
		\inf_{s \in \mathcal{S}} \Bigl[ V_{\pi_{n+1}}(s) - V_{\pi_n}(s) \Bigr] &\geq \inf_{s \in \mathcal{S}} \biggl[ \gamma \mathbb{E}_{\pi_{n+1}} \Bigl[V_{\pi_{n+1}}(S') - V_{\pi_{n}}(S') \Bigr] + \frac{\nu^{\pi_{n+1}}_{\pi_n}(s)}{d^{\pi_n}(s)} \mathfrak{D}_{\pi_n}(\pi_{n+1} \mid s) \biggr] \\
		&\geq \inf_{s \in \mathcal{S}} \biggl[ \gamma \inf_{s' \in \mathcal{S}} \Bigl[V_{\pi_{n+1}}(s') - V_{\pi_{n}}(s') \Bigr] + \frac{\nu^{\pi_{n+1}}_{\pi_n}(s)}{d^{\pi_n}(s)} \mathfrak{D}_{\pi_n}(\pi_{n+1} \mid s) \biggr] \\
		&= \gamma \inf_{s' \in \mathcal{S}} \Bigl[V_{\pi_{n+1}}(s') - V_{\pi_{n}}(s') \Bigr] + \inf_{s \in \mathcal{S}} \biggl[ \frac{\nu^{\pi_{n+1}}_{\pi_n}(s)}{d^{\pi_n}(s)} \mathfrak{D}_{\pi_n}(\pi_{n+1} \mid s) \biggr]. 
	\end{align*}
	
	From this expression, we obtain
	\[ \inf_{s \in \mathcal{S}} \Bigl[ V_{\pi_{n+1}}(s) - V_{\pi_n}(s) \Bigr] \geq \frac{1}{1 - \gamma} \inf_{s \in \mathcal{S}} \biggl[ \frac{\nu^{\pi_{n+1}}_{\pi_n}(s)}{d^{\pi_n}(s)} \mathfrak{D}_{\pi_n}(\pi_{n+1} \mid s) \biggr] \geq 0,\]
	since \(\nu^{\pi_{n+1}}_{\pi_n}(s)\) and \(d^{\pi_n}(s)\) are probabilities and the drift \(\mathfrak{D}\) is non-negative. Thus, we have proven the monotonic improvement of value functions \(V_{\pi_{n+1}} (s) \geq V_{\pi_n} (s)\).
	We observe that this already implies the strict monotonic improvement property 
	\[ J(\pi_{n+1}) \geq J(\pi_n) + \mathbb{E}_{S \sim p_0} \biggl[ \frac{\nu^{\pi_{n+1}}_{\pi_n}(S)}{d^{\pi_n}(S)} \mathfrak{D}_{\pi_n} (\pi_{n+1} \mid S) \biggr] \]
	for all \(n \in \mathbb{N}_0\) since applying \eqref{eq:mirror_value_improv} and \eqref{eq:mirror_operator_impr} sequentially to \eqref{eq:mirror_1} yields for all \(s \in \mathcal{S}\)
	\begin{align*}
		V_{\pi_{n+1}} (s) - V_{\pi_n} (s) 
		&= \bigl[\mathcal{M}^{\pi_{n+1}}_\mathfrak{D} V_{\pi_n} \bigr] - \bigl[ \mathcal{M}^{\pi_n}_\mathfrak{D} V_{\pi_n} \bigr] \\
		&\quad + \gamma \mathbb{E}_{\pi_{n+1}} \Bigl[V_{\pi_{n+1}}(S') - V_{\pi_{n}}(S') \Bigr] + \frac{\nu^{\pi_{n+1}}_{\pi_n}(s)}{d^{\pi_n}(s)} \mathfrak{D}_{\pi_n}(\pi_{n+1} \mid s) \\
		&\geq \bigl[\mathcal{M}^{\pi_{n+1}}_\mathfrak{D} V_{\pi_n} \bigr] - \bigl[ \mathcal{M}^{\pi_n}_\mathfrak{D} V_{\pi_n} \bigr] +  \frac{\nu^{\pi_{n+1}}_{\pi_n}(s)}{d^{\pi_n}(s)} \mathfrak{D}_{\pi_n}(\pi_{n+1} \mid s) \\
		&\geq \frac{\nu^{\pi_{n+1}}_{\pi_n}(s)}{d^{\pi_n}(s)} \mathfrak{D}_{\pi_n}(\pi_{n+1} \mid s).
	\end{align*}
	We obtain the desired inequality by taking the expectation over \(S \sim p_0\).
	
	\textbf{Step 2}
	
	From step 1, we know that the value functions increase uniformly over the state space, i.e. 
	\( V_{\pi_{n+1}}(s) - V_{\pi_n} (s) \geq 0\), for all s \(\in \mathcal{S}\), \(n \in \mathbb{N}_0\).
	As the rewards \(r\) are bounded by assumption and we consider the episodic case where episode lengths are also bounded by \(T\) (albeit the same argument applies for infinite time horizons via discounting), the value functions \(V_\pi(s) = \mathbb{E}_{\pi}\bigl[\sum_{k=0}^{T} \gamma^k R_{t+k+1} \mid S_t = s\bigr]\) are also uniformly bounded. Via the Monotone Convergence Theorem (Theorem \ref{th:monotone_convergence}), the sequence of value functions \(\bigl( V_{\pi_n} \bigr)^\infty_{n=0}\) must therefore converge to some limit \(V\).
	
	\textbf{Step 3} 
	
	Now, we show the existence of limit points of the sequence of policies \(\bigl(\pi_n\bigr)^\infty_{n=0}\) and prove by contradiction that these are fixed points of the mirror learning update \eqref{eq:mirror_update}.
	
	The sequence \(\bigl(\pi_n\bigr)^\infty_{n=0}\) is bounded, thus the Bolzano-Weierstrass Theorem (Theorem \ref{th:bolzano}) yields the existence of limits \(\bar{\pi}\) to which some respective subsequence \(\bigl(\pi_{n_i}\bigr)^\infty_{i=0}\) converges. We denote this set of limit points as \(L\Pi\). 
	% alternatively: Arzela-Ascoli theorem (bolzano-weierstrass for functions)
	For each element of such a convergent subsequence \(\bigl(\pi_{n_i}\bigr)^\infty_{i=0}\), mirror learning solves the optimization problem 
	\begin{equation}
		\max_{\pi \in \mathcal{N}(\pi_{n_i})} \mathbb{E}_{S \sim d^{\pi_{n_i}}} \Bigl[ \bigl[ \mathcal{M}^\pi_\mathfrak{D} V_{\pi_{n_i}} \bigr] (S) \Bigr] \label{eq:mirror_subsequence_operator}
	\end{equation}
	This expression is continuous in \(\pi_{n_i}\) due to the continuity of the value function \cite{kuba2021trust}, the drift and neighborhood operator (by definition) and the sampling distribution (by assumption). Let \(\bar{\pi} = \lim_{i \to \infty} \pi_{n_i}\). Berge's Maximum Theorem (Theorem \ref{th:berge}) \cite{ausubel1993generalized} now guarantees the convergence of the above expression, yielding
	\begin{equation}
		\lim_{i \to \infty} \max_{\pi \in \mathcal{N}(\pi_{n_i})} \mathbb{E}_{S \sim d^{\pi_{n_i}}} \Bigl[ \bigl[ \mathcal{M}^\pi_\mathfrak{D} V_{\pi_{n_i}} \bigr] (S) \Bigr] = \max_{\pi \in \mathcal{N}(\bar{\pi})} \mathbb{E}_{S \sim d^{\bar{\pi}}} \Bigl[ \bigl[ \mathcal{M}^\pi_\mathfrak{D} V_{\bar{\pi}} \bigr] (S) \Bigr]. \label{eq:mirror_limit_operator}
	\end{equation}
	For all \(i \in \mathbb{N}_0\), we obtain the next policy \(\pi_{n_i+1}\) as the argmax of Expression \eqref{eq:mirror_subsequence_operator}. Since this expression converges to the limit in \eqref{eq:mirror_limit_operator}, there must exist some subsequence \(\bigl( \pi_{n_{i_k}+1} \bigr)^\infty_{k=0}\) of \(\bigl( \pi_{n_i+1} \bigr)^\infty_{i=0}\) which converges to some policy \(\pi'\), which is the solution to the optimization problem \eqref{eq:mirror_limit_operator}. We now show by contradiction that \(\pi' = \bar{\pi}\), which implies that \(\bar{\pi}\) is a fixed point of the mirror learning update rule. 
	
	Suppose \(\pi' \neq \bar{\pi}\). As \(\pi'\) is induced by the mirror learning update rule, the monotonic improvement results from step 1 yield
	\begin{equation}
		Q_{\pi'} (s,a) = \mathbb{E}_{R,S' \sim P} \Bigl[R + \gamma V_{\pi'}(S')\Bigr] \geq \mathbb{E}_{R,S' \sim P} \Bigl[R + \gamma V_{\bar{\pi}}(S')\Bigr] = Q_{\bar{\pi}} (s,a) \label{eq:mirror_limit_value_impr}
	\end{equation} 
	and 
	\[\bigl[\mathcal{M}^{\pi'}_\mathfrak{D} V_{\bar{\pi}}\bigr] (s) \geq \bigl[\mathcal{M}^{\bar{\pi}}_\mathfrak{D} V_{\bar{\pi}}\bigr] (s).\]
	Suppose 
	\[\mathbb{E}_{S \sim d^{\bar{\pi}}} \Bigl[ \bigl[\mathcal{M}^{\pi'}_\mathfrak{D} V_{\bar{\pi}}\bigr] (S) \Bigr] > \mathbb{E}_{S \sim d^{\bar{\pi}}} \Bigl[ \bigl[\mathcal{M}^{\bar{\pi}}_\mathfrak{D} V_{\bar{\pi}}\bigr] (S) \Bigr],\]
	% in reduced MDP this is implied by \(\pi' \neq \bar{\pi}\).
	then we have for some state \(s\)
	\begin{align*}
		\bigl[ \mathcal{M}^{\pi'}_\mathfrak{D} V_{\bar{\pi}} \bigr] (s) &= \mathbb{E}_{\pi'} \Bigl[Q_{\bar{\pi}}(s,A) \Bigr] - \frac{\nu^{\pi'}_{\bar{\pi}}(s)}{d^{\bar{\pi}}(s)}\mathfrak{D}_{\bar{\pi}} (\pi' \mid s) \\
		&> \bigl[\mathcal{M}^{\bar{\pi}}_\mathfrak{D} V_{\bar{\pi}}\bigr] (s) = \mathbb{E}_{\bar{\pi}} \Bigl[Q_{\bar{\pi}}(s,A) \Bigr] - \frac{\nu^{\bar{\pi}}_{\bar{\pi}}(s)}{d^{\bar{\pi}}(s)}\mathfrak{D}_{\bar{\pi}} (\bar{\pi} \mid s) \\
		&= \mathbb{E}_{\bar{\pi}} \Bigl[Q_{\bar{\pi}}(s,A) \Bigr] = V_{\bar{\pi}}(s) = V(s).
	\end{align*}
	In the last equality, we used that the sequence of value functions converges to some unique limit \(V\), which implies \(V_{\bar{\pi}} = V\). We obtain the following via this result, Inequality \eqref{eq:mirror_limit_value_impr}, which must be strict for \(s\), and the non-negativity of the drift \(\mathfrak{D}\):
	\begin{align*}
		V_{\pi'}(s) &= \mathbb{E}_{\pi'} \bigl[Q_{\pi'}(s,A) \bigr] \\
		&> \mathbb{E}_{\pi'} \bigl[Q_{\bar{\pi}}(s,A) \bigr] \\
		&> \mathbb{E}_{\pi'} \bigl[Q_{\bar{\pi}}(s,A) \bigr] - \frac{\nu^{\pi'}_{\bar{\pi}}(s)}{d^{\bar{\pi}}(s)}\mathfrak{D}_{\bar{\pi}} (\pi' \mid s) \\
		&> V(s).
	\end{align*}
	However due to \(V_{\pi'}(s) = \lim_{k \to \infty} V_{\pi_{n_{i_k}+1}}\), this contradicts the uniqueness of the value limit, which gives \(V_{\pi'} = V\). Therefore, we have shown by contradiction that 
	\begin{equation*}
		\bar{\pi} \in \argmax_{\pi \in \mathcal{N}(\bar{\pi})} \mathbb{E}_{S \sim d^{\bar{\pi}}} \Bigl[ \bigl[ \mathcal{M}^\pi_\mathfrak{D} V_{\bar{\pi}} \bigr] (S) \Bigr].
	\end{equation*}
	
	\textbf{Step 4} 
	
	Following step 3, let \(\bar{\pi}\) be a limit point of \(\bigl(\pi_n \bigr)^\infty_{n=0}\). We will show by contradiction that \(\bar{\pi}\) is also a fixed point of GPI (see Theorem \ref{th:gpi}), i.e. that for all \(s \in \mathcal{S}\)
	\begin{equation}
		\bar{\pi} \in \argmax_{\pi \in \Pi} \mathbb{E}_{A \sim \pi} \bigl[A_{\bar{\pi}}(s,A) \bigr] = \argmax_{\pi \in \Pi} \mathbb{E}_{A \sim \pi} \bigl[Q_{\bar{\pi}}(s,A) \bigr]. \label{eq:mirror_gpi_policy}
	\end{equation}
	From step 3, we know that 
	\begin{align}
		\bar{\pi} &\in \argmax_{\pi \in \Pi} \biggl[ \mathbb{E}_{S \sim d^{\bar{\pi}}, A \sim \pi} \Bigl[Q_{\bar{\pi}}(S,A) - \frac{\nu^{\pi}_{\bar{\pi}}(S)}{d^{\bar{\pi}}(S)}\mathfrak{D}_{\bar{\pi}} (\pi \mid S) \Bigr] \biggr] \nonumber \\
		&= \argmax_{\pi \in \Pi} \biggl[ \mathbb{E}_{S \sim d^{\bar{\pi}}, A \sim \pi} \Bigl[A_{\bar{\pi}}(S,A) - \frac{\nu^{\pi}_{\bar{\pi}}(S)}{d^{\bar{\pi}}(S)}\mathfrak{D}_{\bar{\pi}} (\pi \mid S) \Bigr] \biggr] \label{eq:mirror_s4_contr}
	\end{align}
	as subtracting an action-independent baseline does not affect the argmax. Now, we assume the existence of a policy \(\pi'\) and state \(s\) with
	\begin{equation}
		\mathbb{E}_{A \sim \pi'} \bigl[A_{\bar{\pi}}(s,A) \bigr] > \mathbb{E}_{A \sim \bar{\pi}} \bigl[A_{\bar{\pi}}(s,A) \bigr] = 0. \label{eq:mirror_s4_assumption}
	\end{equation}	

	Let \(m = \lvert \mathcal{A} \rvert\) denote the size of the action space. Then, we can write for any policy \(\pi\), \(\pi(\cdot \mid s) = \bigl( x_1, \hdots, x_{m-1}, 1 - \sum^{m-1}_{i=1} x_i \bigr)\). With this notation, we have
	\begin{align*}
		\mathbb{E}_{A \sim \pi} \bigl[A_{\bar{\pi}}(s,A) \bigr] &= \sum^m_{i=1} \pi(a_i \mid s)A_{\bar{\pi}}(s,a_i) \\
		&= \sum^{m-1}_{i=1} x_i A_{\bar{\pi}}(s,a_i) + \Bigl(1 - \sum^{m-1}_{i=1} x_i \Bigr)A_{\bar{\pi}}(s,a_m) \\
		&= \sum^{m-1}_{i=1} x_i \Bigl( A_{\bar{\pi}}(s,a_i) - A_{\bar{\pi}}(s,a_m) \Bigr) + A_{\bar{\pi}}(s,a_m).
	\end{align*}
	This shows that \(\mathbb{E}_{A \sim \pi} \bigl[A_{\bar{\pi}}(s,A) \bigr]\) is an affine function of \(\pi(\cdot \mid s)\), which implies that all its Gâteaux derivatives are constant in \(\Delta(\mathcal{A})\) for fixed directions. Due to Inequality \eqref{eq:mirror_s4_assumption}, this further implies that the Gâteaux derivatives in direction from \(\bar{\pi}\) to \(\pi'\) are strictly positive. Additionally, we have that the Gâteaux derivatives of \(\frac{\nu^{\pi}_{\bar{\pi}}(s)}{d^{\bar{\pi}}(s)}\mathfrak{D}_{\bar{\pi}} (\pi \mid s)\) are zero at \(\pi = \bar{\pi}\). We see this by establishing  lower and upper bounds, which both have derivatives of zero due to the independence of \(\pi\) and the zero-gradient property of the drift:
	\begin{equation*}
		\frac{1}{d^{\bar{\pi}}(s)}\mathfrak{D}_{\bar{\pi}} (\bar{\pi} \mid s) = \frac{\nu^{\bar{\pi}}_{\bar{\pi}}(s)}{d^{\bar{\pi}}(s)}\mathfrak{D}_{\bar{\pi}} (\bar{\pi} \mid s) = 0 \leq \frac{\nu^{\pi}_{\bar{\pi}}(s)}{d^{\bar{\pi}}(s)}\mathfrak{D}_{\bar{\pi}} (\pi \mid s) \leq \frac{1}{d^{\bar{\pi}}(s)}\mathfrak{D}_{\bar{\pi}} (\pi \mid s)
	\end{equation*}
	recalling that \(\mathfrak{D}_{\bar{\pi}} (\bar{\pi} \mid s) = 0\) for any \(s \in \mathcal{S}\) and using \(\nu^{\pi}_{\bar{\pi}}(s) \leq 1\). In combination, we obtain that the Gâteaux derivative of \(\mathbb{E}_{A \sim \pi} \bigl[A_{\bar{\pi}}(s,A)  \bigr] - \frac{\nu^{\pi}_{\bar{\pi}}(s)}{d^{\bar{\pi}}(s)}\mathfrak{D}_{\bar{\pi}} (\pi \mid s)\) is strictly positive as well.
	% slight abuse of notation for \(\nu\) (means ?)
	Therefore, we can find some policy \(\hat{\pi}(\cdot \mid s)\) by taking a sufficiently small step from \(\bar{\pi}(\cdot \mid s)\) in the direction of \(\pi'(\cdot \mid s)\) such that \(\hat{\pi} \in \mathcal{N}(\bar{\pi})\) and
	\begin{equation*}
		\mathbb{E}_{A \sim \hat{\pi}} \bigl[A_{\bar{\pi}}(s,A)  \bigr] - \frac{\nu^{\hat{\pi}}_{\bar{\pi}}(s)}{d^{\bar{\pi}}(s)}\mathfrak{D}_{\bar{\pi}} (\hat{\pi} \mid s) > \mathbb{E}_{A \sim \bar{\pi}} \bigl[A_{\bar{\pi}}(s,A)  \bigr] - \frac{\nu^{\bar{\pi}}_{\bar{\pi}}(s)}{d^{\bar{\pi}}(s)}\mathfrak{D}_{\bar{\pi}} (\bar{\pi} \mid s) = 0.
	\end{equation*}
	With this, we can construct a policy which contradicts Equation \eqref{eq:mirror_s4_contr}. Let \(\tilde{\pi}\) be defined such that 
	\[\tilde{\pi}(\cdot \mid x) = \begin{cases}
		\bar{\pi}(\cdot \mid x) & \text{if } x \neq s, \\
		\hat{\pi}(\cdot \mid x) & \text{if } x = s.
	\end{cases} \]
	This guarantees \(\tilde{\pi} \in \mathcal{N}(\bar{\pi})\) and 
	\begin{align*}
		\mathbb{E}_{S \sim d^{\bar{\pi}}} &\biggl[ \mathbb{E}_{A \sim \tilde{\pi}} \bigl[A_{\bar{\pi}}(S,A)  \bigr] - \frac{\nu^{\tilde{\pi}}_{\bar{\pi}}(S)}{d^{\bar{\pi}}(S)}\mathfrak{D}_{\bar{\pi}} (\tilde{\pi} \mid S) \biggr] \\
		&= d^{\bar{\pi}}(s)\biggl( \mathbb{E}_{A \sim \tilde{\pi}} \bigl[A_{\bar{\pi}}(s,A)  \bigr] - \frac{\nu^{\tilde{\pi}}_{\bar{\pi}}(S)}{d^{\bar{\pi}}(s)}\mathfrak{D}_{\bar{\pi}} (\tilde{\pi} \mid s) \biggr) \\
		&= d^{\bar{\pi}}(s)\biggl( \mathbb{E}_{A \sim \hat{\pi}} \bigl[A_{\bar{\pi}}(s,A)  \bigr] - \frac{\nu^{\hat{\pi}}_{\bar{\pi}}(s)}{d^{\bar{\pi}}(s)}\mathfrak{D}_{\bar{\pi}} (\hat{\pi} \mid s) \biggr) \\
		&> 0,
	\end{align*}
	which contradicts Equation \eqref{eq:mirror_s4_contr}, so the assumption \eqref{eq:mirror_s4_assumption} must be wrong, proving
	\[ \bar{\pi} = \argmax_{\pi \in \Pi} \mathbb{E}_{A \sim \pi} \bigl[ A_{\bar{\pi}}(s,A) \bigr] = \argmax_{\pi \in \Pi} \mathbb{E}_{A \sim \pi} \bigl[ Q_{\bar{\pi}}(s,A) \bigr]. \]
	
	\textbf{Step 5}
	
	The main result \eqref{eq:mirror_gpi_policy} from step 4 shows that any limit point \(\bar{\pi}\) of \(\bigl(\pi_n \bigr)_{n \in \mathbb{N}}\) is also a fixed point of GPI. Thus, as corollaries all properties induced by GPI (see Theorem \ref{th:gpi}) apply to \(\bar{\pi} \in L\Pi\). Particularly, we have the optimality of \(\bar{\pi}\), the value function optimality \(V = V_{\bar{\pi}} = V^*\) and thereby also the maximality of returns as 
	\[\lim_{n \to \infty} J(\pi_n) = \lim_{n \to \infty} \mathbb{E}_{S \sim p_0} \bigl[ V_{\pi_n}(S) \bigr] = \mathbb{E}_{S \sim p_0} \bigl[ V^*(S) \bigr] = \max_{\pi \in \Pi} J(\pi).\]
	Thus, we have shown all properties as claimed by Theorem \ref{th:mirror_learning}.
\end{proof}

We close this section with some remarks. In practice, exact updates according to the mirror learning update rule \eqref{eq:mirror_update} are generally infeasible. Instead, we can sample the expectation to obtain batch estimators over a batch \(\mathcal{D}\) of transitions
\[ \frac{1}{\lvert \mathcal{D} \rvert} \sum_{s,a \in \mathcal{D}} \Bigl(Q_{\pi_\text{old}}(s,a) - \frac{\nu^{{\pi_\text{new}}}_{\pi_\text{old}}}{d^{\pi_\text{old}}} \mathfrak{D}_{\pi_\text{old}}(\pi_\text{new} \mid s) \Bigr), \]
where \(Q_{\pi_\text{old}}\) has to be estimated as well. These batch estimators can also only be approximately optimized each iteration via gradient ascent to update the policy. Given these approximations and the at-best local convergence of gradient ascent, the outlaid convergence properties remain theoretical.

% almost sure (see paper)
% Arzela-Ascoli theorem (bolzano-weierstrass for functions)

% comparative analysis
% numerical experiments
\section{Numerical Experiments}\label{sec:experiments}

Now, we empirically compare the discussed policy gradient algorithms. Consistent with the original works \cite{mnih2016asynchronous, schulman2015trust, schulman2017proximal, song2019v}, we compare them on the established MuJoCo task suite \cite{todorov2012mujoco}, accessed through the Gymnasium library \cite{towers_gymnasium_2023}. MuJoCo features robotics simulations, where the tasks are to control and move robots of different shapes by applying torques to each joint. 
% \todo{\emph{[TODO: and Atari?]}}
% The Atari benchmark is a collection of classic Atari games with the goal of scoring points according the respective game rules.
% Remove explanations in Arxiv version ?

Our implementations build on the PPO implementation from the BRAX library \cite{brax2021github} and are written in JAX \cite{jax2018github}. For enhanced comparability, all algorithms that estimate advantages use GAE similarly to PPO. Instead of A3C, we use its synchronous variant A2C due to its simpler implementation. Note that A2C exhibits comparable performance as A3C \cite{weng2018PG} and only differs in that it waits for all actors to collect transitions to update them synchronously. We modify REINFORCE to average gradients over batches of transitions similarly as in the other algorithms since computing one update per environment step is computationally very costly. Note that this is however likely to improve the performance compared to a naive implementation of REINFORCE. We do not tune hyperparameters and keep choices consistent across algorithms where possible. See Appendix \autoref{sec:hyperparameters} for the hyperparameters we use. The experiments were run on a standard consumer CPU. All our implemented algorithms and the code for running the experiments can be found at \url{https://github.com/Matt00n/PolicyGradientsJax}.

\begin{figure}
     \centering
     \begin{subfigure}[b]{0.48\textwidth}
         \centering
         \includegraphics[width=\textwidth]{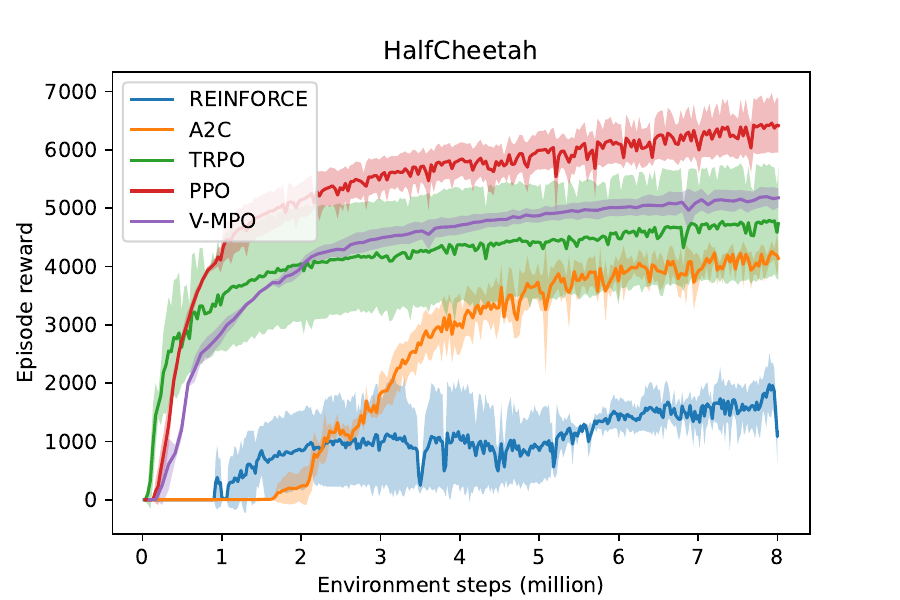}
         %\caption{$A > 0$}
         %\label{fig:y equals x}
     \end{subfigure}
     \hfill
     \begin{subfigure}[b]{0.48\textwidth}
         \centering
         \includegraphics[width=\textwidth]{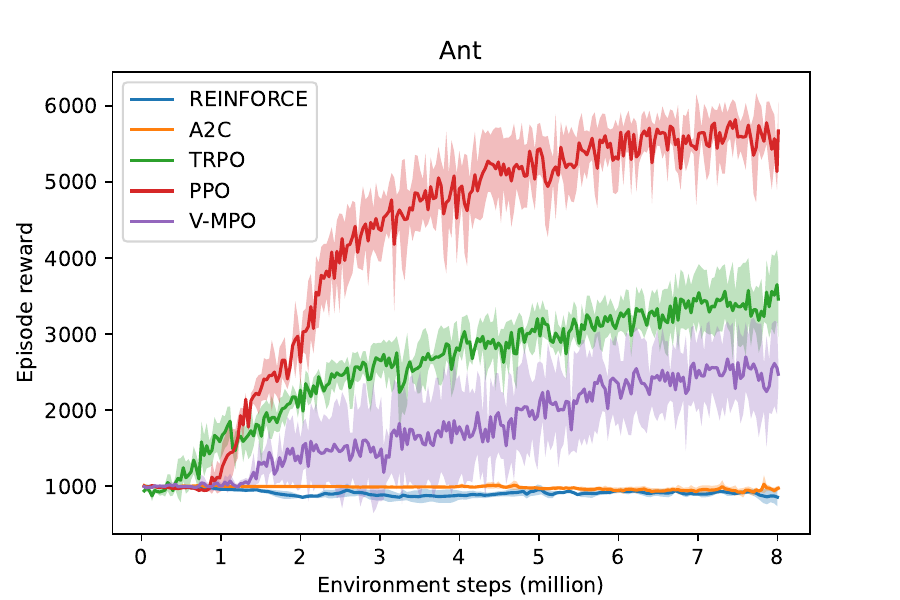}
     \end{subfigure}
     \hfill
     \begin{subfigure}[b]{0.48\textwidth}
         \centering
         \includegraphics[width=\textwidth]{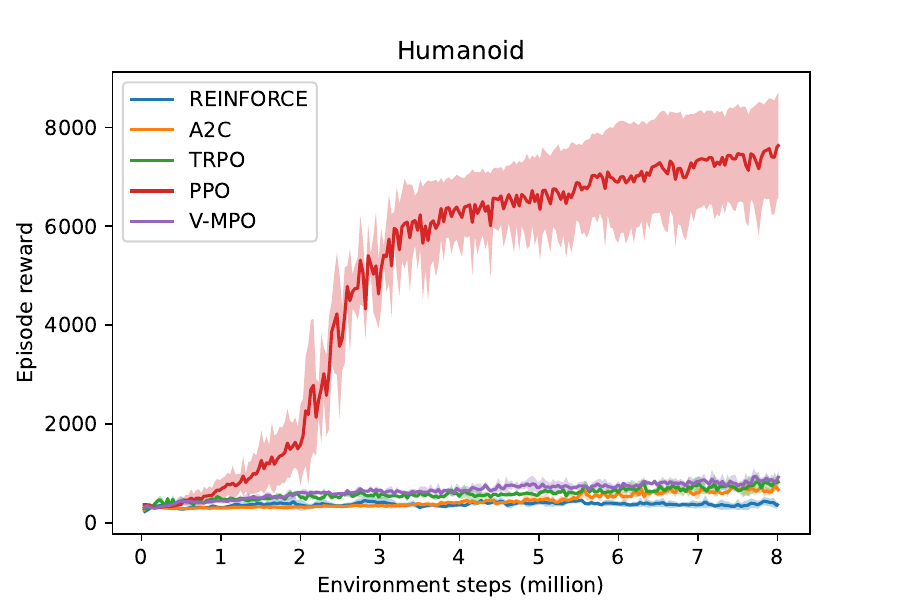}
     \end{subfigure}
     \hfill
     \begin{subfigure}[b]{0.48\textwidth}
         \centering
         \includegraphics[width=\textwidth]{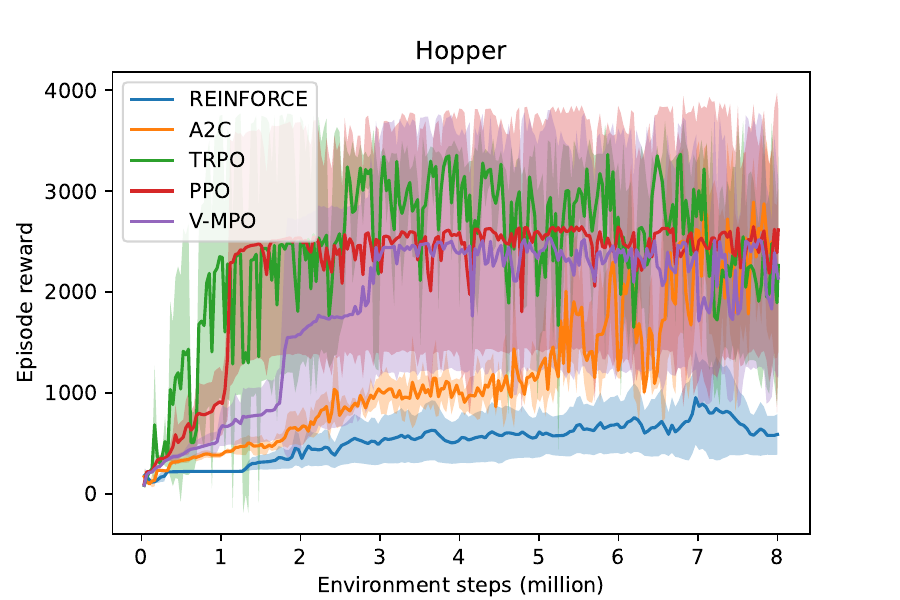}
     \end{subfigure}
        \caption{Comparison of rewards per episode during training on several MuJoCo tasks. For each algorithm, we report means and standard deviations of three runs with different random seeds.}
        \label{fig:perf_comp}
\end{figure}

In our main experiment, we compare the performance of the algorithms in terms of the achieved episodic rewards over the course of training. The performances in different MuJoCo tasks are presented in \autoref{fig:perf_comp}. We observe that PPO outperforms the other algorithms in three of four tasks by achieving higher episodic rewards while learning good policies quickly. The performance difference is most prevalent on the \emph{Humanoid}-task, the most challenging of the four, where PPO learns much stronger policies than the other algorithms. In addition, we find our implementation of PPO to be competitive with common RL libraries as shown in Appendix \ref{sec:framework_comp}.
V-MPO and TRPO are comparable in performance, with each of the two slightly outperforming the other on two out of four environments. We note that V-MPO is intended for training for billions of environment steps, such that its lower performance compared to PPO in our experiments is expected\footnote{Also see the discussions at https://openreview.net/forum?id=SylOlp4FvH on this.} \cite{song2019v}. A2C requires more interactions with the environment to reach similar performance levels as V-MPO and TRPO but fails to learn any useful policy in the \emph{Ant}-task. This slower learning\footnote{Slow in terms of the required environment steps. Note however that A2C runs significantly faster than PPO, TRPO and V-MPO in absolute time due to using less epochs per batch.} is at least partially caused by A2C only using a single update epoch per batch. REINFORCE performance worst on all environments, which is unsurprising giving the high variance of gradients in REINFORCE \cite{Sutton1998}. This also highlights the benefits of the bias-variance trade-off by the other algorithms as discussed in Section \ref{sec:comparison}. We find our performance-based ranking of the algorithms to be consistent with literature (e.g., \cite{schulman2017proximal, song2019v, andrychowicz2020matters}). 

Moreover, we remark that A2C is the only algorithm for which we used an entropy bonus because the learned policies collapsed without it. We showcase this in our expended experiments in Appendix \ref{sec:entropy_a2c}. This underlines the usefulness of the (heuristic) constraints of V-MPO, PPO and TRPO on the KL divergence, which avoid such collapses even without any entropy bonuses. To further investigate this, we show the average KL divergences between consecutive policies throughout training in \autoref{fig:kl_comp}. Here, we approximated the KL divergence using the unbiased estimator \cite{schulman2020approximating}
\begin{equation*}
	\hat{D}_{KL}\bigl(\pi_\text{old}(\cdot \mid s) \: \Vert \: \pi_\text{new}(\cdot \mid s)\bigr) = \mathbb{E}_{A \sim \pi_\text{old}}\biggl[\frac{\pi_\text{new}(A \mid s)}{\pi_\text{old}(A \mid s)} - 1 - \ln \frac{\pi_\text{new}(A \mid s)}{\pi_\text{old}(A \mid s)}\biggr]
\end{equation*}
for all algorithms except TRPO, which analytically calculates the exact KL divergence since it is used within the algorithm. We see that the KL divergences remain relatively constant for all algorithms after some initial movement. TRPO displays the most constant KL divergence, which is explained by its hard constraint. With the chosen hyperparameters, V-MPO uses the same bound on the KL divergence as TRPO, however without strictly enforcing it as outlined in the derivation of V-MPO. Thus, V-MPO's KL divergence exhibits slightly more variance then TRPO and also frequently exceeds this bound. PPO's clipping heuristic achieves a similar effect resulting in a comparable picture. Due to the lack of constraints on the KL divergence, A2C and REINFORCE show slightly more variance. Interestingly, their KL divergences are orders of magnitudes lower than for the other algorithms, especially for REINFORCE (note the logarithmic scale in \autoref{fig:kl_comp}). We reason this with A2C and REINFORCE using only a singly update epoch per batch, whereas the PPO and V-MPO use multiple epochs and TRPO uses a different update scheme via line search. In Appendix \ref{sec:epochs_a2c}, we provide experimental evidence for this hypothesis.  Additionally, we note again that the entropy bonus also stabilizes and limits the KL divergence for A2C as shown in Appendix \ref{sec:entropy_a2c}. 

\begin{figure}
     \centering
     \begin{subfigure}[b]{0.48\textwidth}
         \centering
         \includegraphics[width=\textwidth]{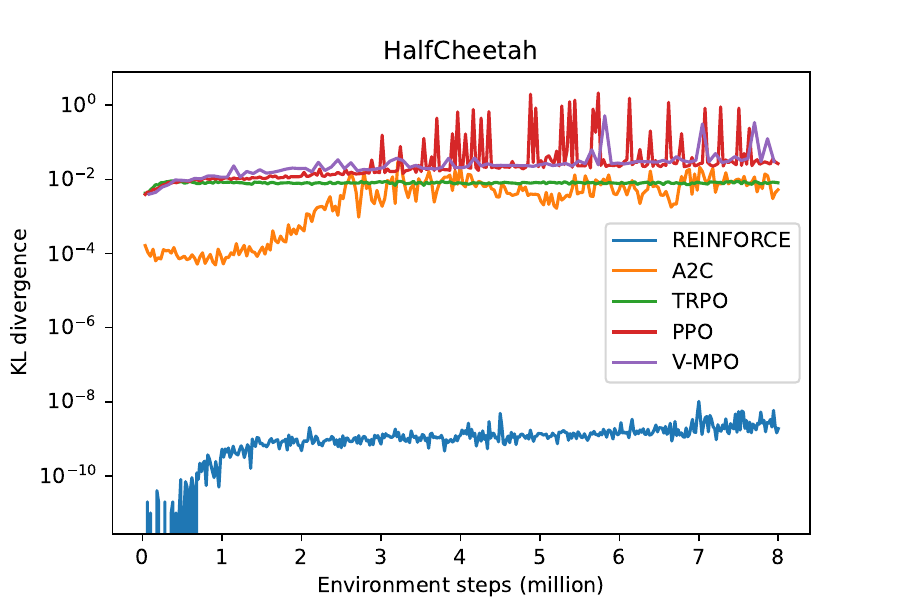}
     \end{subfigure}
     \hfill
     \begin{subfigure}[b]{0.48\textwidth}
         \centering
         \includegraphics[width=\textwidth]{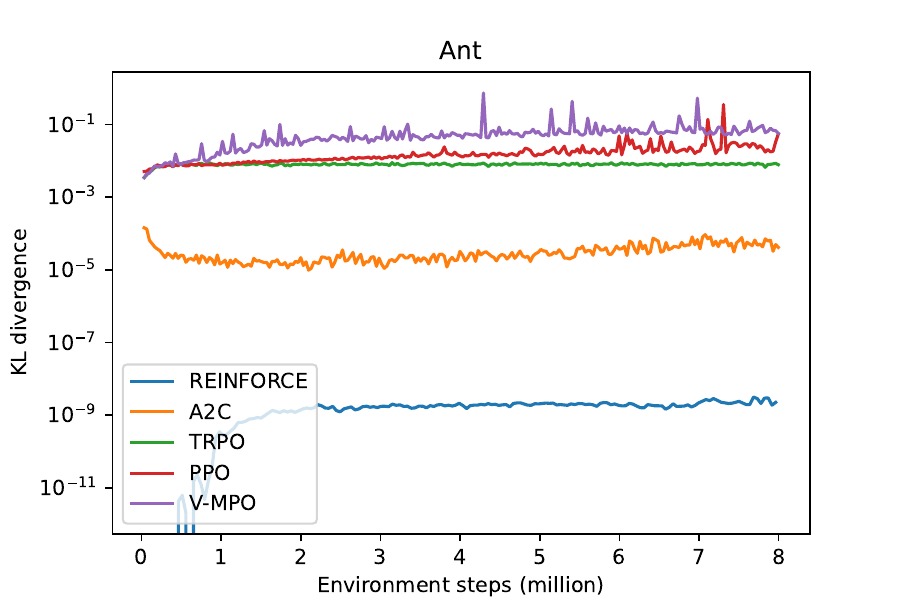}
     \end{subfigure}
     \hfill
     \begin{subfigure}[b]{0.48\textwidth}
         \centering
         \includegraphics[width=\textwidth]{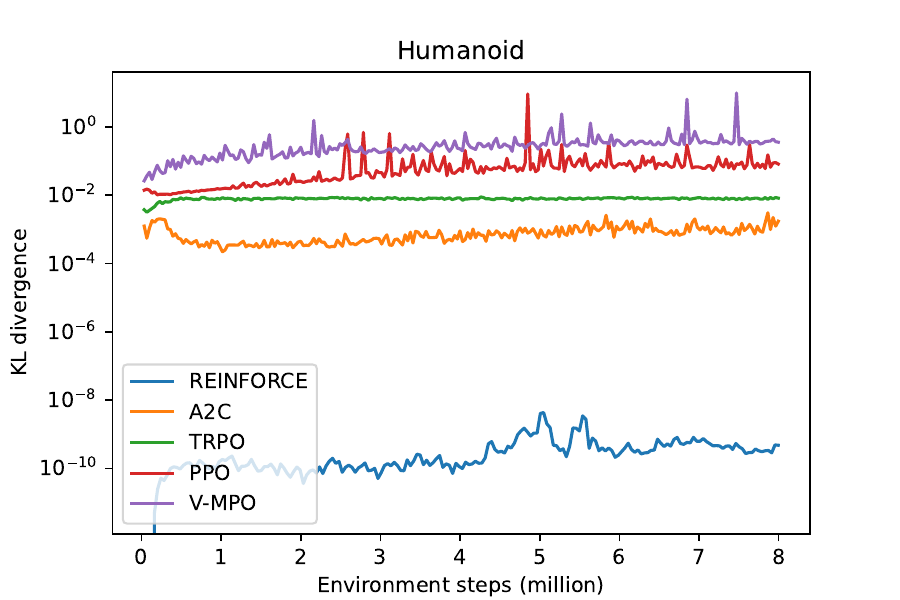}
     \end{subfigure}
     \hfill
     \begin{subfigure}[b]{0.48\textwidth}
         \centering
         \includegraphics[width=\textwidth]{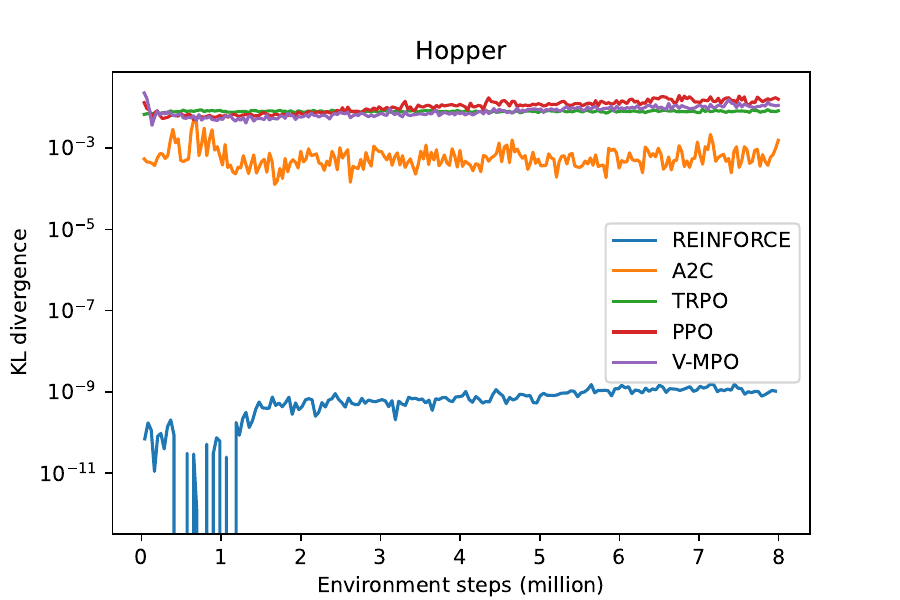}
     \end{subfigure}
        \caption{Comparison of  the average KL divergence across policies during training.}
        \label{fig:kl_comp}
\end{figure}

These findings highlight the benefits of regularization through constraining the KL divergence and incentivizing entropy. Regularization stabilizes learning and prevents a collapse of the policy. At the same time, it allows more frequent updates through multiple epochs per batch, which drastically increases the sample efficiency of the algorithms and speeds up learning.

% todo: 
%maybe vmpo with more steps

\section{Conclusion} \label{sec:conclusion}

In this work, we presented a holistic overview of on-policy policy gradient methods in reinforcement learning. We derived the theoretical foundations of policy gradient algorithms, primarily in the form of the Policy Gradient Theorem. We have shown how the most prominent policy gradient algorithms can be derived based on this theorem. We discussed common techniques used by these algorithms to stabilize training including learning an advantage function to limit the variance of estimated policy gradients, constraining the divergence between policies and regularizing the policy through entropy bonuses. Subsequently, we presented evidence from literature on the convergence behavior of policy gradient algorithms, which suggest that they may find at least locally optimal policies. Finally, we conducted numerical experiments on well-established benchmarks to further compare the behavior of the discussed algorithms. Here, we found that PPO outperforms the other algorithms in the majority of the considered tasks and we provided evidence for the necessity of regularization, by constraining KL divergence or by incentivizing entropy, to stabilize training.

We acknowledge several limitations of our work. First, we deliberately limited our scope to on-policy algorithms, which excludes closely related off-policy policy gradient algorithms and the novelties introduced by them. Second, we presented an incomplete overview of on-policy policy gradient algorithms as other, albeit less established, algorithms exist (e.g., \cite{rahman2022robust, cobbe2021phasic}) and the development of further algorithms remains an active research field. Here, we focused on the, in our view, most prominent algorithms as determined by their impact, usage and introduced novelties. Third, the convergence results we referenced rest on assumptions that are quickly violated in practice. In particular, we want to underline that the results based mirror learning rely on the infeasible assumption of finding a global maximizer each iteration. Fourth, while we compared the discussed algorithms empirically and found results to be consistent with existing literature, our analysis is limited to the specific setting we used. Different results may arise on other benchmarks, with different hyperparameters or generally different implementations. 

Finally, we note that still many questions remain to be answered in the field of on-policy policy gradient algorithm. So far, our understanding of which algorithm performs best under which circumstances is still limited. Moreover, it is unclear whether the best possible policy gradient algorithm has yet been discovered, which is why algorithm development remains of interest. Similarly, comprehensive empirical comparisons with other classes of RL algorithms may yield further insights on the practical advantages and disadvantages of policy gradient algorithms and how their performance depends on the problem settings. Finally, we observe that still only a limited number of convergence results exist and not even all discussed algorithms are covered by these, e.g., no convergence results exist for V-MPO to the best of our knowledge. Here, further research is needed to enhance our understanding of the convergence behavior of policy gradient algorithms.

% summarize work
% discuss limitations (convergence results limited, not all algorithms)
% discuss future research opportunities

\newpage

%\section{References}
%\begin{thebibliography}{999}
%\bibitem{}
%%\bibitem{}
%\bibitem{}
%\bibitem{}
%\bibitem{}
%\bibitem{}
%\bibitem{}
%\end{thebibliography}

%\bibliographystyle{unsrtnat}
\bibliographystyle{plain}
\bibliography{refs}  %%% Uncomment this line and comment out the ``thebibliography'' section below to use the external .bib file (using bibtex) .

\newpage

\begin{appendices}

\section{Hyperparameters}\label{sec:hyperparameters}

\begin{table}
	\centering
	\begin{tabular}{cccccc}
		\toprule
		& \multicolumn{5}{c}{Value}                   \\
		\cmidrule(r){2-6}
		Hyperparameter     & REINFORCE & A2C & TRPO & PPO & V-MPO \\
		\midrule
		Learning rate     & $3 \cdot 10^{-4}$ & $3 \cdot 10^{-4}$ & $3 \cdot 10^{-4}$ & $3 \cdot 10^{-4}$ & $3 \cdot 10^{-4}$ \\
		Num. minibatches    & $1$ & $8$ & $8$ & $8$ & $8$ \\
		Num. epochs     & $1$ & $1$ & $1$\footnotemark & $10$ & $10$ \\
		Discount (\(\gamma\))     & --- & $0.99$ & $0.99$ & $0.99$ & $0.99$ \\
		GAE parameter (\(\lambda\))     & --- & $0.95$ & $0.95$ & $0.95$ & $0.95$ \\
		Normalize advantages     & --- & True & True & True & False \\
		Entropy bonus coef.     & $0$ & $0.1$ & $0$ & $0$ & $0$ \\
		Max. grad. norm     & $0.5$ & $0.5$ & $0.5$ & $0.5$ & $0.5$ \\
		Unroll length     & --- & $2048$ & $2048$ & $2048$ & $2048$ \\
		KL target (\(\delta\))     & --- & --- & $0.01$ & --- & --- \\
		CG damping     & --- & --- & $0.1$ & --- & --- \\
		CG max. iterations     & --- & --- & $10$ & --- & --- \\
		Line search max. iterations     & --- & --- & $10$ & --- & --- \\
		Line search shrinkage factor     & --- & --- & $0.8$ & --- & --- \\
		PPO clipping (\(\varepsilon\))     & --- & --- & --- & $0.2$ & --- \\
		Min. temp. (\(\eta_\text{min}\))     & --- & --- & --- & --- & $10^{-8}$ \\
		Min. KL pen. (\(\nu_\text{min}\))     & --- & --- & --- & --- & $10^{-8}$ \\
		Init. temp. (\(\eta_\text{init}\))      & --- & --- & --- & --- & $1$ \\
		Init. KL pen. (mean) (\(\nu_{\mu_\text{init}}\))      & --- & --- & --- & --- & $1$ \\
		Init. KL pen. (std) (\(\nu_{\sigma_\text{init}}\))       & --- & --- & --- & --- & $1$ \\
		KL target (mean) (\(\varepsilon_{\nu_\mu}\))      & --- & --- & --- & --- & $0.01$ \\
		KL target (std) (\(\varepsilon_{\nu_\sigma}\))      & --- & --- & --- & --- & $5 \cdot 10^{-5}$ \\
		KL target (temp.) (\(\varepsilon_\eta\))      & --- & --- & --- & --- & $0.01$ \\
		\bottomrule
	\end{tabular}
	\caption{Algorithm hyperparameters.}
	\label{table:hyperparameters}
\end{table}
\footnotetext{TRPO uses one epoch for its policy updates but 10 epochs per batch for updating the value network.}

We report the hyperparameters we use in our main experiments in \autoref{table:hyperparameters}. All algorithms use separate policy and value networks. Policy networks use 4 hidden layers with 32 neurons respectively. Value networks use 5 layers with 256 neurons each. We use swish-activation functions \cite{ramachandran2017searching} throughout both networks. Policy outputs are transformed to fit the bounds of the actions spaces via a squashing function. We use the Adam optimizer \cite{kingma2014adam} with gradient clipping and a slight linear decay of the learning rates. Further, we preprocess observations and rewards by normalizing them using running means and standard deviations and clipping them to the interval \([-10, 10]\). All algorithms except REINFORCE use 8 parallel environments to collect experience. We use independent environments to evaluate the agents throughout training. In the evaluations, agents select actions deterministically as the mode of the constructed distribution.

\section{Extended Experiments}\label{sec:ext_exp}

Here, we present results from further experiments. Unless indicated otherwise, we use the hyperparameters as reported in Appendix \autoref{sec:hyperparameters}.

\subsection{Comparison to RL frameworks}\label{sec:framework_comp}

In \autoref{table:framework_comp}, we compare the performance of our implementation of PPO with popular RL frameworks. Note that we did not tune any hyperparameters for our implementations such that the reported scores should be understood as lower bounds. We compare PPO since it is the most popular and commonly implemented of the discussed algorithms across frameworks. In contrast, especially TRPO and V-MPO are rarely found. 

\begin{table}
	\centering
	\begin{tabular}{cccccc|cc}
		\toprule
		& \multicolumn{7}{c}{Framework}                   \\
		\cmidrule(r){2-8}
		     & CleanRL & Baselines  & SB3 & RLlib & ACME\footnotemark & Ours & Ours \\
		 & \cite{huang2022cleanrl} & \cite{baselines} & \cite{stable-baselines3} & \cite{liang2018rllib} & \cite{hoffman2020acme} & & \\
		\midrule
		MuJoCo version     & v4 & v1 & v3 & v2 & v2 & v4 & v4 \\
		Steps in million    & $1$ & $1$ & $1$ & $44$ & $10$ & $1$ & $8$ \\
		\midrule
		HalfCheetah    & $2906$ & $1669$ & $5819$ & $9664$ & $6800$ & $4332$ & $6414$ \\
		Hopper     & $2052$ & $2316$ & $2410$ & --- & $2550$ & $895$ & $2616$ \\
		Humanoid     & $742$ & --- & --- & --- & $6600$ & $700$ & $7633$ \\
		Ant     & --- & --- & $1327$ & --- & $5200$ & $1258$ & $5671$ \\
		\bottomrule
	\end{tabular}
	\caption{Comparison of the mean performance of our PPO implementation with popular RL frameworks. Scores for the frameworks are shown as reported in the respective paper or documentation. }
	\label{table:framework_comp}
\end{table}
\footnotetext{Numbers read approximately from plots in the paper.}

\subsection{Entropy Bonus in A2C}\label{sec:entropy_a2c}

In \autoref{fig:a2c_coll}, we show that using an entropy bonus improves the performance of A2C by stabilizing learning. In particular, insufficiently low values of the entropy coefficient result in a collapse of the policy after some time. This is visible in a drastic increase in the KL divergences (note the logarithmic scale).

\begin{figure}
     \centering
     \begin{subfigure}[b]{0.49\textwidth}
         \centering
         \includegraphics[width=\textwidth]{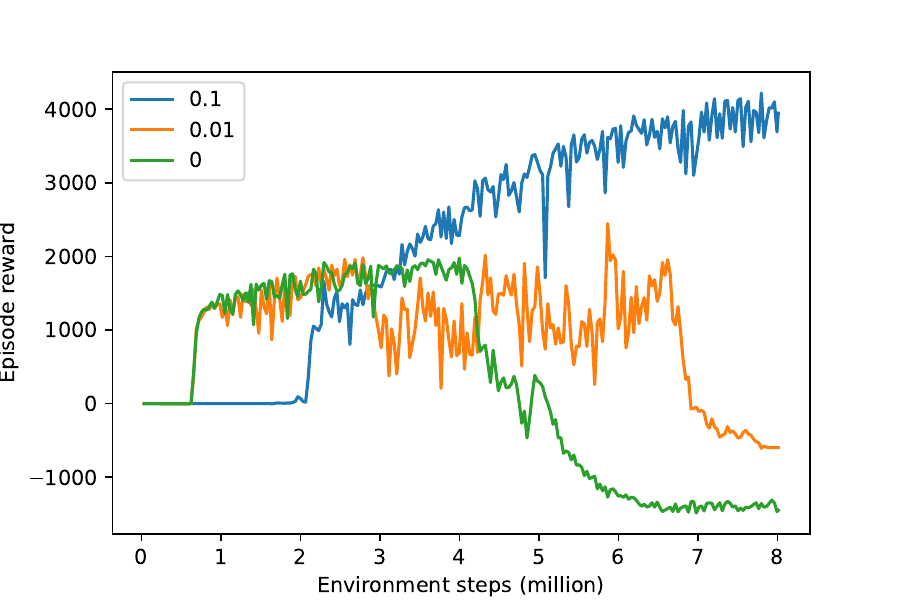}
         %\caption{$A > 0$}
         %\label{fig:y equals x}
     \end{subfigure}
     \hfill
     \begin{subfigure}[b]{0.49\textwidth}
         \centering
         \includegraphics[width=\textwidth]{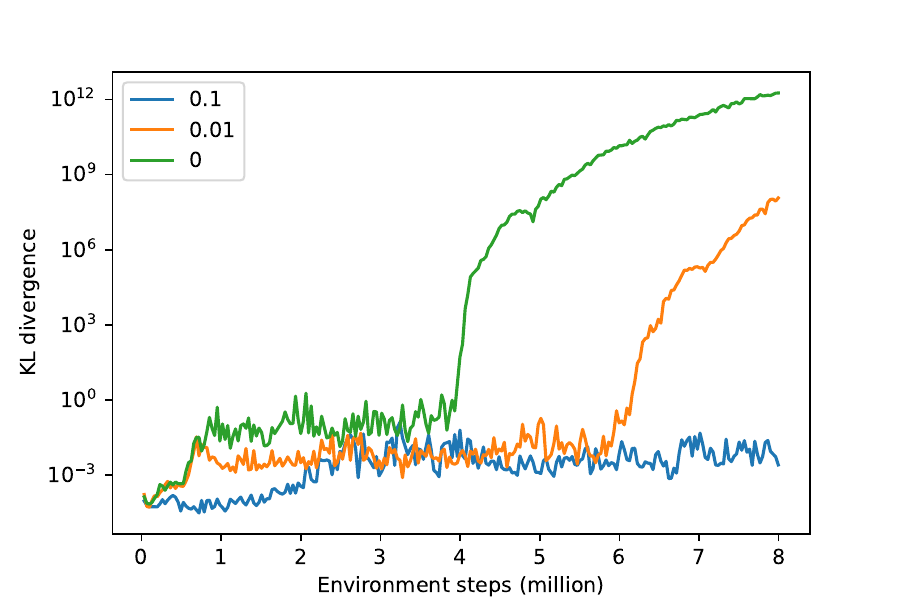}
     \end{subfigure}
        \caption{We compare the episode reward (left) and KL divergence (right) for different values of the entropy coefficient for A2C on HalfCheetah.}
        \label{fig:a2c_coll}
\end{figure}

\subsection{A2C and REINFORCE with Multiple Update Epochs}\label{sec:epochs_a2c}

In \autoref{fig:a2c_epochs}, we showcase that the KL divergence is low for A2C and REINFORCE due to using only a single update epoch per batch. On the contrary, when using multiple epochs, the policies collapse for both algorithms as visible by the diverging KL divergence and abrupt performance loss. Note, that here we show this behavior for five epochs, however in our tests A2C and REINFORCE display similar behaviors already when only using two epochs, albeit the policies then only collapse after an extended period of time. Further, note that over the displayed range of environment steps, the algorithms do not yet learn any useful policies when using a single epoch. However, performance improves for both A2C and REINFORCE when given more time as depicted in \autoref{fig:perf_comp}. 

\begin{figure}
     \centering
     \begin{subfigure}[b]{0.49\textwidth}
         \centering
         \includegraphics[width=\textwidth]{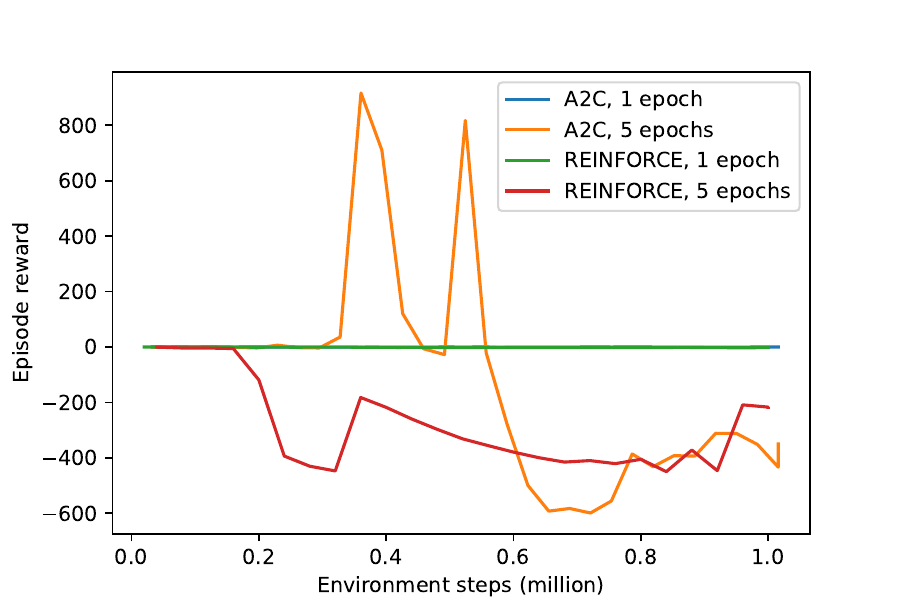}
         %\caption{$A > 0$}
         %\label{fig:y equals x}
     \end{subfigure}
     \hfill
     \begin{subfigure}[b]{0.49\textwidth}
         \centering
         \includegraphics[width=\textwidth]{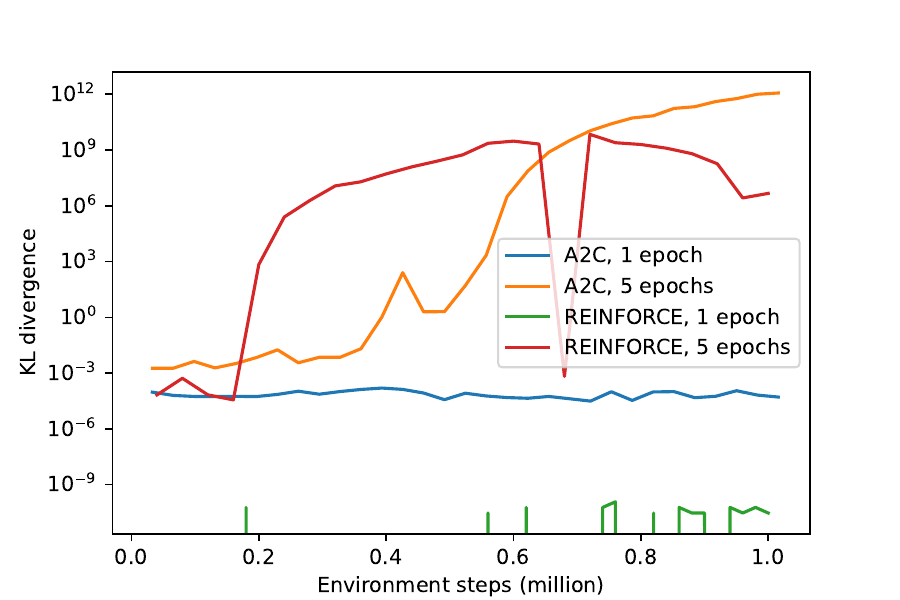}
     \end{subfigure}
     \caption{We compare the episode reward (left) and KL divergence (right) for different numbers of update epochs for A2C and REINFORCE on HalfCheetah.}
     \label{fig:a2c_epochs}
\end{figure}

\section{V-MPO: Derivation Details}\label{sec:mpo_details}

In the following, we provide a more detailed derivation of the objective function of V-MPO
\begin{equation*}
	J_\text{V-MPO}(\theta, \eta, \nu) = \mathcal{L}_\pi(\theta) + \mathcal{L}_\eta(\eta) + \mathcal{L}_\nu(\theta, \nu),
\end{equation*}
where \(\mathcal{L}_\pi\) is the policy loss
\begin{equation}
	\mathcal{L}_\pi(\theta) = - \sum_{a,s \in \tilde{\mathcal{D}}} \frac{\exp\Bigl( \frac{\hat{A}_{\phi}(s,a)}{\eta} \Bigr)}{\sum_{a',s' \in \tilde{\mathcal{D}}} \exp\Bigl( \frac{\hat{A}_{\phi}(s',a')}{\eta} \Bigr)} \ln \pi_{\theta}(a \mid s), \label{eq:app_mpo_policy_loss} 
\end{equation}
\(\mathcal{L}_\eta\) is the temperature loss
\begin{equation}
	\mathcal{L}_\eta(\eta) = \eta \varepsilon_\eta + \eta \ln \Biggl[ \frac{1}{\lvert \tilde{\mathcal{D}} \rvert} \sum_{a,s \in \tilde{\mathcal{D}}} \exp\biggl( \frac{\hat{A}_{\phi}(s,a)}{\eta} \biggr) \Biggr] \label{eq:app_mpo_temp_loss}
\end{equation}
and \(\mathcal{L}_\nu\) is the trust-region loss
\begin{align}
\begin{split}
	\mathcal{L}_\nu(\theta, \nu) = \frac{1}{\lvert \mathcal{D} \rvert} \sum_{s \in \mathcal{D}} \biggl( \nu \biggl( \varepsilon_\nu - \mathrm{sg\Bigl[\Bigl[ D_{KL}(\pi_{\text{old}}(\cdot \mid s) \: \Vert \: \pi_{\theta}(\cdot \mid s))  \Bigr]\Bigr]} \biggr) + \mathrm{sg}\bigl[\bigl[ \nu \bigr]\bigr] D_{KL}\bigl( \pi_{\text{old}}(\cdot \mid s) \: \Vert \: \pi_{\theta}(\cdot \mid s) \bigr) \biggr). \label{eq:app_mpo_kl_loss}
\end{split}
\end{align}

Let \(p_\theta(s,a) = \pi_\theta(a \mid s) d^{\pi_\theta}(s)\) denote the joint state-action distribution under policy \(\pi_\theta\) conditional on the parameters \(\theta\). Let \(\mathcal{I}\) be a binary random variable whether the updated policy \(\pi_\theta\) is an improvement over the old policy \(\pi_{\text{old}}\), i.e. \(\mathcal{I}=1\) if it is an improvement. We assume the probability of \(\pi_\theta\) being an improvement is proportional to the following expression 
\begin{equation}
	p_\theta(\mathcal{I} = 1 \mid s,a) \propto \exp\Bigl( \frac{A_{\pi_\text{old}}(s,a)}{\eta} \Bigr) \label{eq:app_mpo_0}
\end{equation}
Given the desired outcome \(\mathcal{I}=1\), we seek the posterior distribution conditioned on this event. Specifically, we seek the maximum a posteriori estimate 
\begin{align}
\begin{split}
	\theta^* &= \argmax_\theta \bigl[ p_\theta(\mathcal{I}=1) \rho(\theta) \bigr] \\
	&= \argmax_\theta \bigl[ \ln p_\theta(\mathcal{I}=1) + \ln \rho(\theta) \bigr], \label{eq:app_mpo_2}
\end{split}
\end{align}  
where \(\rho\) is some prior distribution to be specified. Using Theorem \ref{th:log_kl}, we obtain
\begin{equation}
	\ln p_\theta(\mathcal{I}=1) = \mathbb{E}_{S,A \sim \psi} \biggl[\ln \frac{p_\theta(\mathcal{I}=1,S,A)}{\psi(S,A)} \biggr] + D_{KL} \bigl(\psi \: \Vert \: p_\theta(\cdot,\cdot \mid \mathcal{I}=1) \bigr), \label{eq:app_mpo_1}
\end{equation}
where \(\psi\) is a distribution over \(\mathcal{S} \times \mathcal{A}\). Observe that, since the KL-divergence is non-negative, the first term is a lower bound for \(\ln p_\theta(\mathcal{I}=1)\). Akin to EM algorithms, V-MPO now iterates between an expectation (E) and a maximization (M) step. In the E-step we choose the variational distribution \(\psi\) to minimize the KL divergence in Equation \eqref{eq:app_mpo_1} to make the lower bound as tight as possible. In the M-step, we maximize this lower bound and the prior \(\ln \rho(\theta)\) to obtain a new estimate of \(\theta^*\) via Equation \eqref{eq:app_mpo_2}.

First, we consider the E-step. Minimizing \(D_{KL} (\psi \: \Vert \: p_{\theta_\text{old}}(\cdot, \cdot \mid \mathcal{I}=1))\) w.r.t. \(\psi\) leads to 
\begin{align*}
	\psi(s,a) &= p_{\theta_\text{old}}(s,a \mid \mathcal{I}=1) \\
	&= \frac{p_{\theta_\text{old}}(s,a) \: p_{\theta_\text{old}}(\mathcal{I}=1 \mid s,a)}{p_{\theta_\text{old}}(\mathcal{I}=1)} \\
	&= \frac{p_{\theta_\text{old}}(s,a) \: p_{\theta_\text{old}}(\mathcal{I}=1 \mid s,a)}{\int_{s \in \mathcal{S}} \int_{a \in \mathcal{A}} p_{\theta_\text{old}}(s,a) \: p_{\theta_\text{old}}(\mathcal{I}=1 \mid s,a) \: da \: ds }
\end{align*} 
using Bayes' Theorem (Theorem \ref{th:bayes}). Sampling from right-hand side of \eqref{eq:app_mpo_0} thus yields 
\[\hat{\psi}(s,a) = \frac{\exp\Bigl( \frac{A_{\pi_{\text{old}}}(s,a)}{\eta} \Bigr)}{\sum_{a,s \in \mathcal{D}} \exp\Bigl( \frac{A_{\pi_{\text{old}}}(s,a)}{\eta} \Bigr)},\]
which is the variational distribution found in the policy loss \eqref{eq:app_mpo_temp_loss}. \cite{song2019v} find that using only the highest 50 \% of advantages per batch, i.e. replacing \(\mathcal{D}\) with \(\tilde{\mathcal{D}}\), substantially improves the algorithm. The advantage function \(A_{\pi}\) is estimated by \(\hat{A}_\phi\), which is learned identically as in A3C.

We derive the temperature loss to automatically adjust the temperature \(\eta\) by applying \eqref{eq:app_mpo_0} to the KL term in \eqref{eq:app_mpo_1}, which we want to minimize:
\begin{align*}
	D_{KL} \Bigl(\psi \: \Vert \: p(\cdot, \cdot \mid \mathcal{I}=1)\Bigr) &= D_{KL} \biggl(\psi \: \Vert \: \frac{p_{\theta_\text{old}}(S,A) p_{\theta_\text{old}}(\mathcal{I}=1 \mid S,A)}{p_{\theta_\text{old}}(\mathcal{I}=1)}\biggr) \\
	&= D_{KL} \Biggl(\psi \: \Vert \: \frac{p_{\theta_\text{old}}(S,A) \exp\Bigl( \frac{A_{\pi_\text{old}}(S,A)}{\eta} \Bigr)}{p_{\theta_\text{old}}(\mathcal{I}=1)} \Biggr) \\
	&= - \int_{s \in \mathcal{S}} \int_{a \in \mathcal{A}} \psi(s,a) \ln \Biggl( \frac{p_{\theta_\text{old}}(s,a) \exp\Bigl( \frac{A_{\pi_\text{old}}(s,a)}{\eta} \Bigr)}{\psi(s,a) p_{\theta_\text{old}}(\mathcal{I}=1)} \Biggr) \: da \: ds 
\end{align*}
By applying the logarithm to the individual terms, rearranging and multiplying through by \(\eta\) we get
\begin{align*}
	D_{KL} \Bigl(\psi \: \Vert \: p(\cdot, \cdot \mid \mathcal{I}=1)\Bigr) 
	&= - \int_{s \in \mathcal{S}} \int_{a \in \mathcal{A}} \psi(s,a) \biggl( 
	\frac{A_{\pi_\text{old}}(s,a)}{\eta} + \ln p_{\theta_\text{old}}(s,a) \\
	&\qquad - \ln p_{\theta_\text{old}}(\mathcal{I}=1) - \ln \psi(s,a) \biggr) \: da \: ds  \\
	&\propto - \int_{s \in \mathcal{S}} \int_{a \in \mathcal{A}} \psi(s,a) \biggl( 
	A_{\pi_\text{old}}(s,a) + \eta \ln p_{\theta_\text{old}}(s,a) - \eta \ln p_{\theta_\text{old}}(\mathcal{I}=1) \\
	&\qquad - \eta \ln \psi(s,a) \biggr) \: da \: ds  \\
	&= - \int_{s \in \mathcal{S}} \int_{a \in \mathcal{A}} \psi(s,a) 
	A_{\pi_\text{old}}(s,a) \: da \: ds  + \eta \int_{s \in \mathcal{S}} \int_{a \in \mathcal{A}} \psi(s,a) \ln \frac{\psi(s,a)}{p_{\theta_\text{old}}(s,a)} \: da \: ds \\
	&\qquad + \lambda \int_{s \in \mathcal{S}} \int_{a \in \mathcal{A}} \psi(s,a) \: da \: ds 
\end{align*}
with \(\lambda = \eta \ln p_{\theta_\text{old}}(\mathcal{I}=1) \). To optimize \(\eta\) while minimizing the KL term, we transform this into a constrained optimization problem with a bound on the KL divergence 
\begin{align*}
	\argmax_{\psi} &\quad\int_{s \in \mathcal{S}} \int_{a \in \mathcal{A}} \psi(s,a) 
	A_{\pi_\text{old}}(s,a) \: da \: ds \\
	\text{subject to } &\quad \int_{s \in \mathcal{S}} \int_{a \in \mathcal{A}} \psi(s,a) \ln \frac{\psi(s,a)}{p_{\theta_\text{old}}(s,a)} \: da \: ds \leq \varepsilon_\eta, \\
	&\quad \int_{s \in \mathcal{S}} \int_{a \in \mathcal{A}} \psi(s,a) \: da \: ds = 1
\end{align*}
and then back into an unconstrained problem via Lagrangian relaxation, yielding the objective function
\begin{align*}
	\mathcal{J}(\psi, \eta, \lambda) &= \int_{s \in \mathcal{S}} \int_{a \in \mathcal{A}} \psi(s,a) 
	A_{\pi_\text{old}}(s,a) \: da \: ds + \eta \biggl( \varepsilon_\eta \\
	&\quad - \int_{s \in \mathcal{S}} \int_{a \in \mathcal{A}} \psi(s,a) \ln \frac{\psi(s,a)}{p_{\theta_\text{old}}(s,a)} \: da \: ds \biggr) + \lambda \biggl( 1 - \int_{s \in \mathcal{S}} \int_{a \in \mathcal{A}} \psi(s,a) \: da \: ds \biggr).
\end{align*}
Differentiating w.r.t. \(\psi(s,a)\) and setting to zero yields 
\[ \psi(s,a) = p_{\theta_\text{old}}(s,a) \exp \biggl( \frac{A_{\pi_\text{old}}(s,a)}{\eta} \biggr) \exp \biggl( -1 - \frac{\lambda}{\eta} \biggr) \]
Normalizing over \(s\) and \(a\) confirms the already attained solution 
\begin{equation}
	\psi(s,a) = \frac{p_{\theta_\text{old}}(s,a) \exp\bigl( \frac{A_{\pi_\text{old}}(s,a)}{\eta} \bigr)}{\int_{s \in \mathcal{S}} \int_{a \in \mathcal{A}} p_{\theta_\text{old}}(s,a) \exp\bigl( \frac{A_{\pi_\text{old}}(s,a)}{\eta} \bigr) \: da \: ds}, \label{eq:app_mpo_4}
\end{equation} 
but now we can also find the optimal \(\eta\) by substituting this solution into \(\mathcal{J}(\psi, \eta, \lambda)\). Doing so and dropping terms independent of \(\eta\) leads to
\begin{align}
	\begin{split}
	\eta &\biggl( \varepsilon_\eta - \int_{s \in \mathcal{S}} \int_{a \in \mathcal{A}} \psi(s,a) \ln \frac{\psi(s,a)}{p_{\theta_\text{old}}(s,a)} \: da \: ds \biggr) \\
	&= \eta \varepsilon_\eta + \eta \int_{s \in \mathcal{S}} \int_{a \in \mathcal{A}} \psi(s,a) \ln p_{\theta_\text{old}}(s,a) \: da \: ds - \eta \int_{s \in \mathcal{S}} \int_{a \in \mathcal{A}} \psi(s,a) \ln \psi(s,a) \: da \: ds. \label{eq:app_mpo_3}
	\end{split}
\end{align}
Because of Equation \eqref{eq:app_mpo_4}, we have
\begin{align*}
	\eta \psi(s,a) \ln \psi(s,a) 
	&= \eta \psi(s,a) \ln \frac{p_{\theta_\text{old}}(s,a) \exp\Bigl( \frac{A_{\pi_\text{old}}(s,a)}{\eta} \Bigr)}{\int_{s \in \mathcal{S}} \int_{a \in \mathcal{A}} p_{\theta_\text{old}}(s,a) \exp\Bigl( \frac{A_{\pi_\text{old}}(s,a)}{\eta} \Bigr) \: da \: ds} \\
	&= \psi(s,a) \Biggl( \eta \ln p_{\theta_\text{old}}(s,a) + A_{\pi_\text{old}}(s,a)  - \eta \ln \int_{s \in \mathcal{S}} \int_{a \in \mathcal{A}} p_{\theta_\text{old}}(s,a) \exp\biggl( \frac{A_{\pi_\text{old}}(s,a)}{\eta} \biggr) \: da \: ds \Biggr),
\end{align*}
where the first summand cancels out the second term in \eqref{eq:app_mpo_3} and the second summand no longer depends on \(\eta\) and thus can be dropped. Hence, we obtain the temperature loss function
\begin{equation}
	\mathcal{L}_\eta(\eta) = \eta \varepsilon_\eta + \eta \ln \biggl( \int_{s \in \mathcal{S}} \int_{a \in \mathcal{A}} \exp\biggl( \frac{A_{\pi_\text{old}}(s,a)}{\eta} \biggr) \: da \: ds \biggr)
\end{equation} 
through which we can optimize \(\eta\) using gradient descent.

Given the non-parametric sample-based variational distribution \(\psi(s,a)\), the M-step now optimizes the policy parameters \(\theta\). Based on \eqref{eq:app_mpo_2}, we want to maximize the discussed lower bound, i.e. minimize 
\[ - \int_{s \in \mathcal{S}} \int_{a \in \mathcal{A}} \psi(s,a) \ln \frac{p_\theta(\mathcal{I}=1,s,a)}{\psi(s,a)} \: da \: ds - \ln p(\theta) \]
to find new policy parameters \(\theta\). Using Equations \eqref{eq:app_mpo_4} and \eqref{eq:app_mpo_0}, the first term becomes 
\begin{align*}
	- &\int_{s \in \mathcal{S}} \int_{a \in \mathcal{A}} \psi(s,a) \ln \frac{p_\theta(\mathcal{I}=1,s,a)}{\psi(s,a)} \: da \: ds \\
	&= - \int_{s \in \mathcal{S}} \int_{a \in \mathcal{A}} \psi(s,a) \ln \frac{p_\theta(\mathcal{I}=1 \mid s,a) p_\theta(s,a)}{\psi(s,a)} \: da \: ds \\
	&= - \int_{s \in \mathcal{S}} \int_{a \in \mathcal{A}} \psi(s,a) 
	\ln \biggl(\frac{\exp\bigl( \frac{A_{\pi_\text{old}}(s,a)}{\eta} \bigr) p_\theta(s,a)}{p_{\theta_\text{old}}(s,a) \exp\bigl( \frac{A_{\pi_\text{old}}(s,a)}{\eta} \bigr)} \frac{1}{\int_{s \in \mathcal{S}} \int_{a \in \mathcal{A}} p_{\theta_\text{old}}(s,a) \exp\bigl( \frac{A_{\pi_\text{old}}(s,a)}{\eta} \bigr) \: da \: ds}\biggr) \: da \: ds \\
	&= - \int_{s \in \mathcal{S}} \int_{a \in \mathcal{A}} \psi(s,a) 
	\ln \biggl(\frac{p_\theta(s,a)}{p_{\theta_\text{old}}(s,a)} 
	\frac{1}{\int_{s \in \mathcal{S}} \int_{a \in \mathcal{A}} p_{\theta_\text{old}}(s,a) \exp\bigl( \frac{A_{\pi_\text{old}}(s,a)}{\eta} \bigr) \: da \: ds}\biggr) \: da \: ds.
\end{align*}
Using \(p_\theta(s,a) = \pi_\theta(a \mid s) d^{\pi_\theta}(s)\), assuming the state distribution \(d^\pi\) to be independent of \(\theta\) and dropping terms that do not depend on \(\theta\) yields 
\begin{align*}
	\argmin_\theta \biggl( - \int_{s \in \mathcal{S}} \int_{a \in \mathcal{A}} \psi(s,a) \ln \frac{p_\theta(\mathcal{I}=1,s,a)}{\psi(s,a)} \: da \: ds \biggr)
	&= \argmin_\theta \biggl( - \int_{s \in \mathcal{S}} \int_{a \in \mathcal{A}} \psi(s,a) 
	\ln p_\theta(s,a) \: da \: ds \biggr) \\
	&= \argmin_\theta \biggl( - \int_{s \in \mathcal{S}} \int_{a \in \mathcal{A}} \psi(s,a) 
	\ln \pi_\theta(a \mid s) \: da \: ds \biggr),
\end{align*}
which is the weighted maximum likelihood policy loss as in \eqref{eq:app_mpo_policy_loss}, that we compute on sampled transitions, effectively assigning out-of-sample transitions a weight of zero.

A useful prior \(\rho(\theta)\) in Equation \eqref{eq:app_mpo_2} is to keep the new policy close to the previous one as in TRPO and PPO. This translates to \[\rho(\theta) \approx -\nu \mathbb{E}_{S \sim d^{\pi_\text{old}}} \bigl[D_{KL}( \pi_{\text{old}} (\cdot \mid S) \Vert \pi_\theta (\cdot \mid S) ) \bigr]. \] Since optimizing the resulting sample-based maximum likelihood objective directly tends to result in overfitting, this prior is instead transformed into a constraint on the KL-divergence with bound \(\varepsilon_\nu\), i.e.
\begin{align*}
	\argmin_\theta &\quad \biggl( - \int_{s \in \mathcal{S}} \int_{a \in \mathcal{A}} \psi(s,a) \ln \frac{p_\theta(\mathcal{I}=1,s,a)}{\psi(s,a)} \: da \: ds \biggr) \\
	\text{subject to } &\quad \mathbb{E}_{S \sim d^{\pi_\text{old}}} \Bigl[D_{KL}\bigl( \pi_{\text{old}} (\cdot \mid S) \: \Vert \:  \pi_\theta (\cdot \mid S) \bigr) \Bigr] \leq \varepsilon_\nu.
\end{align*}
To employ gradient-based optimization, we use Lagrangian relaxation to transform this constraint optimization problem back into the unconstrained problem
\begin{equation}
	\mathcal{J}(\theta, \nu) = \mathcal{L}_\pi(\theta) + \nu \bigl(\varepsilon_\nu - \mathbb{E}_{S \sim d^{\pi_\text{old}}} \bigl[D_{KL}( \pi_{\text{old}} (\cdot \mid S) \Vert \pi_\theta (\cdot \mid S) ) \bigr]  \bigr).
\end{equation}
This problem is solved by alternating between optimizing for \(\theta\) and \(\nu\) via gradient descent in a coordinate-descent strategy. Using the stop-gradient operator \(\mathrm{sg}[[\cdot]]\), the objective can equivalently to this strategy be rewritten for as
\begin{align*}
	\mathcal{L}_\nu(\theta, \nu) = \nu \biggl( \varepsilon_\nu - \mathbb{E}_{S \sim d^{\pi_\text{old}}} \biggl[ \mathrm{sg\Bigl[\Bigl[ D_{KL}\bigl(\pi_{\theta_\text{old}}(\cdot \mid S) \: \Vert \: \pi_{\theta}(\cdot \mid S)\bigr)  \Bigr]\Bigr] \biggr]} \biggr) + \mathrm{sg}\bigl[\bigl[ \nu \bigr]\bigr] \mathbb{E}_{S \sim d^{\pi_\text{old}}} \Bigl[ D_{KL}\bigl( \pi_{\theta_\text{old}}(\cdot \mid S) \: \Vert \: \pi_{\theta}(\cdot \mid S) \bigr) \Bigr].
\end{align*}
Sampling this gives Equation \eqref{eq:app_mpo_kl_loss}. \(\eta\) and \(\nu\) are Lagrangian multipliers and hence must be positive. We enforce this by projecting the computed values to small positive values \(\eta_\text{min}\) and \(\nu_\text{min}\) respectively if necessary.

\section{Auxiliary Theory}\label{sec:aux}

% defs / theorems

Here, we list a range of well-known definitions and results that we use in our work.
\\

\begin{definition}
	(Compact Space)
	A topological space \(X\) is called compact if for every set \(S\) of open covers of \(X\), there exists a finite subset \(S' \subset S\) that also is an open cover of \(X\). 
\end{definition}

\hfill

\begin{theorem}\label{th:bayes}
	(Bayes' Theorem) Let \((\Omega, \mathcal{A}, \mathbb{P})\) be a probability space and \(\bigcup_{i \in I}B_i\) be a disjoint and finite partition of \(\Omega\) with \(B_i \in \mathcal{A}\) and \(\mathbb{P}(B_i) > 0\) for \(i \in I\). Then, for all \(A \in \mathcal{A}\) and all \(k \in I\)
	\begin{equation*}
		\mathbb{P}(B_k \mid A) = \frac{\mathbb{P}(A \mid B_k) \mathbb{P}(B_k)}{\sum_{i \in I} \mathbb{P}(A \mid B_i) \mathbb{P}(B_i)}.
	\end{equation*}
\end{theorem}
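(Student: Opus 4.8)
The plan is to derive the formula directly from the definition of conditional probability together with the law of total probability, both of which follow from the axioms of the probability measure $\mathbb{P}$. Throughout I assume $\mathbb{P}(A) > 0$, which is the implicit condition needed for the left-hand side $\mathbb{P}(B_k \mid A)$ to be well-defined; without it the conditional probability carries no meaning, so this is the natural standing hypothesis for the statement.

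First I would recall the definition of conditional probability, namely $\mathbb{P}(B_k \mid A) = \mathbb{P}(B_k \cap A) / \mathbb{P}(A)$ and, symmetrically, $\mathbb{P}(A \mid B_k) = \mathbb{P}(A \cap B_k) / \mathbb{P}(B_k)$, where the latter is well-defined precisely because $\mathbb{P}(B_k) > 0$ by assumption. Rearranging the second identity rewrites the numerator as $\mathbb{P}(B_k \cap A) = \mathbb{P}(A \mid B_k)\,\mathbb{P}(B_k)$, which is exactly the numerator appearing in the claim.

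Next I would handle the denominator via the law of total probability. Since $\{B_i\}_{i \in I}$ is a disjoint partition of $\Omega$, the sets $\{A \cap B_i\}_{i \in I}$ are pairwise disjoint with union $A$. Finite additivity of $\mathbb{P}$ then gives $\mathbb{P}(A) = \sum_{i \in I} \mathbb{P}(A \cap B_i)$, and applying the same rearrangement of the conditional-probability definition to each summand yields $\mathbb{P}(A) = \sum_{i \in I} \mathbb{P}(A \mid B_i)\,\mathbb{P}(B_i)$. Substituting this expression for $\mathbb{P}(A)$ and the rewritten numerator into $\mathbb{P}(B_k \mid A) = \mathbb{P}(B_k \cap A) / \mathbb{P}(A)$ produces the asserted formula.

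The argument is essentially bookkeeping and presents no genuine obstacle; the only points requiring care are the well-definedness of each conditional probability, which is why the hypotheses $\mathbb{P}(B_i) > 0$ (and implicitly $\mathbb{P}(A) > 0$) are imposed, and the verification that the $B_i$ partition $\Omega$ so that the $A \cap B_i$ genuinely exhaust $A$. Because the partition is finite, ordinary finite additivity suffices and no appeal to countable additivity or a convergence argument is needed.
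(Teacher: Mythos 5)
Your proof is correct and complete: it is the standard argument via the definition of conditional probability for the numerator and finite additivity (law of total probability) for the denominator, and you rightly flag the implicit hypothesis \(\mathbb{P}(A) > 0\) needed for the left-hand side to be well-defined, which the paper's statement omits. Note that the paper itself offers no proof to compare against --- Theorem \ref{th:bayes} is listed in the appendix of auxiliary results as a well-known fact cited without proof --- so your write-up fills that gap in exactly the expected way.
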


\hfill

\begin{theorem}\label{th:variance}
	Let \(X\) be a random variable. Then,
	\begin{equation*}
		\mathrm{Var}[X] = \mathbb{E}\bigl[X^2\bigr] - \mathbb{E}\bigl[X\bigr]^2.
	\end{equation*}
\end{theorem}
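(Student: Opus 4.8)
The plan is to prove the identity by starting from the definition of variance and expanding the square, then applying the linearity of the expectation. Recall that the variance of a random variable $X$ is defined as $\mathrm{Var}[X] = \mathbb{E}\bigl[(X - \mathbb{E}[X])^2\bigr]$, the expected squared deviation from the mean. The entire argument reduces to algebraically expanding the quadratic inside this expectation and simplifying.

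First I would expand the square inside the expectation, writing
\[
	(X - \mathbb{E}[X])^2 = X^2 - 2X\,\mathbb{E}[X] + \mathbb{E}[X]^2.
\]
Then I would apply the expectation operator to both sides and invoke its linearity to distribute it across the three summands, yielding
\[
	\mathbb{E}\bigl[(X - \mathbb{E}[X])^2\bigr] = \mathbb{E}\bigl[X^2\bigr] - 2\,\mathbb{E}\bigl[X\,\mathbb{E}[X]\bigr] + \mathbb{E}\bigl[\mathbb{E}[X]^2\bigr].
\]

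The key observation — and the only point requiring any care — is that $\mathbb{E}[X]$ is itself a deterministic constant rather than a random quantity, so it may be pulled outside any surrounding expectation. Using this, the middle term becomes $2\,\mathbb{E}[X]\,\mathbb{E}[X] = 2\,\mathbb{E}[X]^2$ and the last term becomes simply $\mathbb{E}[X]^2$. Collecting terms gives
\[
	\mathbb{E}\bigl[X^2\bigr] - 2\,\mathbb{E}[X]^2 + \mathbb{E}[X]^2 = \mathbb{E}\bigl[X^2\bigr] - \mathbb{E}[X]^2,
\]
which is exactly the claimed identity. There is no substantive obstacle here: the result is elementary and follows entirely from linearity together with the fact that $\mathbb{E}[X]$ is constant; the proof is a few lines of routine algebra.
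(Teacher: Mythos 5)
Your proof is correct: expanding \((X - \mathbb{E}[X])^2\), applying linearity, and using that \(\mathbb{E}[X]\) is a constant is the standard argument, and every step is sound. Note that the paper itself offers no proof of this statement --- it appears in the auxiliary appendix as a well-known result cited without proof --- so your write-up simply supplies the routine derivation the paper omits. The only pedantic caveat one could add is that the identity presupposes \(\mathbb{E}\bigl[X^2\bigr] < \infty\) so that all quantities involved are finite, an assumption left implicit both in the paper's statement and in your proof.
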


\hfill

\begin{definition}\label{def:entropy}
	(Entropy)
	Let \((\Omega, \mathcal{A}, \mathbb{P})\) be a probability space and \(X \sim \mathbb{P}\) be a random variable. The entropy of \(X\) is given by 
	\begin{equation*}
		H(X) \coloneqq \mathbb{E}_{X \sim \mathbb{P}}\bigl[- \ln \mathbb{P}(X) \bigr].
	\end{equation*}
\end{definition}

\hfill

\begin{definition}\label{def:kl}
	(Kullback-Leibler Divergence)
	For any measurable space \(\mathcal{A}\) and probability densities \(p\) and \(q\) of the respective distributions \(P\) and \(Q\), the Kullback-Leibler divergence or relative entropy from \(Q\) to \(P\) is given by 
	\begin{equation*}
		D_{KL}(p \Vert q) \coloneqq \int_{a \in \mathcal{A}} p(a) \ln \frac{p(a)}{q(a)} da.
	\end{equation*}
\end{definition}

\hfill

\begin{definition}\label{def:tv_div}
	(Total Variation Divergence)
	For any measurable space \(\mathcal{A}\) and probability densities \(p\) and \(q\) of the respective distributions \(P\) and \(Q\), the total variation variance from \(Q\) to \(P\) is given by 
	\begin{equation*}
		D_{TV}(p \Vert q) \coloneqq \frac{1}{2} \int_{a \in \mathcal{A}} p(a) - q(a) da.
	\end{equation*}
\end{definition}

\hfill

\begin{theorem}\label{th:log_kl}
	Let \((\Omega, \mathcal{A})\) be a measurable space and \(p\) and \(\psi\) be probability measures on that space. Let and \(X \in \mathcal{A}\) and \(Z \in  \mathcal{A}\). Then,
	\begin{equation*}
		\ln p(X) = \mathbb{E}_{Z \sim \psi} \biggl[ \ln \frac{p(X,Z)}{\psi(Z)} \biggr] + D_{KL} (\psi \: \Vert \: p(\cdot \mid X) ).
	\end{equation*}
\end{theorem}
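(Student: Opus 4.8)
The plan is to unfold the right-hand side starting from the definition of the Kullback-Leibler divergence (Definition \ref{def:kl}) and reduce it to the left-hand side by invoking the definition of conditional probability. First I would write out
\[
D_{KL}\bigl(\psi \: \Vert \: p(\cdot \mid X)\bigr) = \mathbb{E}_{Z \sim \psi} \biggl[ \ln \frac{\psi(Z)}{p(Z \mid X)} \biggr],
\]
which is just the density form of the KL divergence with \(p(\cdot \mid X)\) as the second argument.

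Next I would rewrite the conditional density via the relation \(p(Z \mid X) = \frac{p(X, Z)}{p(X)}\), which is the standard definition of conditioning (equivalently a one-line consequence of Bayes' Theorem, Theorem \ref{th:bayes}). Substituting and splitting the logarithm gives
\[
\ln \frac{\psi(Z)}{p(Z \mid X)} = \ln \psi(Z) - \ln p(X, Z) + \ln p(X).
\]
Then I would take the expectation \(\mathbb{E}_{Z \sim \psi}[\cdot]\) term by term using linearity. The crucial observation is that \(\ln p(X)\) does not depend on \(Z\), so because \(\psi\) is a probability measure its expectation under \(\psi\) is \(\ln p(X)\) itself. This yields
\[
D_{KL}\bigl(\psi \: \Vert \: p(\cdot \mid X)\bigr) = - \mathbb{E}_{Z \sim \psi} \biggl[ \ln \frac{p(X, Z)}{\psi(Z)} \biggr] + \ln p(X),
\]
and rearranging produces exactly the claimed identity.

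The steps are all routine algebra; the only point requiring care is the measure-theoretic well-definedness. I would note that the manipulation presupposes \(\psi \ll p(\cdot \mid X)\), so that the KL divergence and the densities appearing in the logarithms are defined almost everywhere, and that \(p(X) > 0\); under these standing assumptions (implicitly in force whenever the right-hand side is finite) the decomposition is exact. This is the familiar evidence-lower-bound identity from variational inference, so the main obstacle is purely bookkeeping — carefully pulling the \(Z\)-independent term \(\ln p(X)\) out of the expectation and tracking signs — rather than any genuine analytic difficulty.
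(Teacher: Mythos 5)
Your proof is correct: expanding \(D_{KL}(\psi \,\Vert\, p(\cdot \mid X))\) via Definition \ref{def:kl}, substituting \(p(Z \mid X) = p(X,Z)/p(X)\), and pulling the \(Z\)-independent term \(\ln p(X)\) out of the expectation is exactly the standard evidence-decomposition argument, and your provisos (\(\psi \ll p(\cdot \mid X)\), \(p(X) > 0\)) are the right technical ones. Note that the paper states Theorem \ref{th:log_kl} without proof, listing it among the well-known auxiliary results used in the V-MPO derivation of Appendix \ref{sec:mpo_details}, so there is no in-paper argument to compare against; your derivation supplies the canonical one.
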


\hfill

% Least squares problem + solution
\begin{theorem}\label{th:least_squares}
	Let \(X\) be a random variable. Then, 
	\begin{equation*}
		\min_a \mathbb{E} \bigl[(X - a)^2 \bigr] = \mathbb{E}[X].
	\end{equation*}
\end{theorem}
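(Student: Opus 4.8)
The plan is to treat $\mathbb{E}\bigl[(X-a)^2\bigr]$ as a deterministic function of the real variable $a$ and to locate its minimizer, which is the actual content of the statement (the right-hand side $\mathbb{E}[X]$ should be read as the optimal choice of $a$ rather than the minimal value). First I would expand the square and use the linearity of the expectation to write
\[
	\mathbb{E}\bigl[(X-a)^2\bigr] = \mathbb{E}[X^2] - 2a\,\mathbb{E}[X] + a^2,
\]
which is a quadratic polynomial in $a$ with positive leading coefficient and is therefore strictly convex. This manipulation requires only that $\mathbb{E}[X]$ and $\mathbb{E}[X^2]$ be finite, which I would assume implicitly, as is standard in the least-squares setting in which the result is applied in Section \ref{sec:baseline}.

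The cleanest route from here is to complete the square. Adding and subtracting $\mathbb{E}[X]^2$ gives
\[
	\mathbb{E}\bigl[(X-a)^2\bigr] = \bigl(a - \mathbb{E}[X]\bigr)^2 + \bigl(\mathbb{E}[X^2] - \mathbb{E}[X]^2\bigr),
\]
where, by Theorem \ref{th:variance}, the second summand equals $\mathrm{Var}[X]$ and is independent of $a$. Since the first summand is nonnegative and vanishes exactly when $a = \mathbb{E}[X]$, the whole expression is minimized precisely at $a = \mathbb{E}[X]$, with minimal value $\mathrm{Var}[X]$. Equivalently, one could differentiate the expanded quadratic, set $-2\,\mathbb{E}[X] + 2a = 0$ to obtain $a = \mathbb{E}[X]$, and confirm minimality from the positive second derivative $2 > 0$.

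There is no genuine obstacle in this argument; the only subtlety worth flagging is the interpretation of the statement, since $\min_a \mathbb{E}\bigl[(X-a)^2\bigr]$ literally denotes the minimal value (namely $\mathrm{Var}[X]$) rather than the minimizing $a$. I would therefore read the theorem as identifying the $\argmin$, which is exactly how it is invoked when selecting the variance-reducing baseline $b(s) = \mathbb{E}_\pi\bigl[\hat{Q}(s,A)\bigr]$ in Section \ref{sec:baseline}.
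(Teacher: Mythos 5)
Your proof is correct and complete. Note that the paper itself states Theorem \ref{th:least_squares} \emph{without} proof, as one of the well-known auxiliary results collected in Appendix \ref{sec:aux}, so there is no proof of record to compare against; your complete-the-square decomposition \(\mathbb{E}\bigl[(X-a)^2\bigr] = \bigl(a - \mathbb{E}[X]\bigr)^2 + \mathrm{Var}[X]\) (or the equivalent derivative computation) is the standard argument and needs only the finiteness of \(\mathbb{E}[X^2]\), which you flag appropriately. You are also right about the notational point: as written, the statement conflates the minimum value, which is \(\mathrm{Var}[X]\), with the minimizer \(\argmin_a \mathbb{E}\bigl[(X-a)^2\bigr] = \mathbb{E}[X]\), and the latter reading is exactly how the theorem is invoked in Section \ref{sec:baseline} to justify the baseline choice \(b(s) = \mathbb{E}_\pi\bigl[\hat{Q}(s,A)\bigr]\).
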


\hfill

%change of measures + Radon-Nikodym
\begin{theorem}\label{th:measure_change}
	Let \((\mathcal{A}, \Sigma)\) be a measurable space with \(\sigma\)-finite measures \(\mu\) and \(\nu\) such that \(\nu\) is absolutely continuous in \(\mu\). Let \(g\) be a Radon-Nikodym derivative of \(\nu\) w.r.t. \(\mu\), i.e. \(\nu(A) = \int_A g \: d\mu \) for all \(A \in \Sigma \). Let, \(f\) be a \(\nu\)-integrable function. Then,
	\begin{equation*}
		\int_\mathcal{A} f \: d\nu = \int_\mathcal{A} (f \cdot g) \: d\mu.
	\end{equation*}
\end{theorem}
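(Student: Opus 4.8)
The plan is to prove this via the standard measure-theoretic bootstrapping argument, building up from indicator functions to general integrable functions through four layers of increasing generality. Throughout, I note that since $\nu$ is a (non-negative) measure, its Radon-Nikodym derivative $g$ satisfies $g \geq 0$ $\mu$-almost everywhere; this non-negativity is the structural fact that makes the monotone convergence step below legitimate.

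First I would establish the identity for indicator functions, which is essentially the base case and holds by definition. For any $A \in \Sigma$, take $f = \mathbf{1}_A$. Then the left-hand side is $\int_\mathcal{A} \mathbf{1}_A \, d\nu = \nu(A)$, while by the defining property of the Radon-Nikodym derivative the right-hand side is $\int_\mathcal{A} \mathbf{1}_A \cdot g \, d\mu = \int_A g \, d\mu = \nu(A)$. The two sides therefore agree.

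Next I would extend to non-negative simple functions and then to arbitrary non-negative measurable functions. For a simple function $s = \sum_{i=1}^n c_i \mathbf{1}_{A_i}$ with $c_i \geq 0$, linearity of both integrals reduces the claim to the indicator case already handled. For a general non-negative measurable $f$, I would choose an increasing sequence of non-negative simple functions $s_n \uparrow f$ pointwise (which exists by the usual approximation lemma). Because $g \geq 0$, the products satisfy $s_n g \uparrow f g$ pointwise as well, so applying the Monotone Convergence Theorem (Theorem \ref{th:monotone_convergence}) to each side yields $\int_\mathcal{A} f \, d\nu = \lim_n \int_\mathcal{A} s_n \, d\nu = \lim_n \int_\mathcal{A} s_n g \, d\mu = \int_\mathcal{A} f g \, d\mu$, where the middle equality is exactly the simple-function case.

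Finally I would handle a general $\nu$-integrable $f$ by splitting it into its positive and negative parts $f = f^+ - f^-$, applying the non-negative case to each, and subtracting; $\nu$-integrability of $f$ guarantees that $\int f^+ \, d\nu$ and $\int f^- \, d\nu$ are finite, so the difference is well-defined and the two resulting identities may be combined. The main point requiring care is the monotone convergence step, which relies precisely on $g \geq 0$ to ensure $s_n g$ increases to $f g$; this is the one place where $g$ being the density of a genuine measure (rather than an arbitrary signed density) is essential, and $\sigma$-finiteness is needed only insofar as it underlies the existence of $g$, which here is assumed given.
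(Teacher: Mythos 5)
Your proof is correct, and it is the standard bootstrapping argument (indicators, then simple functions, then non-negative functions via monotone convergence, then the decomposition $f = f^+ - f^-$). Note that the paper itself offers no proof of this statement: Theorem \ref{th:measure_change} sits in the auxiliary appendix among results quoted as well known, so there is no authorial argument to compare yours against; yours is the canonical textbook proof and fills that gap correctly. One small caution: the result you cite as Theorem \ref{th:monotone_convergence} is the paper's monotone convergence theorem for bounded real sequences, whereas the step $\int s_n \, d\nu \to \int f \, d\nu$ and $\int s_n g \, d\mu \to \int f g \, d\mu$ requires the measure-theoretic Monotone Convergence Theorem (Beppo Levi) for integrals of non-negative functions, which neither assumes boundedness nor follows from the sequence version (the limits here may be infinite before one even knows $f$ is integrable). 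You should either cite that theorem as an external standard result or state it separately, rather than reuse the paper's label.
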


\hfill

% Leibniz
\begin{theorem}\label{th:leibniz}
	(Leibniz Integral Rule)
	Let \(X\) be an open subset of \(\mathbb{R}^d\), \(d \in \mathbb{N}\). Let \(\mathcal{A}\) be a measurable set and \(f \colon X \times \mathcal{A} \rightarrow \mathbb{R}\) be a function which satisfies
	\begin{enumerate}
		\item \(f(x, a)\) is a Lebesgue-integrable function of \(a\) for all \(x \in X\).
		\item For almost all \(a \in \mathcal{A}\), all partial derivatives exist for all \(x \in X\).
		\item There exists some integrable function \(g \colon \mathcal{A} \rightarrow \mathbb{R}\) with \(\lvert\nabla_x f(x, a) \rvert \leq g(a)\) for all \(x \in X\) and almost all \(a \in \mathcal{A}\).
	\end{enumerate}
	Then, for all \(x \in X\) we have
	\begin{equation*}
		\nabla_x \int_{a \in \mathcal{A}} f(x, a) da = \int_{a \in \mathcal{A}} \nabla_x  f(x, a) da
	\end{equation*}
\end{theorem}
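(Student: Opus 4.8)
The plan is to reduce the vector-valued claim to a one-dimensional statement about each partial derivative and then push the limit defining that derivative inside the integral using the Dominated Convergence Theorem. Since \(\nabla_x\) collects the partial derivatives \(\partial / \partial x_i\) for \(i = 1, \dots, d\), it suffices to show for each coordinate that
\[ \frac{\partial}{\partial x_i} \int_{a \in \mathcal{A}} f(x,a) \, da = \int_{a \in \mathcal{A}} \frac{\partial}{\partial x_i} f(x,a) \, da. \]

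First I would fix \(x \in X\) and a coordinate \(i\). Because \(X\) is open, all points \(x + h e_i\) with \(h\) small enough lie in \(X\), where \(e_i\) denotes the \(i\)-th standard basis vector. Writing \(F(x) = \int_{a \in \mathcal{A}} f(x,a)\, da\), which is well-defined by condition 1, the difference quotient can be moved inside the integral by linearity:
\[ \frac{F(x + h e_i) - F(x)}{h} = \int_{a \in \mathcal{A}} \phi_h(a)\, da, \qquad \phi_h(a) \coloneqq \frac{f(x + h e_i, a) - f(x,a)}{h}. \]
By condition 2, for almost every \(a\) the map \(x \mapsto f(x,a)\) is differentiable, so \(\phi_h(a) \to \frac{\partial}{\partial x_i} f(x,a)\) as \(h \to 0\).

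The decisive step is to produce an integrable dominating function so that limit and integral can be exchanged. For fixed \(a\), the Mean Value Theorem applied to the single-variable function \(t \mapsto f(x + t e_i, a)\) gives a point \(\xi\) on the segment between \(x\) and \(x + h e_i\) with \(\phi_h(a) = \frac{\partial}{\partial x_i} f(\xi, a)\); condition 3 then yields \(\lvert \phi_h(a) \rvert \le \lvert \nabla_x f(\xi, a) \rvert \le g(a)\) uniformly in \(h\). Taking any sequence \(h_n \to 0\), the functions \(\phi_{h_n}\) are dominated by the integrable \(g\) and converge pointwise almost everywhere, so the Dominated Convergence Theorem gives \(\lim_n \int_{a \in \mathcal{A}} \phi_{h_n}(a)\, da = \int_{a \in \mathcal{A}} \frac{\partial}{\partial x_i} f(x,a)\, da\). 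As the value of this limit is independent of the chosen sequence, the difference quotient converges as \(h \to 0\), establishing the coordinatewise identity; reassembling the \(d\) coordinates proves the gradient form.

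I expect the main obstacle to be the justification of the dominating bound: the hypotheses only supply a bound on \(\nabla_x f\) at points of \(X\), so one must invoke the Mean Value Theorem carefully, checking that the intermediate point \(\xi\) still lies in \(X\) (guaranteed for all sufficiently small \(h\), since \(X\) is open a ball around \(x\) and hence the whole segment is contained in \(X\)) so that condition 3 applies there. A secondary subtlety is that the derivative is a limit over the continuous parameter \(h\), which I would handle by the standard sequential reduction so that the Dominated Convergence Theorem, stated for sequences, can be applied.
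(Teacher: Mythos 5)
Your proof is correct. Note first that the paper itself does not prove this statement: Theorem \ref{th:leibniz} appears in the auxiliary appendix as a well-known result cited without proof (it is invoked as a tool in the proof of the Policy Gradient Theorem and in Section \ref{sec:baseline}), so there is no paper proof to compare against; your argument supplies the standard one. Your route --- coordinatewise reduction, moving the difference quotient inside the integral by linearity, the Mean Value Theorem on \(t \mapsto f(x+te_i,a)\) to obtain the uniform dominating bound \(\lvert \phi_h(a)\rvert \le \lvert \nabla_x f(\xi,a)\rvert \le g(a)\), and a sequential application of the Dominated Convergence Theorem --- is exactly the classical proof of differentiation under the integral sign, and you correctly flag the two delicate points (the intermediate point \(\xi\) must lie in \(X\), and the continuous limit \(h \to 0\) must be reduced to sequences). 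Two minor remarks: condition 2 only guarantees existence of partial derivatives, not full differentiability of \(x \mapsto f(x,a)\), but your argument never actually needs more than the partial derivative along \(e_i\) on the segment, so the slightly loose wording is harmless; and you could add one sentence observing that \(a \mapsto \frac{\partial}{\partial x_i} f(x,a)\) is measurable as an almost-everywhere pointwise limit of the measurable functions \(\phi_{h_n}\), so that the right-hand integral is well defined.
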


\hfill

%Fubini
\begin{theorem}\label{th:fubini}
	(Fubini's Theorem)
	Let \(\mathcal{A}_1\) and \(\mathcal{A}_2\) be measurable spaces with measures \(\mu_1\) and \(\mu_2\) and \(f \colon \mathcal{A}_1 \times \mathcal{A}_2 \rightarrow \mathbb{R}\) be measurable and integrable w.r.t. the product measure \(\mu_1 \otimes \mu_2\), i.e. \(\int_{\mathcal{A}_1 \times \mathcal{A}_2} \lvert f\rvert \: d(\mu_1 \otimes \mu_2) < \infty\) or \(f \geq 0\) almost everywhere. Then, \(f(x,y)\) is integrable for almost all \(x\) and \(y\) and
	\begin{equation*}
		\int_{\mathcal{A}_1} \int_{\mathcal{A}_2} f(x,y) \: d\mu_1(x) \: d\mu_2(y) = \int_{\mathcal{A}_2} \int_{\mathcal{A}_1}  f(x,y) \: d\mu_2(y) \: d\mu_1(x) 
	\end{equation*}
\end{theorem}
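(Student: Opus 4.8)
The plan is to prove the non-negative version of the statement (Tonelli's theorem) first and then bootstrap to the integrable case, ascending the usual measure-theoretic ladder of indicator functions, non-negative simple functions, general non-negative measurable functions, and finally signed integrable functions. Throughout I would assume both measures are $\sigma$-finite: this is the standard and in fact necessary hypothesis (it is implicit in the paper's setting, where every measure is a counting measure on a finite set or Lebesgue measure on a compact interval), and without it the two iterated integrals can genuinely disagree. I would also fix the product $\sigma$-algebra to be the one generated by the measurable rectangles $A \times B$ with $A \subseteq \mathcal{A}_1$, $B \subseteq \mathcal{A}_2$ measurable, on which $(\mu_1 \otimes \mu_2)(A \times B) = \mu_1(A)\,\mu_2(B)$ by definition.

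The heart of the argument is a slicing lemma: for every set $E$ in the product $\sigma$-algebra, each $x$-section $E_x = \{\, y : (x,y) \in E \,\}$ is $\mu_2$-measurable, the map $x \mapsto \mu_2(E_x)$ is $\mu_1$-measurable, and
\[
	(\mu_1 \otimes \mu_2)(E) = \int_{\mathcal{A}_1} \mu_2(E_x) \: d\mu_1(x),
\]
together with the symmetric statement obtained by exchanging the roles of the two spaces. I would prove this by a monotone class (equivalently, Dynkin $\pi$-$\lambda$) argument: the identity is immediate for measurable rectangles, the family of sets for which it holds is closed under countable increasing unions and proper differences, and the rectangles generate the product $\sigma$-algebra, so the monotone class theorem forces the identity on the whole $\sigma$-algebra. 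The $\sigma$-finiteness is exactly what makes the monotone-limit closure legitimate, since one first proves the identity on sets contained in a product of finite-measure pieces and then exhausts the spaces.

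With the slicing lemma in hand, the case $f = \mathbf{1}_E$ is precisely the theorem, and linearity extends it to non-negative simple functions. For an arbitrary non-negative measurable $f$, I would take simple functions $s_n \uparrow f$ and pass to the limit in both orders using the Monotone Convergence Theorem (Theorem \ref{th:monotone_convergence}); this simultaneously shows that the inner integral $x \mapsto \int_{\mathcal{A}_2} f(x,y) \: d\mu_2(y)$ is measurable, so the outer integral is well defined, and yields equality of the two iterated integrals, establishing the claim for $f \geq 0$. For a general $f$ integrable with respect to $\mu_1 \otimes \mu_2$, I would split $f = f^+ - f^-$, apply the non-negative case to each part, and use that finiteness of $\int \lvert f \rvert = \int f^+ + \int f^-$ forces the inner integrals of $f^+$ and $f^-$ to be finite for $\mu_1$-almost every $x$; on that full-measure set the sections are integrable and the difference of the two iterated integrals is well defined and equal in both orders.

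The main obstacle is the slicing lemma, namely the measurability of $x \mapsto \mu_2(E_x)$ and the product-measure formula for arbitrary product-measurable $E$. The remaining steps are routine applications of linearity and monotone convergence, but this lemma is where the genuine content lies: it is where the monotone class theorem and $\sigma$-finiteness are indispensable, and where the delicate points---measurability of the sections, the almost-everywhere finiteness of the sectional integrals, and the null sets on which integrability of a section may fail---all have to be controlled with care.
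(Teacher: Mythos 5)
The paper never proves this statement: Theorem \ref{th:fubini} sits in the appendix of auxiliary results, which the paper introduces explicitly as a list of well-known facts used without proof, so there is no in-paper argument to compare yours against. Your proposal is the standard Fubini--Tonelli proof --- the slicing lemma for product-measurable sets via a monotone class / \(\pi\)-\(\lambda\) argument, then the ladder indicators \(\to\) simple functions \(\to\) non-negative measurable functions via the Monotone Convergence Theorem (Theorem \ref{th:monotone_convergence}), then \(f = f^+ - f^-\) with almost-everywhere finiteness of the sectional integrals controlling the null sets --- and the outline is correct, with the genuine content correctly located in the slicing lemma. Your insistence on \(\sigma\)-finiteness is a necessary repair of the statement rather than a pedantic addition: as written the theorem omits it, but without \(\sigma\)-finiteness the product measure is not even uniquely determined and the slicing identity \((\mu_1 \otimes \mu_2)(E) = \int_{\mathcal{A}_1} \mu_2(E_x) \: d\mu_1(x)\) can fail; the classical counterexample takes Lebesgue measure and counting measure on \([0,1]\) with \(E\) the diagonal, for which the two iterated integrals of \(\mathbf{1}_E\) are \(1\) and \(0\). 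One minor point of care: in the non-negative (Tonelli) case the paper's conclusion that sections are ``integrable'' must be read with values in \([0,\infty]\), since sectional integrals may be infinite; your proposal implicitly handles this correctly by treating the non-negative case separately from the integrable case.
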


\hfill

%taylor expansion
\begin{theorem}\label{th:taylor}
	(Taylor's Theorem - one-dimensional)
	Let \(k \in \mathbb{N}\) and let \(f \colon \mathbb{R} \rightarrow \mathbb{R}\) be  \(k\)-times differentiable at \(a \in \mathbb{R}\). Then, there exists a function \(h_k \colon \mathbb{R} \rightarrow \mathbb{R}\) such that
	\begin{equation*}
		f(x) = \sum^k_{i=0} \frac{f^{(i)}(a)}{i!}(x - a)^i + h_k(x)(x-a)^k.
	\end{equation*}
\end{theorem}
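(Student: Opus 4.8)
The statement as written is, strictly speaking, almost vacuous: for $x \neq a$ one may simply \emph{define}
\[
h_k(x) = \frac{f(x) - \sum_{i=0}^{k} \frac{f^{(i)}(a)}{i!}(x-a)^i}{(x-a)^k}
\]
and assign $h_k(a)$ an arbitrary value, whereupon the claimed identity holds by construction. The mathematically substantive content\gs and the form actually needed for the Taylor expansions in the TRPO derivation\gs is that $h_k$ can be chosen with $\lim_{x \to a} h_k(x) = 0$, i.e. that the remainder is $o\bigl((x-a)^k\bigr)$. The plan is therefore to prove this Peano form rather than merely the trivial existence claim.

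First I would introduce the Taylor polynomial $P(x) = \sum_{i=0}^k \frac{f^{(i)}(a)}{i!}(x-a)^i$ and the remainder $R(x) = f(x) - P(x)$, so the goal becomes $\lim_{x\to a} R(x)/(x-a)^k = 0$. Two structural facts drive everything. Since $f$ is $k$-times differentiable \emph{at} $a$, its lower derivatives $f, f', \dots, f^{(k-1)}$ must exist on an open neighborhood of $a$ (otherwise $f^{(k)}(a)$ would be meaningless), and the same is then true of $R$. And because $P$ matches the derivatives of $f$ at $a$ up to order $k$, differentiating the polynomial directly gives $R(a) = R'(a) = \cdots = R^{(k)}(a) = 0$.

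Next I would evaluate the limit by applying L'H\^opital's rule $k-1$ times to $R(x)/(x-a)^k$. At each of these steps both numerator and denominator tend to $0$ (using $R^{(i)}(a)=0$), and the denominator derivatives $\frac{d^j}{dx^j}(x-a)^k$ are nonzero on a punctured neighborhood of $a$, so the hypotheses are met; after $k-1$ steps the quotient becomes $R^{(k-1)}(x)/\bigl(k!\,(x-a)\bigr)$. Crucially, I would \emph{not} invoke L'H\^opital a $k$-th time, since $R^{(k)}$ is only assumed to exist at the single point $a$. Instead I would finish using the very definition of the derivative,
\[
\lim_{x \to a} \frac{R^{(k-1)}(x)}{k!\,(x-a)} = \frac{1}{k!}\lim_{x\to a}\frac{R^{(k-1)}(x) - R^{(k-1)}(a)}{x-a} = \frac{R^{(k)}(a)}{k!} = 0,
\]
where I used $R^{(k-1)}(a)=0$. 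Setting $h_k(x) = R(x)/(x-a)^k$ for $x \neq a$ and $h_k(a)=0$ then yields a function continuous at $a$ for which the stated identity holds for all $x$ (trivially at $x=a$, where both remainder terms vanish).

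I expect the main obstacle to be precisely this transition at the final step: recognizing that the hypothesis of mere pointwise $k$-fold differentiability forbids a $k$-th application of L'H\^opital, and that one must instead reduce to the difference quotient defining $f^{(k)}(a)$. A secondary point requiring care is the bookkeeping that $R$ and its first $k-1$ derivatives are genuinely defined on a neighborhood of $a$, which is exactly what licenses the repeated use of L'H\^opital in the earlier steps.
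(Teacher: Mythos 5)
Your proof is correct, and a preliminary remark is in order: the paper itself offers no proof of this statement, since Theorem \ref{th:taylor} sits in the auxiliary appendix among standard results quoted without argument, so there is nothing in-paper to compare against. On the merits, both of your observations are exactly right. As printed, the theorem omits the condition \(\lim_{x \to a} h_k(x) = 0\) that the usual Peano form carries, and without it the claim is indeed vacuous: your explicit definition of \(h_k\) on \(\mathbb{R} \setminus \{a\}\) witnesses this, and at \(x = a\) the identity holds for any value of \(h_k(a)\) because the remainder term vanishes and the \(i = 0\) term already equals \(f(a)\). Your proof of the substantive version is the standard and correct one: the remainder \(R = f - P\) satisfies \(R(a) = R'(a) = \cdots = R^{(k)}(a) = 0\); pointwise \(k\)-fold differentiability at \(a\) forces \(R^{(k-1)}\) to exist on a neighborhood of \(a\), which licenses the \(k-1\) applications of L'H\^{o}pital (numerator and denominator vanish at each stage, and the denominator derivatives are nonzero on a punctured neighborhood); and the final limit is evaluated through the difference quotient defining \(R^{(k)}(a)\) rather than a \(k\)-th application of L'H\^{o}pital, which would be illegitimate since \(R^{(k)}\) exists only at the single point \(a\). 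Stopping one step early is precisely where careless write-ups go wrong under mere pointwise differentiability, and you handle it correctly. The strengthened conclusion, with \(h_k\) continuous at \(a\) and \(h_k(a) = 0\), is moreover the form the paper actually needs: the first-order expansion of the TRPO objective and the quadratic expansion of its KL constraint are meaningful approximations only because the respective remainders are \(o(\lVert \theta - \theta_\text{old} \rVert)\) and \(o(\lVert \theta - \theta_\text{old} \rVert^2)\), which is the content your version supplies and the printed version does not.
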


\hfill

% monotone convergence theorem
\begin{theorem}\label{th:monotone_convergence}
	(Monotone Convergence Theorem)
	Let \(\bigl(x_n\bigr)^\infty_{n=0} \subset \mathbb{R}\) be a bounded and monotonically increasing sequence. Then, the sequence converges, i.e. \(\lim_{n \to \infty} x_n\) exists and is finite.
\end{theorem}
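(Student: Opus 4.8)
The plan is to invoke the completeness (least-upper-bound) property of \(\mathbb{R}\) and then verify the \(\varepsilon\)-\(N\) definition of convergence against the supremum. First I would observe that, since \(\bigl(x_n\bigr)^\infty_{n=0}\) is bounded, the set \(A \coloneqq \{ x_n : n \in \mathbb{N}_0 \}\) is a nonempty subset of \(\mathbb{R}\) that is bounded above. By the completeness axiom, \(A\) therefore admits a finite supremum, and I would set \(L \coloneqq \sup_{n \in \mathbb{N}_0} x_n\). The natural candidate for the limit is this \(L\), and the remainder of the argument is to show that \(x_n \to L\).

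Next I would fix an arbitrary \(\varepsilon > 0\) and use the characterization of the supremum as the \emph{least} upper bound. Since \(L - \varepsilon < L\), the number \(L - \varepsilon\) cannot be an upper bound of \(A\), so there exists an index \(N \in \mathbb{N}_0\) with \(x_N > L - \varepsilon\). Here the monotonicity enters decisively: for every \(n \geq N\), the increasing property gives \(x_n \geq x_N > L - \varepsilon\), while the supremum property gives \(x_n \leq L < L + \varepsilon\). Combining these two bounds yields \(\lvert x_n - L \rvert < \varepsilon\) for all \(n \geq N\), which is exactly the definition of \(\lim_{n \to \infty} x_n = L\).

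Since \(L\) is finite by construction, this simultaneously establishes that the limit exists and that it is finite, as claimed. I do not expect any genuine obstacle in this proof: its entire nontrivial content is the appeal to completeness of the real line, which is what guarantees the supremum exists in the first place. Everything after that is a routine translation between the least-upper-bound description of \(L\) and the \(\varepsilon\)-\(N\) description of the limit. The only point requiring slight care is the direction of the monotonicity hypothesis — one must use that the sequence is \emph{increasing} so that its values beyond index \(N\) remain pinned between \(L - \varepsilon\) and \(L\); for a decreasing sequence one would instead take \(L\) to be the infimum and argue symmetrically.
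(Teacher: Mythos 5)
Your proof is correct: it is the standard argument via the completeness (least-upper-bound) axiom, taking \(L = \sup_n x_n\) and verifying the \(\varepsilon\)-\(N\) definition of convergence using monotonicity. The paper itself states this result in its appendix of auxiliary facts without proof, so there is nothing to compare against; your argument is exactly the canonical one and fills that gap correctly.
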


\hfill

%Bolzano-Weierstrass
\begin{theorem}\label{th:bolzano}
	(Bolzano-Weierstrass Theorem)
	Let \(\bigl(x_n\bigr)^\infty_{n=0} \subset \mathbb{R}^d\), \(d \in \mathbb{N}\) be a bounded sequence. Then, there exists some convergent subsequence \(\bigl(x_{n_i}\bigr)^\infty_{i=0}\).
\end{theorem}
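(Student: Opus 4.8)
The plan is to reduce the $d$-dimensional claim to the scalar case $d=1$ and then invoke the Monotone Convergence Theorem (Theorem \ref{th:monotone_convergence}), which is already available. The scalar case rests on the observation that every real sequence contains a monotone subsequence; combined with boundedness, Theorem \ref{th:monotone_convergence} then delivers convergence immediately.

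First I would treat $d=1$. Given a bounded real sequence $\bigl(x_n\bigr)^\infty_{n=0}$, I would establish the existence of a monotone subsequence via a peak-point argument: call an index $m$ a \emph{peak} if $x_m \geq x_n$ for every $n > m$. If there are infinitely many peaks, listing them in increasing order of index yields a non-increasing subsequence. If there are only finitely many, then past the last peak every index is dominated by some strictly later index, and choosing such indices inductively produces a strictly increasing subsequence. Either way the subsequence is monotone and, being a subsequence of a bounded sequence, bounded; Theorem \ref{th:monotone_convergence} then gives its convergence.

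Next I would lift to general $d$ by iterated extraction, writing $x_n = \bigl(x_n^{(1)}, \dots, x_n^{(d)}\bigr)$. The first-coordinate sequence $\bigl(x_n^{(1)}\bigr)$ is bounded in $\mathbb{R}$, so the scalar case furnishes a subsequence along which the first coordinate converges. Restricting attention to this subsequence, the second coordinates are still bounded, so I extract a further subsequence along which the second coordinate converges, noting that the first coordinate continues to converge as a subsequence of a convergent sequence. Repeating this $d$ times yields one subsequence along which all coordinates converge simultaneously. Since convergence in the Euclidean norm is equivalent to coordinatewise convergence, by the norm equivalence $\lVert x \rVert_\infty \leq \lVert x \rVert_2 \leq \sqrt{d}\,\lVert x \rVert_\infty$, this final subsequence converges in $\mathbb{R}^d$, proving the theorem.

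The main obstacle is the peak-point lemma in the scalar step: proving that a monotone subsequence always exists requires the case distinction on whether infinitely many peaks occur, and this is the only genuinely non-routine part. The dimension-lifting is conceptually straightforward but demands care that each extraction preserves the convergence already secured in the earlier coordinates, so that the last-stage subsequence inherits convergence in every coordinate at once.
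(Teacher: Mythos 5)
Your proof is correct. Note that the paper itself offers no proof of this statement: Theorem \ref{th:bolzano} sits in the appendix of auxiliary results, which the paper explicitly lists as well-known facts cited without proof, so there is nothing to compare against on the paper's side. Your argument is the standard one — peak-point lemma to extract a monotone subsequence, Monotone Convergence Theorem for the scalar case, then iterated coordinatewise extraction for general $d$ — and all steps are sound. One pedantic remark: the paper's Theorem \ref{th:monotone_convergence} is stated only for monotonically \emph{increasing} sequences, so in the infinitely-many-peaks case, where your subsequence is non-increasing, you should apply it to the negated sequence $\bigl(-x_{n_i}\bigr)^\infty_{i=0}$, which is bounded and increasing; this is a one-line fix and does not affect the validity of the argument.
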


\hfill

% Berge's maximum theorem (generalized by \cite{ausubel1993generalized})
\begin{theorem}\label{th:berge}
	(Berge's Maximum Theorem)
	Let \(X\) and \(\Theta\) be topological spaces, \(f \colon X \times \Theta \rightarrow \mathbb{R}\) be continuous on \(X \times \Theta\) and \(C \colon \Theta \rightrightarrows X\) be a compact-valued correspondence with \(C(\theta) \neq \emptyset\) for all \(\theta \in \Theta\). Let 
	\begin{equation*}
		f^*(\theta) = \sup \bigl\{f(x,\theta) \mid x \in C(\theta) \bigr\}
	\end{equation*}
	and 
	\begin{equation*}
		C^*(\theta) = \argmax\bigl\{f(x,\theta) \mid x \in C(\theta) \bigr\} = \bigl\{x \in C(\theta) \mid f(x,\theta) = f^*(\theta) \bigr\}.
	\end{equation*}
	If \(C\) is continuous at \(\theta\), then \(f^*\) is continuous and \(C^*\) is upper hemicontinuous with nonempty and compact values.
\end{theorem}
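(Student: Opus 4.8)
The plan is to decompose the statement into three claims and establish them in order: first that $C^*(\theta)$ is nonempty and compact, then that $f^*$ is continuous, and finally that $C^*$ is upper hemicontinuous. Throughout I would use the continuity of $f$, the compactness of the values $C(\theta)$, and the two halves of the continuity of $C$ — upper hemicontinuity (uhc) and lower hemicontinuity (lhc) — deploying each where it is needed. For the first claim, fix $\theta$: since $C(\theta)$ is nonempty and compact and $x \mapsto f(x,\theta)$ is continuous (a restriction of the jointly continuous $f$), the extreme value theorem guarantees that the supremum $f^*(\theta)$ is attained, so $C^*(\theta) \neq \emptyset$ and $f^*(\theta)$ is finite. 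Moreover $C^*(\theta) = \{x \in C(\theta) : f(x,\theta) = f^*(\theta)\}$ is the preimage of a point under a continuous map, hence closed in $C(\theta)$, and a closed subset of a compact set is compact.

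For continuity of $f^*$ I would prove lower and upper semicontinuity separately. For lower semicontinuity at $\theta_0$, pick a maximizer $x_0 \in C^*(\theta_0)$; continuity of $f$ gives a basic neighborhood $A \times B$ of $(x_0,\theta_0)$ on which $f > f^*(\theta_0) - \varepsilon$, and lhc of $C$ (applied to the open set $A$, which meets $C(\theta_0)$ at $x_0$) yields a neighborhood $U \subseteq B$ of $\theta_0$ with $C(\theta) \cap A \neq \emptyset$ for $\theta \in U$; choosing any such point shows $f^*(\theta) > f^*(\theta_0) - \varepsilon$. For upper semicontinuity, cover the compact set $C(\theta_0)$ by basic neighborhoods $A_x \times B_x$ on which $f < f^*(\theta_0) + \varepsilon$, extract a finite subcover indexed by $x_1,\dots,x_n$, set $A = \bigcup_i A_{x_i}$ and $B = \bigcap_i B_{x_i}$, and use uhc of $C$ to find $U \subseteq B$ with $C(\theta) \subseteq A$ for $\theta \in U$; then every point of $C(\theta)$ lies in some $A_{x_i}$ with $\theta \in B_{x_i}$, giving $f^*(\theta) \leq f^*(\theta_0) + \varepsilon$. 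Together these yield continuity of $f^*$.

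The main obstacle is the upper hemicontinuity of $C^*$, where the interplay of all hypotheses is subtlest. My approach is a closed-graph argument. I would first note that uhc together with compact values makes $C$ have closed graph (at least under a Hausdorff hypothesis on $X$), and then show that the graph of $C^*$ is closed: if a net $(\theta_\alpha, x_\alpha)$ with $x_\alpha \in C^*(\theta_\alpha)$ converges to $(\theta, x)$, then $x \in C(\theta)$ by the closed graph of $C$, while $f(x_\alpha,\theta_\alpha) = f^*(\theta_\alpha) \to f^*(\theta)$ by the continuity of $f^*$ just established and $f(x_\alpha,\theta_\alpha) \to f(x,\theta)$ by continuity of $f$, forcing $f(x,\theta) = f^*(\theta)$ and hence $x \in C^*(\theta)$. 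I would then invoke the standard lemma that a correspondence with closed graph contained in an upper hemicontinuous, compact-valued correspondence (here $C^* \subseteq C$) is itself upper hemicontinuous.

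The delicacy is entirely topological: in general spaces one must argue with nets rather than sequences and lean on a Hausdorff (or at least closed-graph) property to pass between "closed graph" and "uhc", so the cleanest route is to prove continuity of $f^*$ first and feed it into the closed-graph step rather than attempting a direct neighborhood-chasing argument for $C^*$. In the paper's intended application the policy space is a compact metric space, so sequences suffice and these technicalities evaporate. Combined with the nonemptiness and compactness of the values from the first step, this verifies all three conclusions of the theorem.
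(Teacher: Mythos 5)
The paper does not actually prove this statement: Berge's Maximum Theorem is listed in the appendix of auxiliary results and invoked in the mirror-learning convergence proof with a citation, so there is no in-paper argument to compare yours against. Judged on its own, your proof is the standard and essentially correct one. Step 1 (extreme value theorem on the compact nonempty set \(C(\theta)\), then \(C^*(\theta)\) closed in \(C(\theta)\) as the preimage of a point, hence compact) is exactly right and needs no separation axiom, since singletons are closed in \(\mathbb{R}\). Step 2 is the textbook two-sided semicontinuity argument and correctly matches each half of the continuity of \(C\) to the corresponding half of the continuity of \(f^*\): lower hemicontinuity for lower semicontinuity via a single maximizer, upper hemicontinuity plus a finite subcover of the compact \(C(\theta_0)\) for upper semicontinuity. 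Your ordering — establish continuity of \(f^*\) first and feed it into the argument for \(C^*\) — is also the standard design.

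The one genuine (and to your credit, flagged) gap is in Step 3: the implication ``uhc with compact values \(\Rightarrow\) closed graph'' requires \(X\) Hausdorff, which the statement does not assume, so your closed-graph route proves a slightly weaker theorem than the one stated. The auxiliary lemma you invoke (a closed-graph subcorrespondence of a uhc compact-valued correspondence is uhc) holds without Hausdorffness; only the closed graph of \(C\) itself, and hence the conclusion \(x \in C(\theta)\) in your net argument, needs it. The Hausdorff hypothesis can be dispensed with entirely by a direct neighborhood argument: given open \(V \supseteq C^*(\theta_0)\), the set \(K = C(\theta_0) \setminus V\) is compact and \(m \coloneqq \max_{x \in K} f(x, \theta_0) < f^*(\theta_0)\); with \(\varepsilon = (f^*(\theta_0) - m)/2\), compactness of \(K\) and continuity of \(f\) give an open \(A \supseteq K\) and a neighborhood \(B_1\) of \(\theta_0\) with \(f < f^*(\theta_0) - \varepsilon\) on \(A \times B_1\); lower semicontinuity of \(f^*\) gives \(B_2\) with \(f^*(\theta) > f^*(\theta_0) - \varepsilon\) on \(B_2\); and uhc of \(C\) applied to \(V \cup A \supseteq C(\theta_0)\) gives \(B_3\) with \(C(\theta) \subseteq V \cup A\) on \(B_3\). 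For \(\theta \in B_1 \cap B_2 \cap B_3\), any \(x \in C^*(\theta)\) satisfies \(f(x,\theta) = f^*(\theta) > f^*(\theta_0) - \varepsilon\), so \(x \notin A\), forcing \(C^*(\theta) \subseteq V\). Since the paper applies the theorem to a metric space of policies, your version suffices for its use there, but the direct argument is what proves the statement as written.
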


\hfill

\begin{definition}\label{def:gateaux}
	(Gâteaux Derivative)
	Let \(X\) and \(Y\) be locally convex topological spaces, let \(U\) be an open subset of \(X\) and \(F \colon U \rightarrow Y\). The Gâteaux derivative of \(F\) at \(x \in U\) in the direction \(d \in X\) is defined as 
	\begin{equation*}
		dF(x, d) = \lim_{h \to 0} \frac{F(x + rd) - F(x)}{r}.
	\end{equation*}
\end{definition}

\end{appendices}

\end{document}